\documentclass{article}






\usepackage[utf8]{inputenc} 
\usepackage[T1]{fontenc}    
\usepackage{hyperref}       
\usepackage{url,wrapfig}            
\usepackage{booktabs}       
\usepackage{amsfonts}       
\usepackage{nicefrac}       
\usepackage{microtype}      
\usepackage{amsmath,amssymb,amsthm, mathtools}
\usepackage{subfigure}
\usepackage{geometry}
\usepackage[final]{graphicx}
\usepackage{xcolor}
\usepackage{natbib}
\usepackage{mathrsfs}
\usepackage{algorithm, algorithmic}
\geometry{margin=1in,footskip=30pt}

\renewcommand{\cal}{\mathcal}

\newcommand{\cX}{{\cal X}}

\newcommand{\cE}{{\cal E}}

\newcommand{\cN}{{\cal N}}

\newcommand{\cP}{{\cal P}}

\newcommand{\cR}{{\mathcal R}}

\newcommand{\cF}{{\mathcal{ F}}}

\newcommand{\bE}{\mathbb{E}}

\newcommand{\bN}{\mathbb{N}}
\newcommand{\bP}{\mathbb{P}}

\newcommand{\bR}{{\mathbb R}}
\newcommand{\bS}{\mathbb S}





%

\newtheorem{theorem}{Theorem}
\newtheorem{proposition}{Proposition}
\newtheorem{lemma}{Lemma}
\newtheorem{corollary}{Corollary}
\newtheorem{remark}{Remark}
\newtheorem{assumption}{Assumption}

\newcommand{\R}{\mathbb R}
\newcommand{\E}{\mathbb E}
\newcommand{\Prob}{\mathbb P}
\newcommand{\sgn}{{\rm sign}}

\DeclareMathOperator{\argmax}{argmax}
\DeclareMathOperator{\argmin}{argmin}

\def\bO{\mathbb{O}}

\title{Adversarial Training Helps Transfer Learning via Better Representations }

%

\author{Zhun Deng$^{1}$\footnote{Equal contribution.}
	\quad Linjun Zhang$^{2*}$
	\quad Kailas Vodrahalli$^{3}$
	\quad Kenji Kawaguchi $^{1}$
	\quad James Zou$^{4}$}

\begin{document}
	\date{}
	
	\maketitle
	\footnotetext[1]{Harvard University, \emph{zhundeng@g.harvard.edu,~kkawaguchi@fas.harvard.edu}}
	\footnotetext[2]{Rutgers University, \emph{linjun.zhang@rutgers.edu}}
	\footnotetext[3]{Stanford University, \emph{kailasv@stanford.edu}}
	\footnotetext[4]{Stanford University, \emph{jamesz@stanford.edu}}

\begin{abstract}
Transfer learning aims to leverage models pre-trained on source data to efficiently adapt to target setting, where only limited data are available for model fine-tuning. Recent works empirically demonstrate that adversarial training in the source data can improve the ability of models to transfer to new domains. However, why this happens is not known. In this paper, we provide a theoretical model to rigorously analyze how adversarial training helps transfer learning. We show that adversarial training in the source data generates provably better representations, so fine-tuning on top of this representation leads to a more accurate predictor of the target data.  
 We further demonstrate both theoretically and empirically that semi-supervised learning in the source data can also improve transfer learning by similarly improving the representation. Moreover, performing adversarial training on top of semi-supervised learning can further improve transferability, suggesting that the two approaches have complementary benefits on representations.  We support our theories with experiments on popular data sets and deep learning architectures. 
\end{abstract}

\section{Introduction}

Transfer learning is a popular methodology to obtain well-performing machine learning models in settings where high-quality labeled data is scarce \cite{donahue2014decaf,sharif2014cnn}.
The general idea of transfer learning to take a pre-trained model from a source domain---where labeled data is abundant---and adapt it to a new target domain. Because the target data distribution often differs from the source setting, standard transfer learning fine-tunes the model using a small-amount of labeled data from the target domain. In many applications, the fine-tuning is performed only on the last few layers of the network if the amount of target data is limited or if the one only has access to a representation (\emph{i.e.} intermediate layers) produced by the model instead of the full model.  

Transfer learning has demonstrated substantial empirical success and there is an exciting literature investigating different approaches to making transfer learning more effective \cite{huh2016makes,kolesnikov2019big}. Recent experiments empirically demonstrated an intriguing phenomenon that models that are trained using adversarial-robust optimization on the source data transfer better to target data compared to non-adversarially trained models.   
We illustrate this phenomenon in Figure \ref{fig:intro}, which replicates the findings in \cite{salman2020adversarially}. Here two models are trained on the full ImageNet 
and 10\% of ImageNet using different levels of adversarial training---$\epsilon$ is the $l_2$ magnitude of the adversarial attack. Following \cite{salman2020adversarially}, we fine-tuned the last layer of the models using data from CIFAR-10 and plot the final accuracy on the target CIFAR-10. Adversarial training ($\varepsilon>0$) significantly improves the transfer performance compared to model without adversarial training ($\varepsilon=0$). Additional experiments demonstrating this effect are provided in \cite{salman2020adversarially,utreraadversarially}, however it is still an open question how adversarial training in source helps transfer learning.    


\begin{wrapfigure}{r}{0.4\textwidth}
\centering
	\includegraphics[width=0.4\textwidth]{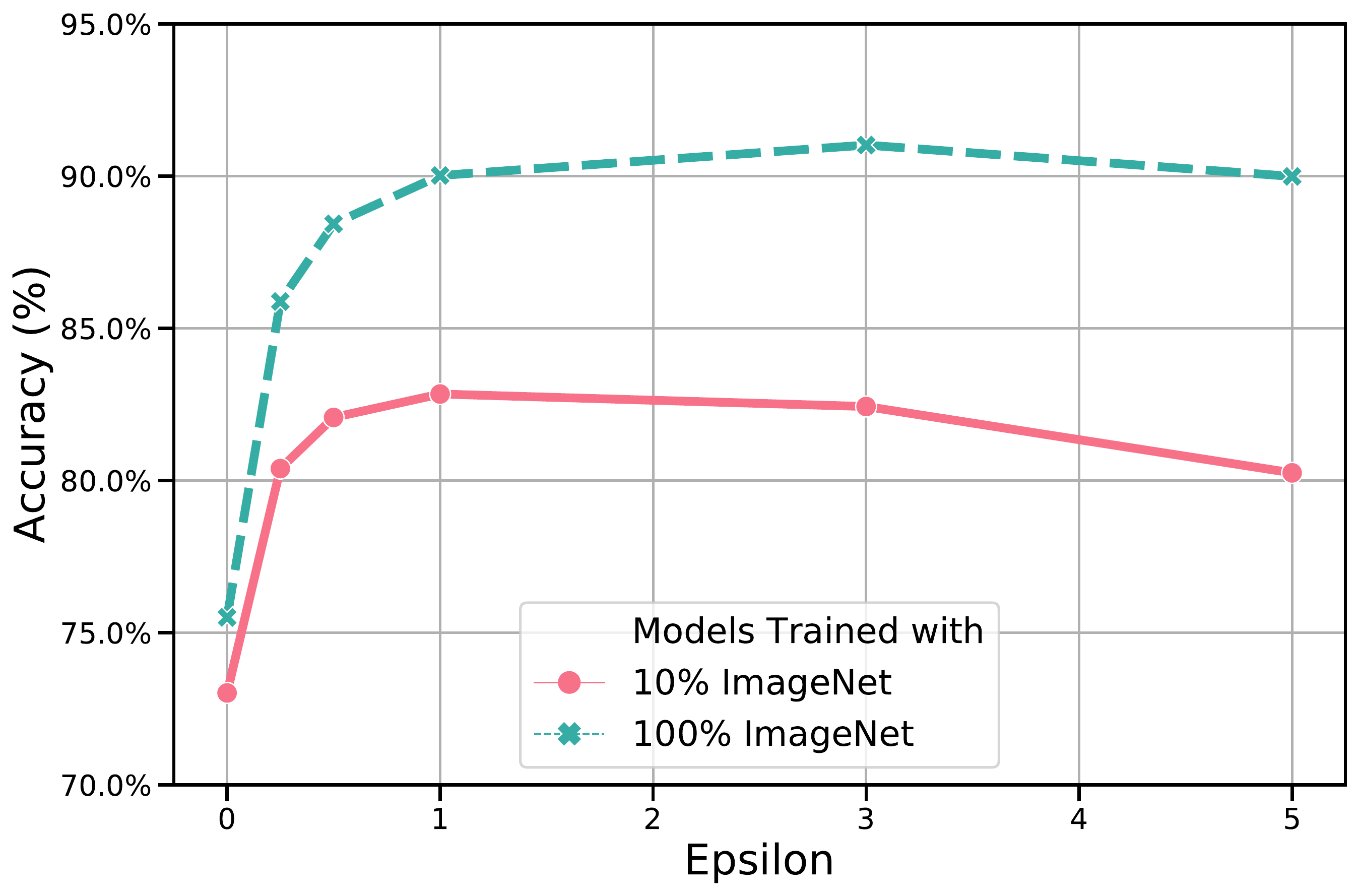}
	\caption{Transfer accuracy improves with adversarial training on source task. We plot target task (CIFAR-10) accuracy across different levels of $\ell_{2}$-adversarial training on the source task (ImageNet). The value of $\varepsilon$ corresponds to the size of the adversarial attack; \emph{i.e.}, $\varepsilon = 0$ indicates no adversarial training. The two curves correspond to training the source model using all of ImageNet and a 10\% subsample of ImageNet.}
	\label{fig:intro}
	\vspace{-1em}
\end{wrapfigure}

As our \textbf{first contribution}, we initialize the study of how adversarial training helps fixed-feature transfer learning from a theoretical perspective.
Our analysis shows how that adversarial training on the source learns a better representation such that fine-tuning on this representation leads to better performance on the target. Interestingly, we show that the robust representation can help transfer learning even when the source performance declines due to adversarial training. To the best of our knowledge, this is the first rigorous analysis of the effect of adversarial training on transfer learning.  

As our \textbf{second contribution}, we extend our analysis to show that semi-supervised learning using pseudo-labeling can similarly lead to better representations for transfer learning. We support our theory with empirical experiments. Moreover our experiments demonstrate for the first time that performing adversarial training on top of pseudo-labeling in the source can further boost transfer learning performance. This suggests that the two data augmentation techniques of adversarial training and pseudo-labeling have complementary benefits on learned representations. 

As a \textbf{third technical contribution}, we generalize the techniques in prior papers for analyzing transfer learning in regressions to classification settings, where adversarial training and pseudo-labeling are more commonly used. Together, 
our results provide a useful and tractable framework to understand factors that improve transfer learning.

\paragraph{Related Work} 
Adversarial robust optimization has been a major focus in machine learning security \citep{biggio2018wild, dalvi2004adversarial, lowd2005adversarial,goodfellow2014explaining,carlini2017towards,nguyen2015deep}. A serious of works has been proposed to increase the adversarial robustness both empirically \cite{madry2017towards,miyato2018virtual,balaji2019instance} and theoretically \cite{cohen2019certified,lecuyer2019certified,raghunathan2018certified,liu2020enhancing,chiang2020certified,kurakin2016adversarial,hendrycks2019using,deng2020towards,zhang2020does}.  Meanwhile, other works demonstrate how to quantify the trade-off of adversarial robustness and standard accuracy \citep{zhang2019theoretically, schmidt2018adversarially,carmon2019unlabeled,stanforth2019labels,deng2020interpreting}. Recently \cite{utrera2020adversarially, salman2020adversarially} empirically studied the transfer performance of adversarially robust networks, but it is still not clear yet why adversarial training leads to a better transfer from a theoretical perspective.

Transfer learning has been used in a variety of applications, ranging from medical imaging \cite{raghu2019transfusion}, natural language processing \cite{houlsby2019parameter,conneau2018senteval}, to object detection \cite{lim2012transfer,shin2016deep}. 
On the theoretical side,  the early work of \cite{baxter2000model,ben2003exploiting,maurer2016benefit} studied the test accuracy of the target task in the multi-task learning setting. Recent work \cite{tripuraneni2020theory, tripuraneni2020provable, du2020few} focused more on the representation learning and provide a theoretical framework to study linear representatioin in the \textit{regression} setting. In this work, we provide a counterpart to theirs and studies the \textit{classification} setting. 
Prior works in semi-supervised leaerning largely focus on improving the prediction accuracy with unlabeled data \citep{zhu2003semi,zhu2009introduction,berthelot2019mixmatch}. Works have also shown that semi-supervised learning can improve adversarial robustness \citep{carmon2019unlabeled,deng2020improving}. Several works have identified that using unlabeled data can empirically improve transfer learning \citep{zhou2018semi,zhong2020multispecies,mokrii2021systematic}, but a rigorous theoretical understanding of why this happens is lacking.

\section{Preliminaries and model setup}\label{sec:2}
\paragraph{Notation.}
We use $[m]$ for $\{1,2,\cdots,m\}$ for any $m\in\bN^+$ and for any set $S$, let $|S|$ to denote the cardinality of $S$. For a matrix $M$, we denote $\sigma_k(M)$ as the $k$-th singular value of matrix $M$. We use $\bO_{m\times l}$ to denote the space of matrices of dimension $m\times l$ whose columns are orthonormal and use $\bS^{p-1}$ to denote the unit sphere of dimension $p$. For two real matrices $E, F\in \bO_{m\times l}$, we denote the subspaces spanned by the column vectors of $E$ and $F$ by $\cE$ and $\cF$ correspondingly. The subspace distance between $\cE$ and $\cF$ is defined as $\|\sin\Theta(E,F)\|_F$ \citep{yu2015useful}, where $\Theta(E,F)=\text{diag}(\cos^{-1}\sigma_1(E^\top F),\cdots, \cos^{-1}\sigma_l(E^\top F))$. For a vector $v$, we use $\|v\|_q$ to denote the $\ell_q$ norm. Let $\lesssim$ and $\gtrsim$ denote ``less than'' and ``greater than'' up to a universal constant respectively. $a \ll b$ to denote $b\ge C\cdot a$ for a sufficiently large universal constant $C$. Our use of $O(\cdot),\Omega(\cdot), o(\cdot)$ follows the standard literature of computer science. With some abuse of notation, we also write $a=\Theta(b)$ if $a=O(b)$ and $a=\Omega(b)$ for $a,b\in \bR$. 
\paragraph{Data generating processes.} We assume there are $T$ source tasks. For each task $t\in [T]$, we have corresponding training data set of size $n_t$, \emph{i.e.} $S_t=\{(x^{(t)}_1,y^{(t)}_1),\cdots,(x^{(t)}_{n_t},y^{(t)}_{n_t})\}$, where $x^{(t)}_i\in\cX\subseteq\bR^p$ and $y^{(t)}_i\in\{-1,1\}$ are i.i.d. drawn from a joint distribution $\cP^{(t)}_{x,y}$. We further denote $n=\min_{t\in[T]}n_t$. In other words, $n$ is the smallest size of source data sets.
The goal of transfer learning is to learn from multiple source tasks in the hope of learning a common representation such that for a target task with distribution $\cP^{(T+1)}_{x,y}$, we only need few data points to learn extra structures beyond the common representation and the learned model still achieves good prediction performance. With this spirit, we assume that for $t\in[T+1]$, $\{(x^{(t)}_i,y^{(t)}_i)\}_i^{n_t}$ are i.i.d. drawn from $\cP^{(t)}_{x,y}$, such that
\begin{equation}\label{eq:model}
x^{(t)}_i=\eta^{(t)}_i+y^{(t)}_i\mu_t,
\end{equation}
for i.i.d. noise $\eta^{(t)}_i$ that is independent of $y^{(t)}_i$, where $\mu_t=Ba_t\in\R^p$, $a_t\in\R^r$ and $B\in\R^{p\times r}$ is an orthonormal matrix representing the projection onto a subspace, \emph{i.e.} $B^\top B=I_r$. Here, $B$ is the common structure shared among all the source tasks and the target task, and $a_t$'s are task-specific parameters. Although this model is simple, the analysis is already highly nontrivial, and it captures the essense of the problem in transfer learning. In fact, similar models haves been considered in \cite{tripuraneni2020provable, du2020few}. Specifically, we consider the case where $r\ll p$. It can be viewed in a way that the data is generated by mapping low dimensional data signal to the high-dimension, which coincides with the fact that commonly used real image data sets lie in the lower dimensional manifolds. 
In addition, we assume the noise term $\eta^{(t)}_i$ is of zero-mean and is $\rho_t^2$-sub-gaussian, \emph{i.e.} $\bE[\exp(\lambda v^\top \eta^{(t)}_i)]\le \exp(\lambda^2\rho_t^2/2)$ for all $v\in\bS^{p-1}$ and $\lambda \in\bR$. Throughout this paper, we consider $\rho_t=\Theta(1)$ for all $t\in[T]$.
\begin{remark}
(i). The sub-gaussian assumption is quite flexible since many commonly used data sets such as image sets are all bounded, which implies sub-gaussianity. (ii). Different from the regression settings considered in previous theoretical work on transfer learning\cite{du2020few,tripuraneni2020provable}, we focus on \textbf{classification settings}, in which adversarial training is more commonly studied. 
\end{remark}

\paragraph{Loss functions.} The loss functions considered in this paper take the following form: for each task $t\in [T+1]$,
\begin{equation}\label{eq:loss}
\ell(x,y,w^{(t)}_2, W_1)=-yf^{(t)}(x),
\end{equation}
where $f^{(t)}(x)$ is a two-layer linear neural network parametrized by $W_1$ and $w^{(t)}_2$, \emph{i.e.}
$f^{(t)}(x)=w^{(t)\top}_2 W_1^\top x,$ with $W_1\in \bO_{p\times r}$, $w^{(t)}_2\in \bR^{r\times 1}$ and $\|w^{(t)}_2\|\le 1$. Here, we mainly consider the case $W_1$ is well-specified, \emph{i.e.} with the same dimension of $B$. Our argument can be further extended to the case where $r$ is unknown by first estimating $r$ and details are left to the appendix. We put norm constraint on $w^{(t)}_2$ since otherwise the minimizer is always of norm infinity. The loss function in \eqref{eq:loss} along with its variants have been commonly used in the theoretical machine learning community \cite{schmidt2018adversarially,deng2020improving}. Although in its simple form, it has been consistently useful to shed light upon complex phenomena. Meanwhile, even under this natural setting, it is highly non-trivial to demonstrate the effect of adversarial training in transfer learning.

Roughly speaking, like most settings in transfer learning, $W_1$ is assumed to be the common weights shared among the models for all source tasks so as to learn a ``good" common representation. For each individual task $t$, parameter $w^{(t)}_2$ aims to perform task-specific linear classification. We leave the detailed discussions about how to take advantage of combining all the source tasks and obtaining a ``good" $W_1$ to Section~\ref{sec:mainres}. Further, we denote the empirical loss for task $t$ as
$\hat{L}(S_t,w^{(t)}_2,W_1)=\sum_{i=1}^{n_t} -y^{(t)}_i\langle W_1w^{(t)}_2, x^{(t)}_i \rangle/n_t.$
The expected loss for task $t$ is
$L(\cP_{x.y}^{(t)},w^{(t)}_2,W_1)=-\bE_{(x,y)\sim \cP_{x,y}^{(t)}}[ y\langle W_1w^{(t)}_2, x \rangle].$

\paragraph{Problem Setup}
In \emph{fixed-representation} transfer learning, the first step is to learn the common representation in the model architectures using data from source tasks. The representation (e.g. the penultimate layer of a neural network) is then fixed. Finally, the target data is used to train or fine-tune a small model on top of the representation. 
Following this popular practice, in our model setting, we use the data of $T$ source tasks $\{S_t\}_{t=1}^T$ to obtain an estimator $\hat{W}_1$. Then, we use the data of target task $S_{T+1}$ to obtain an estimator $\hat{w}^{(T+1)}_2$ of the task-specific parameter. Our evaluation criteria is the \emph{excess risk}:
\begin{equation}
\cR(\hat{W}_1,\hat{w}^{(T+1)}_2)=L(\cP_{x,y}^{(T+1)},\hat{w}^{(T+1)}_2,\hat{W}_1)-\min_{\|w_2\|\le 1,W_1\in \bO_{p\times r}}L(\cP_{x,y}^{(T+1)},w_2,W_1).
\end{equation}

\section{Adversarial Training Help Representation Learning}\label{sec:mainres}
In this section, we demonstrate our results about how adversarial training can learn a better representation, and therefore leads to smaller excess risks.  We first describe our algorithm, and 
demonstrate the near-optimality of our algorithm in representation learning by a minimax lower bound. 
We then demonstrate for the settings where data has varying noise-signal ratios or sparsity structures, how $\ell_2$ or $\ell_\infty$-adversarial training can help improve the representation learning. 

\subsection{Representation learning algorithm}

Recall that the loss function for each task is $\ell(x,y,w^{(t)}_2, W_1)=-yw^{(t)\top}_2 W_1^\top x$, where $W_1\in \bO_{p\times r}$ is a common structure in model architectures shared among all the source tasks and the target set. In the spirit of transfer learning, the goal is to jointly learn $W_1$ from source tasks and then use the data from the target task to learn its task-specific parameter $w^{(T+1)}_2$. Here, $W_1$ essentially aims to recover the common structure $B$ in the data generating processes Eq.\eqref{eq:model} (or more rigorously, recover the column space of $B$), such that the obtained estimator $\hat{W}_1$ satisfies $\|\sin\Theta(\hat{W}_1,B)\|_F \to 0$. 

Note that in our two-layer linear neural network structure, optimizing $w_2$ and $W_1$ simultaneously for a single task has the issue of non-identifiability -- the loss value will not change if we multiply an orthonormal matrix $\Lambda\in\R^{r\times r}$ to $W_1$ and $\Lambda^{-1}$ to $w_2$. 
However, we still can jointly learn a good estimator $\hat{W}_1$ to recover $B$ following a similar method in \cite{tripuraneni2020provable} via singular value decomposition (SVD). In particular, we first simultaneously optimize $w^{(t)}_2$ and $W_1$ for each individual task for $t\in[T]$, which is equivalent to optimizing a single parameter $\beta_t=W_1^\top w^{(t)}_2$ (since $W_1$ is an orthonormal matrix, the norm of $\beta_t$ is still upper bounded by $1$). Then, we apply SVD to the matrix consisting of the optimizers $\hat{\beta}_t$'s to obtain $\hat{W} _1$. In the final step, we use $S_{t+1}$ to learn $w_2^{(t+1)}$. 
\begin{algorithm}[tbh]
   \caption{Learning for Linear Representations}
   \label{alg:natural}
\textbf{Input:} $\{S_t\}_{t=1}^{T+1}$
\vspace{0.1cm}

\text{Step 1:} Optimize the loss function on each individual source task $t\in[T]$ and obtain
$$\hat{\beta}_t =\argmin_{\|\beta_t\|\le 1} \frac{1}{n_t}\sum_{i=1}^{n_t}-y^{(t)}_i\langle \beta_t, x^{(t)}_i \rangle. $$

\text{Step 2:} $\hat{W}_1 \Sigma V^\top\leftarrow \text{top-$r$}~\text{SVD of}~[\hat{\beta}_1,\hat{\beta}_2,\cdots,\hat{\beta}_T],$ where $\Sigma$ is  a $r\times r$ diagonal matrix consists of singular values, and $V$ is a $T\times r$ matrix consists of orthonomal columns.

\text{Step 3:} $\hat{w}^{(T+1)}_2\leftarrow \argmin_{\|w^{(T+1)}_2\|\le 1} \frac{1}{n_{T+1}}\sum_{i=1}^{n_{T+1}}-y^{(T+1)}_i\langle w^{(T+1)}_2\hat{W}_1, x^{(T+1)}_i \rangle. $

\textbf{Return} $\hat{W}_1$, $\hat{w}^{(T+1)}_2.$
\end{algorithm}

Next, we provide a lemma about the representation learning in the two-layer linear neural network model under the assumption below. 
Combining this lemma with a minimax lower bound, we will show that adversarial training cannot have any gain in representation or transfer learning without extra special data structures, which motivates our subsequent theories. To facilitate the presentation, let us  define $M=[a_1/\|a_1\|,a_2/\|a_2\|,\cdots,a_T/\|a_T\|]$.
\begin{assumption}[Task normalization and diversity]\label{ass:1}
For all the tasks, $\|a_{t}\|=\Theta(1)$ for all $t\in[T+1]$ and $\sigma_r(M^\top M/T)=\Omega(1/r)$. 
\end{assumption}
\begin{remark}
Throughout the paper, we consider the low-rank case, where $r$ is smaller than $T$ and $p$. Meanwhile, notice that $\|M\|_F^2=T=\sum_{i=1}^r\sigma^2_i(M)$, this assumption implies the condition number $\sigma_1(M)/\sigma_r(M)=O(1)$, which roughly means $\{a_i/\|a_i\|\}_{i=1}^T$ cover all the directions of $\bR^r$ evenly.
\end{remark}
Loosely speaking, if we denote $\hat{\mu}_{T+1}=\sum_{i=1}^{n_{T+1}}x^{(T+1)}_iy^{(T+1)}_i/n_{T+1}$, under some regularity conditions, with high probability
\begin{align}
\notag\cR(\hat{W}_1,\hat{w}^{(T+1)}_2)&=L(\cP_{x,y}^{(T+1)},\hat{w}^{(T+1)}_2,\hat{W}_1)-\min_{\|w_2\|\le 1,W_1\in \bO_{p\times r}}L(\cP_{x,y}^{(T+1)},w_2,W_1)\\
&\lesssim \underbrace{\|\sin\Theta(\hat{W}_1,B)\|_F}_{\text{representation error}}+\underbrace{\|B^\top\hat{\mu}_{T+1}-B^\top \mu_{T+1}\|}_{\text{task-specific error}}.\label{eq:excess}
\end{align}
 The task-specific error is easy to deal with given Eq.~\eqref{eq:excess}, we mainly focus on providing a lemma to characterize the representation error.
\begin{lemma}\label{lm:representation}
Under Assumption \ref{ass:1}, if  $n>c_1\max\{pr^2/T,r^2\log(1/\delta)/T,r^2\}$ for some universal constant $c_1>0$ and $2r\le\min\{p,T\}$, for all $t\in [T]$. For $\hat{W}_1$ obtained in Algorithm \ref{alg:natural}, with probability at least $1-O(n^{-100})$,
\begin{equation*}
\|\sin\Theta(\hat{W}_1,B)\|_F\lesssim 
r\left(\sqrt{\frac{1}{n}}+\sqrt{\frac{p}{nT}}+\sqrt{\frac{\log n}{nT}}\right).
\end{equation*}
\end{lemma}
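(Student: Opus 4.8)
The plan is to solve the Step-1 program in closed form, identify the noiseless low-rank matrix whose leading left singular subspace is exactly $\cE=\mathrm{col}(B)$, and then run a singular-subspace perturbation argument in which I only track the component of the estimated matrix orthogonal to $B$. First I would note that, writing $\hat\mu_t=\frac{1}{n_t}\sum_i y^{(t)}_i x^{(t)}_i$, the Step-1 objective is $-\langle\beta_t,\hat\mu_t\rangle$, so $\hat\beta_t=\hat\mu_t/\|\hat\mu_t\|$. Using $(y^{(t)}_i)^2=1$ and the model Eq.\eqref{eq:model}, $\hat\mu_t=\mu_t+e_t$ with $e_t=\frac{1}{n_t}\sum_i y^{(t)}_i\eta^{(t)}_i$ a mean-zero sub-gaussian average; since $\|\mu_t\|=\|a_t\|=\Theta(1)$ and $\mu_t/\|\mu_t\|=B(a_t/\|a_t\|)$, the noiseless counterpart of $\hat A:=[\hat\beta_1,\dots,\hat\beta_T]$ is $A:=BM$. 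Because $B^\top B=I_r$ and $M\in\bR^{r\times T}$ has rank $r$, the matrix $A$ has rank $r$, its top-$r$ left singular subspace equals $\cE$, and $\sigma_r(A)=\sigma_r(M)=\Omega(\sqrt{T/r})$ by Assumption \ref{ass:1} (so $\sigma_{r+1}(A)=0$ and the gap is $\gtrsim\sqrt{T/r}$).

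The crux is a sharp bound on the perturbation $E:=\hat A-A$ in the directions that matter. Let $P=BB^\top$ and $P^\perp=I_p-P$. Since $P^\perp\mu_t=0$, the orthogonal part of each column is the exact identity $P^\perp\hat\beta_t=P^\perp e_t/\|\hat\mu_t\|$, which sidesteps any linearization of the normalization. The denominator is $\Theta(1)$ uniformly with high probability: $\|\hat\mu_t\|\ge\|\mu_t\|-\|Pe_t\|$, and $Pe_t=BB^\top e_t$ only involves the $r$-dimensional signal-space noise, so $\|Pe_t\|\lesssim\sqrt{r/n}=o(1)$ once $n\gtrsim r$. Hence $\|P^\perp E\|_{op}$ equals, up to constants, the operator norm of the $p\times T$ random matrix $[P^\perp e_1,\dots,P^\perp e_T]$ with independent columns; an $\varepsilon$-net over $\bS^{p-1}\times\bS^{T-1}$ (or matrix Bernstein) together with $\rho_t=\Theta(1)$ yields $\|P^\perp E\|_{op}\lesssim\frac{1}{\sqrt n}(\sqrt p+\sqrt T+\sqrt{\log(1/\delta)})$ with probability $1-\delta$, and taking $\delta=n^{-100}$ produces the $\sqrt{\log n}$ term.

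To conclude I would write the SVD $\hat A\hat V=\hat W_1\hat\Sigma$. Since $P^\perp A=0$, we get $P^\perp\hat W_1\hat\Sigma=P^\perp E\hat V$, whence $\sigma_r(\hat A)\,\|P^\perp\hat W_1\|_{op}\le\|P^\perp E\|_{op}$. Using $\|\sin\Theta(\hat W_1,B)\|_F\le\sqrt r\,\|P^\perp\hat W_1\|_{op}$ (at most $r$ nonzero principal angles) and $\sigma_r(\hat A)\ge\sigma_r(A)/2\gtrsim\sqrt{T/r}$ by Weyl, I obtain $\|\sin\Theta(\hat W_1,B)\|_F\lesssim\frac{\sqrt r\,\|P^\perp E\|_{op}}{\sigma_r(A)}\lesssim\frac{r}{\sqrt{nT}}(\sqrt p+\sqrt T+\sqrt{\log n})=r\big(\sqrt{1/n}+\sqrt{p/(nT)}+\sqrt{\log n/(nT)}\big)$, the claimed bound; the $\sqrt r$ from the Frobenius bound times $1/\sigma_r(A)\lesssim\sqrt{r/T}$ is what produces the $r$ prefactor. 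The Weyl step needs $\|E\|_{op}\le\sigma_r(A)/2$; bounding the in-plane part by $\|PE\|_F\lesssim\sqrt{Tr/n}$ and the orthogonal part as above, the three stated conditions on $n$ are precisely what force $\|E\|_{op}\le\sigma_r(A)/2$ and make each term of the final bound $o(1)$, so the perturbation regime is valid.

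The main obstacle is the operator-norm control in the second paragraph: replacing the crude Frobenius bound $\|P^\perp E\|_F\lesssim\sqrt{Tp/n}$ (which would lose a $\sqrt{T/r}$ factor) by the sharp $\frac{1}{\sqrt n}(\sqrt p+\sqrt T)$ operator-norm estimate is what collapses the $\sqrt T$ contribution into the $r/\sqrt n$ term, and this requires a genuine random-matrix concentration rather than a union bound over columns. The secondary delicate point is showing the normalization denominators $\|\hat\mu_t\|$ stay $\Theta(1)$ simultaneously over all $t$ using only the $r$-dimensional signal-space concentration, so that the weak condition $n\gtrsim r$ (and not $n\gtrsim p$) suffices.
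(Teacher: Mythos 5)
Your skeleton largely matches the paper's: you solve Step 1 in closed form ($\hat\beta_t=\hat\mu_t/\|\hat\mu_t\|$), identify the rank-$r$ matrix $A=BM$ with $\sigma_r(A)=\sigma_r(M)\gtrsim\sqrt{T/r}$, prove the operator-norm concentration $(\sqrt p+\sqrt T+\sqrt{\log(1/\delta)})/\sqrt n$ for the noise matrix, and divide; the paper packages the last step as a Davis--Kahan variant where you use the bare-hands inequality $\sigma_r(\hat A)\|P^\perp \hat W_1\|_{op}\le\|P^\perp E\|_{op}$. In one respect you are more careful than the paper: your exact identity $P^\perp\hat\beta_t=P^\perp e_t/\|\hat\mu_t\|$ honestly confronts the empirical normalization, which the paper dismisses by asserting that right multiplication by a diagonal matrix does not change the top-$r$ left singular subspace (a claim that is not true in general).

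However, your Weyl step has a genuine gap. The in-plane perturbation of each column is not only $Pe_t/\|\hat\mu_t\|$; it also contains the normalization term $\mu_t\bigl(1/\|\hat\mu_t\|-1/\|\mu_t\|\bigr)$, whose size is driven by $\bigl|\|\hat\mu_t\|-\|\mu_t\|\bigr|\approx\|e_t\|^2/(2\|\mu_t\|)$, i.e.\ by the \emph{full} $p$-dimensional noise, of order $\min\{1,p/n\}$ per column --- not by the $r$-dimensional projected noise. So $\|PE\|_F$ can be as large as $\sqrt{T}\min\{1,p/n\}$, and your claim $\|PE\|_F\lesssim\sqrt{Tr/n}$ requires $p\lesssim\sqrt{rn}$ (even the sharper operator-norm route requires $n\gtrsim p$). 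The lemma's hypotheses do not give this: they only require $n\gtrsim\max\{pr^2/T,\,r^2\log n/T,\,r^2\}$, which permits $p\gg n$ whenever $T\gg r^2$ --- precisely the high-dimensional regime where the $\sqrt{p/(nT)}$ term in the conclusion is the interesting one. In that regime the conclusion of your Weyl step is actually false: with isotropic noise and $p\gg n$ every $\|\hat\mu_t\|\approx\sqrt{p/n}\gg 1$, the signal components of $\hat A$ are deflated by $\sqrt{n/p}$, and one checks $\sigma_r(\hat A)\lesssim\sqrt{nT/(pr)}+1+\sqrt{T/p}\ll\sqrt{T/r}$. The final bound nevertheless survives there, because $\|P^\perp E\|_{op}=\|[P^\perp e_t]_t\,\mathrm{diag}(\|\hat\mu_t\|^{-1})\|_{op}$ is deflated by the same factor as $\sigma_r(\hat A)$; but your argument decouples numerator and denominator ($\|P^\perp E\|_{op}$ bounded using $\min_t\|\hat\mu_t\|\gtrsim 1$, $\sigma_r(\hat A)$ bounded by Weyl against the unscaled $A$) and thereby loses exactly this cancellation. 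A repair within your framework is to compare $\hat A$ with $\tilde A=[\mu_t/\|\hat\mu_t\|]_t$, whose columns still span $\mathrm{col}(B)$ so that $P^\perp\tilde A=0$ remains exact, and to lower bound $\sigma_r(\hat A)\ge\sigma_r(P\hat A)\ge\sigma_r(M)\min_t\bigl(\|a_t\|/\|\hat\mu_t\|\bigr)$, keeping the random scalings on both sides of the ratio; this closes the gap when the noise inflation is comparable across tasks. A secondary, smaller issue: your uniform lower bound on the denominators is a statement simultaneous over all $t\in[T]$, so it needs $n\gtrsim r+\log T$ (union bound), not merely $n\gtrsim r$.
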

Application of Lemma \ref{lm:representation} gives us the following corollary about the excess risk $\cR(\hat{W}_1,\hat{w}^{(T+1)}_2)$. 

\begin{corollary}\label{col:excessrisk}
Under Assumption \ref{ass:1}, if $n>c_1\max\{pr^2/T,r^2\log(1/\delta)/T,r^2, r n_{T+1}\}$ for some universal constant $c_1>0$, $2r\le\min\{p,T\}$, 
 then for $\hat{W}_1$ obtained in Algorithm \ref{alg:natural}, with probability at least $1-O(n^{-100})$,
\begin{equation*}
\cR(\hat{W}_1,\hat{w}^{(T+1)}_2)\lesssim \sqrt{\frac{r+\log n}{n_{T+1}}}+\sqrt{\frac{r^2p}{nT}}. 
\end{equation*} 
\end{corollary}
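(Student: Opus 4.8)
The plan is to bound the excess risk through the two-term decomposition in Eq.~\eqref{eq:excess}, controlling the representation error via Lemma~\ref{lm:representation} and the task-specific error via a direct sub-gaussian concentration argument, and then to check that the extra hypothesis $n>c_1 r n_{T+1}$ collapses Lemma~\ref{lm:representation} to the single term $\sqrt{r^2p/(nT)}$.

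First I would establish the decomposition itself. Because $\bE[\eta^{(T+1)}]=0$, $y^2=1$, and $\eta$ is independent of $y$, the population loss simplifies to $L(\cP^{(T+1)}_{x,y},w_2,W_1)=-w_2^\top W_1^\top\mu_{T+1}$, whose minimizer over $\|w_2\|\le 1$, $W_1\in\bO_{p\times r}$ attains value $-\|\mu_{T+1}\|=-\|a_{T+1}\|$. Writing $P_{\hat\cF}=\hat W_1\hat W_1^\top$ for the projector onto the estimated subspace and noting that Step~3 of Algorithm~\ref{alg:natural} returns $\hat W_1\hat w^{(T+1)}_2=P_{\hat\cF}\hat\mu_{T+1}/\|P_{\hat\cF}\hat\mu_{T+1}\|$, the excess risk becomes $\cR=\|\mu_{T+1}\|\bigl(1-\hat\beta^\top u\bigr)=\tfrac12\|\mu_{T+1}\|\,\|\hat\beta-u\|^2$, with $\hat\beta=P_{\hat\cF}\hat\mu_{T+1}/\|P_{\hat\cF}\hat\mu_{T+1}\|$ and $u=\mu_{T+1}/\|\mu_{T+1}\|$. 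Since both are unit vectors this gives $\cR\lesssim\|\hat\beta-u\|$, and a triangle inequality separating the normalized projection of $\hat\mu_{T+1}$ onto $\hat\cF$ from its projection onto $\cF$ bounds $\|\hat\beta-u\|$ by $\|\sin\Theta(\hat W_1,B)\|_F$ plus $\|B^\top\hat\mu_{T+1}-B^\top\mu_{T+1}\|/\|\mu_{T+1}\|$, which recovers Eq.~\eqref{eq:excess} after invoking $\|a_{T+1}\|=\Theta(1)$. The main obstacle is precisely here: one must verify the denominators $\|P_{\hat\cF}\hat\mu_{T+1}\|$ and $\|\mu_{T+1}\|$ stay bounded away from zero so the normalization is stable, which is where the smallness of the two errors (guaranteed by the sample-size hypotheses) enters as the ``regularity conditions''.

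Next I would control the task-specific error. Using $x^{(T+1)}_iy^{(T+1)}_i=\mu_{T+1}+y^{(T+1)}_i\eta^{(T+1)}_i$ gives $B^\top\hat\mu_{T+1}-B^\top\mu_{T+1}=\frac1{n_{T+1}}\sum_i B^\top y^{(T+1)}_i\eta^{(T+1)}_i$, an average of $n_{T+1}$ independent, zero-mean, $\rho_{T+1}^2$-sub-gaussian vectors lying in the $r$-dimensional column space of $B$. A standard covering/$\chi^2$-type tail bound then yields $\|B^\top\hat\mu_{T+1}-B^\top\mu_{T+1}\|\lesssim\sqrt{(r+\log n)/n_{T+1}}$ with probability $1-O(n^{-100})$, where the $\log n$ arises from setting the failure probability to $n^{-100}$ and $\rho_{T+1}=\Theta(1)$.

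Finally I would assemble the pieces. Lemma~\ref{lm:representation} supplies $\|\sin\Theta(\hat W_1,B)\|_F\lesssim\sqrt{r^2/n}+\sqrt{r^2p/(nT)}+\sqrt{r^2\log n/(nT)}$ with probability $1-O(n^{-100})$. The hypothesis $n>c_1 r n_{T+1}$ together with $T\ge 2r$ forces the first and third terms to be absorbed into the task-specific rate: indeed $r^2/n\le r/(c_1 n_{T+1})$ and $r^2\log n/(nT)\le \log n/(2c_1 n_{T+1})$, so both are $\lesssim\sqrt{(r+\log n)/n_{T+1}}$, leaving only $\sqrt{r^2p/(nT)}$ from the representation error. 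A union bound over the two high-probability events then gives $\cR\lesssim\sqrt{(r+\log n)/n_{T+1}}+\sqrt{r^2p/(nT)}$ with probability $1-O(n^{-100})$, as claimed.
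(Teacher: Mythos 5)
Your proposal is correct and takes essentially the same route as the paper's own proof: both reduce the excess risk to the decomposition in Eq.~\eqref{eq:excess} (the paper by adding and subtracting terms built from an optimal $W_1^*$ equal to $B$ up to an orthogonal rotation, you via projections and the identity $\cR=\tfrac12\|\mu_{T+1}\|\,\|\hat\beta-u\|^2$, which is the same argument), then bound the task-specific term by $r$-dimensional sub-gaussian concentration, bound the representation term by Lemma~\ref{lm:representation}, and absorb the residual terms using $n>c_1 r n_{T+1}$ and $2r\le T$ before setting $\delta=n^{-100}$. The only caveat, which applies equally to the paper's write-up, is that both arguments leave the same second-order details implicit, namely the stability of the normalizing denominators and the cross term in which the difference of the projections onto the estimated and true subspaces hits the target noise $\hat\mu_{T+1}-\mu_{T+1}$; this term is harmless because $\hat W_1$ is computed from the source data and is therefore independent of the target noise, making that contribution $O\bigl(\|\sin\Theta(\hat W_1,B)\|_F\sqrt{(r+\log n)/n_{T+1}}\bigr)$.
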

\begin{remark}
Lemma \ref{lm:representation} and Corollary \ref{col:excessrisk} provide counterparts of the bound of subspace distance and excess risk studied in \cite{tripuraneni2020provable,du2020few} under the setting of regression models. Since they use squared losses, our bounds are different from theirs by square roots. Squaring our bounds provide results with similar rates as those in previous work. If we do not use data from source tasks, we will obtain an excess risk bound of order $\sqrt{p/n_{T+1}}$ instead, which will be significantly larger than the one in Corollary \ref{col:excessrisk} if $r+\log n\ll p$ and $pr^2\ll n T$, which happens in our low rank situation with abundant source task data.
\end{remark}

 Meanwhile, we provide the following minimax lower bound to justify the near-optimality of our algorithm in learning the representation in general cases.
\begin{proposition}\label{prop:lowerbound}
Let us consider the parameter space $\Xi=\{A\in\R^{p\times r},B\in\R^{p\times r}:\sigma_r(A^\top A/T)\gtrsim 1, B^\top B=I_r\}$. If $nT\gtrsim rp$, we then have
$$
\inf_{\hat W_1}\sup_{\Xi}\E\|\sin\Theta(B,\hat W_1)\|_F\gtrsim
\sqrt\frac{rp}{nT}.
$$
\end{proposition}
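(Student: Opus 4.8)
The plan is to establish this minimax lower bound by an information-theoretic argument, restricting to a least-favorable sub-family of $\Xi$ and applying Assouad's lemma (Fano's method would serve equally well). Since only a lower bound is needed, I am free to fix the task-parameter matrix $A=[a_1,\dots,a_T]$ once and for all at a convenient choice satisfying the diversity constraint, and to take Gaussian noise $\eta^{(t)}_i\sim\mathcal N(0,\rho^2 I_p)$, which is an admissible sub-gaussian instance; the representation $B\in\bO_{p\times r}$ is then the only unknown. Conditionally on the labels we have $x^{(t)}_i\mid y^{(t)}_i\sim\mathcal N\big(y^{(t)}_i B a_t,\rho^2 I_p\big)$, so the reduction is to recovering the column space of $B$ from $nT$ Gaussian observations.

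Next I would build a local packing of the Grassmannian around the base point $B_0=\begin{pmatrix}I_r \\ 0\end{pmatrix}$. Indexing by the hypercube $\tau\in\{0,1\}^{(p-r)\times r}$, set $U_\tau=\alpha\tau$ and
\[
B_\tau=\begin{pmatrix}I_r \\ \gamma U_\tau\end{pmatrix}\big(I_r+\gamma^2 U_\tau^\top U_\tau\big)^{-1/2}\in\bO_{p\times r},
\]
with $\gamma,\alpha>0$ to be tuned. Computing principal angles gives $\cos\theta_i=(1+\gamma^2\sigma_i(U_\tau)^2)^{-1/2}$, so in the small-perturbation regime $\|\sin\Theta(B_\tau,B_{\tau'})\|_F\asymp\gamma\|U_\tau-U_{\tau'}\|_F=\gamma\alpha\sqrt{\mathrm{Ham}(\tau,\tau')}$; that is, subspace distance is equivalent to Hamming distance, which supplies the separation hypothesis of Assouad's lemma.

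For the information term I would choose $A$ with $AA^\top=T I_r$ (for instance columns cycling through the scaled coordinate vectors $\sqrt r\,e_j$, for which $\sigma_r(A^\top A/T)=1\gtrsim1$). The Gaussian KL identity together with $y^2=1$ then gives, for Hamming-neighboring vertices and equal sample sizes $n_t=n$,
\[
D_{\mathrm{KL}}\big(P_{B_\tau}\,\big\|\,P_{B_{\tau'}}\big)=\frac{n}{2\rho^2}\big\|(B_\tau-B_{\tau'})A\big\|_F^2\le\frac{nT}{2\rho^2}\big\|B_\tau-B_{\tau'}\big\|_F^2\lesssim\frac{nT}{\rho^2}\gamma^2\alpha^2 .
\]
Assouad's lemma over the $d=r(p-r)$ coordinates then yields $\inf_{\hat W_1}\sup_{\Xi}\E\|\sin\Theta(B,\hat W_1)\|_F\gtrsim\gamma\alpha\sqrt d$ whenever the per-coordinate KL is bounded by a constant. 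Saturating $\tfrac{nT}{\rho^2}\gamma^2\alpha^2\asymp1$, i.e.\ $\gamma\alpha\asymp\rho/\sqrt{nT}$, and using $d\asymp rp$ and $\rho=\Theta(1)$ gives the claimed $\sqrt{rp/(nT)}$.

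The main obstacle is to make the two displayed $\asymp$/$\lesssim$ estimates rigorous and to confirm the chart is legitimate: both the separation lower bound $\|\sin\Theta(B_\tau,B_{\tau'})\|_F\gtrsim\gamma\|U_\tau-U_{\tau'}\|_F$ and the control $\|B_\tau-B_{\tau'}\|_F\lesssim\gamma\|U_\tau-U_{\tau'}\|_F$ of the normalizer $(I_r+\gamma^2U_\tau^\top U_\tau)^{-1/2}$ hold only in the small-angle regime $\gamma\|U_\tau\|_F\lesssim1$, i.e.\ $\gamma\alpha\sqrt d\lesssim1$, which is precisely $\sqrt{rp/(nT)}\lesssim1$ and is guaranteed by the hypothesis $nT\gtrsim rp$. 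Establishing this bi-Lipschitz equivalence of the local parametrization of the Grassmannian is the one genuinely technical step; granting it, the Gaussian KL computation, the hypercube bookkeeping in Assouad's lemma, and the verification that the chosen $A$ lies in $\Xi$ are all routine.
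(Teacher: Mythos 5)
Your proposal is correct and arrives at the bound by a genuinely different (though sibling) route. The paper's proof is abstract: it applies Fano's lemma to an $\eta$-packing of the set of $p\times r$ orthonormal matrices under the $\sin\Theta$ metric, importing the Grassmannian metric-entropy bound $\log M\asymp rp\log(1/\eta)$ from Pajor/Tripuraneni et al., and then bounds the pairwise KL by $nT\eta^2$ — the same Gaussian reduction and the same freedom to fix a well-conditioned $A$ that you use — finally setting $\eta\asymp\sqrt{rp/(nT)}$. You instead build an explicit local hypercube chart of the Grassmannian and apply Assouad. Your route buys three things: it avoids the cited entropy result entirely; it makes explicit where the hypothesis $nT\gtrsim rp$ enters (your small-angle condition $\gamma\alpha\sqrt{d}\lesssim 1$), which the paper never articulates; and it sidesteps a point the paper's write-up glosses over — Fano requires hypotheses that are simultaneously $\eta$-separated \emph{and} KL-close, which strictly speaking needs a local packing argument, whereas in your construction neighboring vertices are close by design. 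The paper's route buys brevity, given the quoted entropy bound. Two details to nail down in your version: Assouad's lemma wants a loss bounded below linearly in Hamming distance, while yours scales like $\gamma\alpha\sqrt{\mathrm{Ham}(\tau,\tau')}$; this is fine since $\sqrt{\mathrm{Ham}(\tau,\tau')}\ge \mathrm{Ham}(\tau,\tau')/\sqrt{d}$, giving per-coordinate separation $\gamma\alpha/\sqrt{d}$ and hence the claimed $\gamma\alpha\sqrt{d}$ (alternatively, run Varshamov--Gilbert plus Fano on your chart, which needs no such step). The bi-Lipschitz property of the chart that you flag as the one technical step is indeed standard — it is exactly the construction used in the Vu--Lei and Cai--Ma--Wu subspace lower bounds — so your outline is complete modulo that routine verification.
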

\begin{remark}
For high dimensional data such that $p$ is much larger than $T$ and $\log n$, the lower bound in Proposition \ref{prop:lowerbound} matches the upper bound in Lemma \ref{lm:representation} up to a factor $\sqrt{r}$. Since $r$ is considered as a small constant in our settings, we can see that in general cases when there is no additional structural assumptions, 
our algorithm already obtains the near-optimal rate in representation learning. However, in later sections, when we introduce some additional structural assumptions such as varying signal-to-noise ratios and sparsity structures among tasks, which commonly happens in real applications, we will show that adversarial training can improve representation learning and further leads to smaller excess risks.
\end{remark}

\subsection{How $\ell_2$-adversarial training improves representation learning for transfer}
In this subsection, we consider the benefit of $\ell_2$-adversarial training. Specifically, if the signal-to-noise ratios varies among tasks in the sense that  $\|a_t\|$'s have different scales, $\ell_2$-adversarial training can lead to a sharper representation estimation error than standard training. In contrast, Lemma \ref{lm:representation} and Proposition \ref{prop:lowerbound} demonstrate that under the case of uniform signal-to-noise ratios, adversarial training cannot have any gain over standard training. From a high-level perspective, signal-to-noise ratios determiine the difficulties of classification. For those tasks with small signal-to-noise ratios, while adversarial attacks make them even harder to perform classification (increase bias), but also make these tasks less competitive (decrease variance). Thus, adversarial training will bias the model to focus on learning the representation out of those with large signal-to-noise ratios.

\begin{assumption}[Varying signal-to-noise ratios]\label{ass:2}
For the $T$ source tasks, they can be divided into two disjoint sets. The first set is $S_1=\{t\in[T]:\|a_t\|=\Theta(1)\}$, and the second set is $S_2=\{t\in[T]:\|a_t\|=\Omega(\alpha_T)\}$, where $\alpha_T\rightarrow \infty$ as $T\rightarrow\infty$, and $S_1\cup S_2=[T]$.  In addition, $|S_2|/T=\Theta(1)$.
\end{assumption}
For the matrix $M=[a_1/\|a_1\|,a_2/\|a_2\|,\cdots,a_T/\|a_T\|]$, we further denote $M_{S_1}$ as the sub-matrix of $M$, whose columns consist of of $a_t/\|a_t\|$ for $t\in S_1$. For instance, if $S_1=\{1,2,3\}$. then $M_{S_1}=[a_1/\|a_1\|,a_2/\|a_2\|,a_3/\|a_3\|]$. We define $M_{S_2}$ similarly.
\begin{assumption}[Task diversity]\label{ass:3}
For the $T$ source tasks, $\min\{\sigma_r(M_{S_2}^\top M_{S_2}/T),\sigma_r(M^\top M/T)\}=\Omega(1/r)$.
\end{assumption}
\begin{remark}
Assumption \ref{ass:2} indicates if we have more source tasks (larger $T$), more tasks with large signal-to-noise ratios would show up. Similar to Assumption \ref{ass:1}, Assumption \ref{ass:3} requires both the columns in $M$ and $M_{S_2}$ cover $\bR^r$ evenly.
\end{remark}
Now, we consider the adversarial training algorithm for $\ell_q$-attack for $q=2,\infty$.

\begin{algorithm}[tbh]
   \caption{Adversarial Learning for Linear Features}
   \label{alg:adv}
\textbf{Input:} $\{S_t\}_{t=1}^{T+1}$, $q$ 
\vspace{0.1cm}

\text{Step 1:} Optimize the adversarial loss function on each individual source task $t\in[T]$ and obtain
$$\hat{\beta}^{adv}_t =\argmin_{\|\beta_t\|\le 1}\max_{\|\delta_i\|_q\le\varepsilon} \frac{1}{n_t}\sum_{i=1}^{n_t}-y^{(t)}_i\langle \beta_t, x^{(t)}_i+\delta_i \rangle. $$

\text{Step 2:} $\hat{W}^{adv}_1\Sigma^{adv}V^{adv\top}\leftarrow \text{top-$r$}~\text{SVD of}~[\hat{\beta}^{adv}_1,\hat{\beta}^{adv}_2,\cdots,\hat{\beta}^{adv}_T],$ where $\Sigma^{adv}$ is  a $r\times r$ diagonal matrix consists of singular values, and $V^{adv}$ is a $T\times r$ matrix consisting of orthonomal columns.

\text{Step 3:} $\hat{w}^{adv,(T+1)}_2\leftarrow \argmin_{\|w^{(T+1)}_2\|\le 1} \frac{1}{n_{T+1}}\sum_{i=1}^{n_{T+1}}-y^{(T+1)}_i\langle w^{(T+1)}_2\hat{W}^{adv}_1, x^{(T+1)}_i \rangle. $

\textbf{Return} $\hat{W}^{adv}_1$, $\hat{w}^{adv,(T+1)}_2.$
\end{algorithm}


The following theorem shows that even when the $\hat{\beta}^{adv}_t$'s obtained by $\ell_2$-adversarial training have large excess risk for each source task, the $\hat{W}^{adv}_1$ extracted from $[\hat{\beta}^{adv}_1,\hat{\beta}^{adv}_2,\cdots,\hat{\beta}^{adv}_T]$ can transfer knowledge from multiple source tasks better and result in a smaller excess risk on the target task.
\begin{theorem}\label{thm:l2adv}
Under Assumption \ref{ass:2} and \ref{ass:3}, for $\|a_{T+1}\|=\alpha=\Omega(1)$, if $n>c_1\max\{r^2,r/\alpha_T\}\cdot\max\{p\log T,\log n/T,1\}$ and $n>c_2(\alpha\alpha_T)^2r n_{T+1}$  for  universal constants $c_1, c_2$, $2r\le\min\{p,T\}$. There exists a universal constant $c_3$, such that if we choose $\varepsilon \in [\max_{t\in S_1}\|a_t\|+c_3\sqrt{p\log T/n},\min_{t\in S_2}\|a_t\|-c_3\sqrt{p\log T/n}]$ (this set will not be empty if $T, n$ are large enough), for $\hat{W}^{adv}_1$, $\hat{w}^{adv,(T+1)}_2$ obtained in Algorithm \ref{alg:adv} with $q=2$, with probability at least $1-O(n^{-100})$, 
\begin{equation*}
\|\sin\Theta(\hat{W}^{adv}_1,B)\|_F\lesssim (\alpha_T)^{-1}\left(\sqrt{\frac{r^2}{n}}+\sqrt{\frac{pr^2}{nT}}+\sqrt{\frac{r^2\log n}{nT}}\right),
\end{equation*}
and the excess risk 
\begin{equation*}
\cR(\hat{W}^{adv}_1,\hat{w}^{adv,(T+1)}_2)\lesssim \alpha\sqrt{\frac{r+\log n}{n_{T+1}}}+(\alpha_T)^{-1}\left(\sqrt{\frac{r^2p}{nT}}\right).
\end{equation*} 
\end{theorem}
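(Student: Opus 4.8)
The plan is to reduce the adversarial problem to a cleaner, lower-noise instance of the setting of Lemma~\ref{lm:representation}, run the same spectral perturbation argument on that instance, and read off the $\alpha_T^{-1}$ improvement from the boosted signal strength. Write $\hat\mu_t=\frac{1}{n_t}\sum_{i=1}^{n_t}y^{(t)}_i x^{(t)}_i$. The first step is a closed form for each adversarial estimator. Since $(y^{(t)}_i)^2=1$, $\ell_2$ duality gives $\max_{\|\delta_i\|_2\le\varepsilon}-y^{(t)}_i\langle\beta_t,x^{(t)}_i+\delta_i\rangle=-y^{(t)}_i\langle\beta_t,x^{(t)}_i\rangle+\varepsilon\|\beta_t\|_2$, so the inner maximization collapses the per-task objective to $-\langle\beta_t,\hat\mu_t\rangle+\varepsilon\|\beta_t\|_2$. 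Minimizing this over $\|\beta_t\|\le1$ yields a hard threshold: $\hat\beta^{adv}_t=\hat\mu_t/\|\hat\mu_t\|$ when $\|\hat\mu_t\|>\varepsilon$, and $\hat\beta^{adv}_t=0$ otherwise.

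The second step shows that this threshold selects exactly the high signal-to-noise tasks. Using sub-gaussianity of $\eta^{(t)}_i$ with a union bound over the $T$ tasks, I would establish the uniform deviation $\|\hat\mu_t-\mu_t\|\lesssim\sqrt{p\log T/n}$ for all $t\in[T]$ with probability $1-O(n^{-100})$. Because $\|\mu_t\|=\|Ba_t\|=\|a_t\|$ and $\varepsilon$ lies in $[\max_{t\in S_1}\|a_t\|+c_3\sqrt{p\log T/n},\,\min_{t\in S_2}\|a_t\|-c_3\sqrt{p\log T/n}]$, the deviation bound forces $\|\hat\mu_t\|<\varepsilon$ for every $t\in S_1$ and $\|\hat\mu_t\|>\varepsilon$ for every $t\in S_2$; Assumption~\ref{ass:2} (with $\alpha_T\to\infty$) guarantees this window is nonempty. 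Hence $[\hat\beta^{adv}_1,\dots,\hat\beta^{adv}_T]$ has nonzero columns only on $S_2$, each equal to $\hat\mu_t/\|\hat\mu_t\|$, and the top-$r$ left singular subspace $\hat{W}^{adv}_1$ is determined entirely by the $S_2$ columns.

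The third and most delicate step is the spectral perturbation bound on this reduced matrix. I would write the retained block as $BM_{S_2}+E$, where the columns of $M_{S_2}$ are the normalized $a_t/\|a_t\|$ for $t\in S_2$ and $E$ collects the errors $\hat\mu_t/\|\hat\mu_t\|-Ba_t/\|a_t\|$. The signal is bounded below by $\sigma_r(BM_{S_2})=\sigma_r(M_{S_2})\gtrsim\sqrt{T/r}$ via Assumption~\ref{ass:3} and $|S_2|=\Theta(T)$. For the noise, a first-order expansion of the normalization gives, for $t\in S_2$, a column error of order $\|\mu_t\|^{-1}\|z_t\|\lesssim\alpha_T^{-1}\sqrt{p/n}$ with $z_t=\frac{1}{n_t}\sum_i y^{(t)}_i\eta^{(t)}_i$; the component of $E$ orthogonal to the range of $B$ is a random matrix whose operator norm concentrates as $\alpha_T^{-1}(\sqrt p+\sqrt T+\sqrt{\log n})/\sqrt n$. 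A Wedin/Davis--Kahan $\sin\Theta$ inequality, division by $\sigma_r(BM_{S_2})$, and conversion from operator to Frobenius norm (an extra $\sqrt r$) then produce $\|\sin\Theta(\hat{W}^{adv}_1,B)\|_F\lesssim\alpha_T^{-1}\big(\sqrt{r^2/n}+\sqrt{pr^2/(nT)}+\sqrt{r^2\log n/(nT)}\big)$. The entire effect of adversarial training is precisely the replacement of the per-column noise scale $\|\mu_t\|\asymp1$ (as in Lemma~\ref{lm:representation}) by $\|\mu_t\|\gtrsim\alpha_T$, which is the origin of the $\alpha_T^{-1}$ factor.

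Finally, the excess-risk claim follows by substituting into the decomposition~\eqref{eq:excess}: the representation term contributes the dominant $\alpha_T^{-1}\sqrt{r^2p/(nT)}$, while bounding the task-specific error $\|B^\top(\hat\mu_{T+1}-\mu_{T+1})\|\lesssim\sqrt{(r+\log n)/n_{T+1}}$ (as $B^\top z_{T+1}$ is an $r$-dimensional sub-gaussian vector) and tracking the target signal scale $\|a_{T+1}\|=\alpha$ through the loss gives the $\alpha\sqrt{(r+\log n)/n_{T+1}}$ term, exactly as in the analysis underlying Corollary~\ref{col:excessrisk}; the hypothesis $n\gtrsim(\alpha\alpha_T)^2 r\,n_{T+1}$ keeps the two contributions correctly balanced and the linearization valid. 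I expect the main obstacle to be the sharp operator-norm control of $E$ in the third step: one must simultaneously handle the nonlinearity of the map $\hat\mu_t\mapsto\hat\mu_t/\|\hat\mu_t\|$, the random-matrix concentration of the sum over the $\Theta(T)$ retained columns, and the high-probability ($\log n$) tails, while checking that the sample-size hypotheses keep the error well inside the perturbative regime $\|E\|_2\ll\sigma_r(BM_{S_2})$ so that the $\sin\Theta$ bound is meaningful.
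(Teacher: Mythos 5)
Your proposal is correct and follows essentially the same route as the paper's proof: the $\ell_2$ inner maximization collapses to an $\varepsilon\|\beta_t\|$ penalty whose minimizer is a hard threshold on $\|\hat\mu_t\|$, the choice of $\varepsilon$ plus a union concentration bound zeroes out exactly the $S_1$ columns, and Davis--Kahan applied to the surviving $S_2$ block (whose normalized columns carry noise scaled by $\alpha_T^{-1}$ and signal $\sigma_r(M_{S_2})\gtrsim\sqrt{T/r}$ from Assumption~\ref{ass:3}) yields the representation bound, with the excess risk then read off from the decomposition~\eqref{eq:excess} under the sample-size condition $n\gtrsim(\alpha\alpha_T)^2 r n_{T+1}$. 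The only cosmetic difference is that you linearize the normalization map $\hat\mu_t\mapsto\hat\mu_t/\|\hat\mu_t\|$ directly, whereas the paper avoids this by noting that right-multiplication by a diagonal matrix does not change the top-$r$ left singular subspace, so it may work with columns $\hat\mu_t/\|\mu_t\|$ instead.
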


Similar to Assumption \ref{ass:2}, here $\alpha$ can also be a function of the target task data size $n_{T+1}$.

\noindent\textbf{$\ell_2$-adversarial training v.s. standard training:} 
Under the exact same conditions in Theorem~\ref{thm:l2adv}, a simple modification of Lemma~\ref{lm:representation} leads to
 $\|\sin\Theta(\hat{W}_1,B)\|_F\lesssim \sqrt{r^2/n}+\sqrt{pr^2/(nT)}+\sqrt{r^2\log n/(nT)}$ and $
 \cR(\hat{W}_1,\hat{w}^{(T+1)}_2)\lesssim \alpha\sqrt{(r+\log n)/n_{T+1}}+\sqrt{r^2p/(nT)}
 $ with high probability.
We can see that the adversarial training would lead to a better representation and an improved excess risk when $\alpha_T$ is growing. Such a scenario happens when the source data consist of a large diversity of tasks with varying difficulties of classification. Our proof indeed reveals that adversarial training would help the model to focus on learning the representation from easy-to-classify tasks, and therefore improves the convergence rate of representation learning. 
The gain in representation learning further leads to smaller rates of $\cR(\hat{W}^{adv}_1,\hat{w}^{adv,(T+1)}_2)$ compared with $\cR(\hat{W}_1,\hat{w}^{(T+1)}_2)$.

\subsection{How $\ell_\infty$-adversarial training improves representation learning for transfer}
In this subsection, we further consider the benefit of $\ell_\infty$-adversarial training. It is well-recognized that commonly used real data sets, such as MNIST and CIFAR-10, actually lie in lower dimensional manifolds compared with their ambient dimensions. After certain transformations \cite{baraniuk2007compressive,candes2006compressive}, it is equivalent to having sparsity structure in the coordinates. We demonstrate that if there are some underlying sparsity structures in the mean parameters $\mu_{t}=Ba_t$ for $t\in[T]$, then $\ell_\infty$-adversarial training leads to sharper bounds regarding the representation error and excess risk. To facilitate the discussion, let us use $\mu_{t,j}$ to denote the $j$-th coordinates of $\mu_t$. 
\begin{assumption}[Structural sparsity]\label{ass:4}
For an integer $s$ such that $0<s<p$, we assume for all $t\in[T]$, $\sgn(\mu_{t,j}$ are $i.i.d.$ and $\Prob(\sgn(\mu_{t,j})=0)=1-\eta_s$, $\Prob(\sgn(\mu_{t,j})=1)=\Prob(\sgn(\mu_{t,j})=-1)=\eta_s/2$. We also refer $s$ as the sparsity level. 
\end{assumption}
Assumption \ref{ass:4} guarantees that the sparsity of each column is upper bounded by $O(s\log T)$ with high probability. Similar assumptions have been commonly used in the high-dimensional statistics literature \cite{bayati2011lasso,su2017false}.
For $\ell_\infty$-adversarial training, we provide bounds obtained through adversarial training below.

\begin{theorem}\label{thm:linftyadv}
Under Assumptions \ref{ass:1} and \ref{ass:4}, if $n>c_1\cdot r^2\max\{s^2\log^2T/T,r n_{T+1}, 1\}$  for some universal constants $c_1>0$, $2r\le\min\{p,T\}$. There exists a universal constant $c_2$, such that if we choose $\varepsilon>c_2\sqrt{\log p/n}$, for and $\hat{W}^{adv}_1$, $\hat{w}^{adv,(T+1)}_2$ obtained in Algorithm \ref{alg:adv} with $q=\infty$, with probability at least $1-O(n^{-100})-O(T^{-100})$,
\begin{equation*}
\|{\sin\Theta(\hat{W}^{adv}_1,B)\|_F\lesssim r\left(\sqrt{\frac{1}{n}}+\sqrt{\frac{s^2}{nT}}\right)\cdot \log(T+p),}
\end{equation*}
and the excess risk 
\begin{equation}
{\cR(\hat{W}^{adv}_1,\hat{w}^{adv,(T+1)}_2)\lesssim\left(\sqrt{\frac{r+\log n}{n_{T+1}}}+r\sqrt{\frac{s^2}{nT}}\right)\cdot \log(T+p)}.
\end{equation} 
\end{theorem}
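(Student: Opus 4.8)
The plan is to exploit the fact that, for $q=\infty$, the inner adversarial maximization in Step 1 of Algorithm \ref{alg:adv} has a closed form that converts it into an $\ell_1$-penalized estimator, and that the $\ell_1$ penalty acts as soft-thresholding which annihilates the pure-noise coordinates and confines the estimation error to the small support of each $\mu_t$. Concretely, since $y^{(t)}_i\in\{-1,1\}$, for any $\beta$ one has $\max_{\|\delta\|_\infty\le\varepsilon}(-y\langle\beta,\delta\rangle)=\varepsilon\|\beta\|_1$, so writing $\hat\mu_t=\frac1{n_t}\sum_i y^{(t)}_i x^{(t)}_i$ the first step reduces to
\begin{equation*}
\hat\beta^{adv}_t=\argmax_{\|\beta\|\le1}\big(\langle\beta,\hat\mu_t\rangle-\varepsilon\|\beta\|_1\big).
\end{equation*}
A coordinatewise Cauchy--Schwarz/KKT argument then shows the maximizer is the normalized soft-threshold $\hat\beta^{adv}_t=S_\varepsilon(\hat\mu_t)/\|S_\varepsilon(\hat\mu_t)\|_2$, where $S_\varepsilon(u)_j=\mathrm{sign}(u_j)(|u_j|-\varepsilon)_+$.

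Next I would control the noise. Writing $\hat\mu_t=\mu_t+\hat\xi_t$ with $\hat\xi_t=\frac1{n_t}\sum_i y^{(t)}_i\eta^{(t)}_i$, sub-gaussianity and a union bound over the $p$ coordinates give $\|\hat\xi_t\|_\infty\lesssim\sqrt{\log p/n}$ for all $t$ simultaneously with probability $1-O(n^{-100})$. Choosing $\varepsilon>c_2\sqrt{\log p/n}$ larger than this fluctuation forces $S_\varepsilon(\hat\mu_{t})_j=0$ on every coordinate $j$ with $\mu_{t,j}=0$; hence $\hat\beta^{adv}_t$ is supported inside $\mathrm{supp}(\mu_t)$, whose size is $O(s\log T)$ uniformly in $t$ with probability $1-O(T^{-100})$ by Assumption \ref{ass:4} and a binomial tail bound. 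On the support the per-coordinate perturbation splits as a deterministic soft-thresholding bias $-\varepsilon\,\mathrm{sign}(\mu_{t,j})$ plus residual noise $\hat\xi_{t,j}$, each of order $\varepsilon$.

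I would then assemble $\hat\Phi^{adv}=[\hat\beta^{adv}_1,\dots,\hat\beta^{adv}_T]$ and compare it to the signal matrix $\Phi^\star=[\mu_1/\|\mu_1\|,\dots,\mu_T/\|\mu_T\|]=B\,M$, setting $E=\hat\Phi^{adv}-\Phi^\star$. Because every column of $E$ lives on a set of size $O(s\log T)$ with entries of order $\sqrt{\log p/n}$, the sparse, random-support structure lets me bound the operator norm via matrix Bernstein applied to $EE^\top$ and $E^\top E$ (using that the soft-thresholding signs are i.i.d.\ mean-zero), replacing the ambient dimension $p$ of the standard analysis by the sparsity $s$: $\|E\|_{op}\lesssim(\sqrt T+s)\sqrt{1/n}\cdot\log(T+p)$. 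A Davis--Kahan/Wedin bound of the form $\|\sin\Theta(\hat W^{adv}_1,B)\|_F\lesssim\sqrt r\,\|E\|_{op}/\sigma_r(\Phi^\star)$, together with $\sigma_r(\Phi^\star)=\sigma_r(M)\gtrsim\sqrt{T/r}$ from Assumption \ref{ass:1}, then gives the claimed subspace bound $r(\sqrt{1/n}+\sqrt{s^2/(nT)})\log(T+p)$.

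Finally, the excess-risk claim follows by feeding this into the decomposition \eqref{eq:excess}: the representation error is exactly the bound just obtained, while the target task-specific error $\|B^\top\hat\mu_{T+1}-B^\top\mu_{T+1}\|=\|B^\top\hat\xi_{T+1}\|$ is an $r$-dimensional sub-gaussian vector of scale $\sqrt{1/n_{T+1}}$, hence $\lesssim\sqrt{(r+\log n)/n_{T+1}}$; the hypothesis $n\gtrsim r^3 n_{T+1}$ makes the $r\sqrt{1/n}$ piece of the representation error negligible against this term. I expect the main obstacle to be the operator-norm control of $E$ in the third step: the supports vary randomly across tasks and the soft-thresholding bias generally leaves $\mathrm{col}(B)$, so one must show that both the bias and the residual-noise parts concentrate at the sparse scale $s$ rather than the ambient scale $p$ — which is precisely where Assumption \ref{ass:4} and the choice of $\varepsilon$ do the work.
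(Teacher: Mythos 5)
Your overall architecture matches the paper's proof: the $\ell_\infty$ inner maximization collapses to an $\ell_1$ penalty whose constrained optimizer is the normalized soft-threshold (the paper writes the identical formula, calling it $T_\varepsilon$), the choice $\varepsilon \gtrsim \sqrt{\log p/n}$ localizes each column to $\mathrm{supp}(\mu_t)$ of size $O(s\log T)$ under Assumption \ref{ass:4}, and Davis--Kahan with $\sigma_r \gtrsim \sqrt{T/r}$ plus the decomposition \eqref{eq:excess} finish. The genuine gap is in your third step, the claim $\|E\|_{op} \lesssim (\sqrt{T}+s)\sqrt{1/n}\cdot\log(T+p)$ for $E = \hat\Phi^{adv} - \Phi^\star$ with the \emph{empirical} normalization $\|S_\varepsilon(\hat\mu_t)\|_2$. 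Soft-thresholding shrinks column norms systematically: $\|S_\varepsilon(\hat\mu_t)\| \approx \|\mu_t\|\left(1 - \varepsilon\|\mu_t\|_1/\|\mu_t\|^2\right)$, and generically $\|\mu_t\|_1 \asymp \sqrt{s}\,\|\mu_t\|$, so each error column contains, besides the sparse noise-plus-bias part you model, a term $\kappa_t\,\mu_t$ with $\kappa_t \asymp \sqrt{s}\,\varepsilon$ having the \emph{same sign for every} $t$. Stacking these gives the matrix $B[\kappa_1 a_1,\dots,\kappa_T a_T]$, whose operator norm is at least $\min_t \kappa_t \cdot \sigma_1(M) \asymp \varepsilon\sqrt{sT/r}$. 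When $s \gg r$ and $T \gg rs$ --- regimes fully permitted by the theorem's hypotheses --- this exceeds your claimed bound $(\sqrt{T}+s)\varepsilon$, so no concentration argument (matrix Bernstein or otherwise) can establish the operator-norm estimate as you state it: the entries of $E$ are neither uniformly $O(\varepsilon)$ nor mean-zero, because this shrinkage component is a deterministic, one-signed effect.

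The component that breaks your bound, however, lies exactly in $\mathrm{col}(B)$, and the repair is to absorb it into the signal rather than treat it as perturbation --- which is precisely what the paper's reduction does (its proofs compare against columns normalized by the \emph{true} norms, exploiting that right-multiplication by a positive diagonal matrix does not move the signal's column space). Concretely, compare $\hat\Phi^{adv}$ to $\Phi^\star D$ with $D = \mathrm{diag}(\|\mu_t\|/\|S_\varepsilon(\hat\mu_t)\|)$: this matrix still has column space $\mathrm{col}(B)$ and $r$-th singular value $\gtrsim \sqrt{T/r}$, while the residual $\hat\Phi^{adv} - \Phi^\star D$ has $t$-th column $\left(S_\varepsilon(\hat\mu_t)-\mu_t\right)/\|S_\varepsilon(\hat\mu_t)\|$, which genuinely is supported on $\mathrm{supp}(\mu_t)$ with entries $O(\varepsilon)$ and (conditionally) centered. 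With that correction, the rest of your plan goes through, and your matrix-Bernstein treatment of the residual is a legitimate alternative to the paper's deterministic argument, which instead splits the rows by sparsity, applies chaining to the at most $T\log T$ dense rows, and bounds the sparse block by the Schur-type interpolation $\|E_{A_1}\| \le \sqrt{\|E_{A_1}\|_\infty\|E_{A_1}\|_1}$; note the paper's route needs no centering at all, whereas yours additionally requires a per-column truncation (to retain independence across tasks) to furnish the almost-sure bound Bernstein demands.
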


\noindent\textbf{$\ell_\infty$-adversarial training v.s. standard training:}
Under the exact same conditions in Theorem~\ref{thm:linftyadv}, again, a simple modification of Lemma~\ref{lm:representation} shows that  without adversarial training, with high probability, we have
$\|\sin\Theta(\hat{W}_1,B)\|_F\lesssim r(\sqrt{{1}/{n}}+\sqrt{{p}/{nT}}+\sqrt{{\log n}/{nT}})$ and the excess risk  $ \cR(\hat{W}_1,\hat{w}^{(T+1)}_2)\lesssim \sqrt{(r+\log n)/n_{T+1}}+r\sqrt{p/nT}$.
 Theorem~\ref{thm:linftyadv}  shows that $\ell_\infty$-adversarial training is able to learn significantly better representations when  $s^2\ll p$. This scenario is common in image classification where the label of an image only depends on a small set of feature. Our proof reveals that $\ell_\infty$-adversarial training would help remove the redundant features in the classification tasks and therefore improves the representation learning and the subsequent downstream prediction on target domain.

\section{Pseudo-Labeling and Adversarial Training}\label{sec:pseudo}
In the previous section, we have shown that combining abundant data from source tasks with robust training can help learn a good classifier for the target task. Sometimes, however, even the sources have limited labeled data. In that case, data augmentation by incorporating unlabeled source data, which are easier to obtain, can be a powerful way to improve prediction accuracy. One of the most commonly used semi-supervised learning algorithms is the pseudo-labeling algorithm \citep{chapelle2009semi}. In this section, we explore how using pseudo-labeling in the source data can improve transfer learning and how adversarial training can further boost that improvement, both empirically and theoretically.

\paragraph{Experiments.} We perform empirical study of image classification. Our source tasks are image classification on ImageNet \cite{russakovsky2015imagenet}; our target tasks are image classification on CIFAR-10 \cite{krizhevsky2009learning}. 
To simulate the pseudo-labeling setup, we sample 10\% of ImageNet, train a ResNet-18 model on this sample (without adversarial training), and generate pseudo-labels for the remaining 90\%. We then train a new source model using all of the source labeled and pseudo-labeled data with and without adversarial training. We use a public library for adversarial training \cite{robustness}. The high-level approach for adversarial training is as follows: at each iteration, take a small number of gradient steps to generate adversarial examples from an input batch; then update network weights using the loss gradients from the adversarial batch. 

\begin{figure}
\begin{center}
\begin{tabular}{cc}
  \includegraphics[width=50mm]{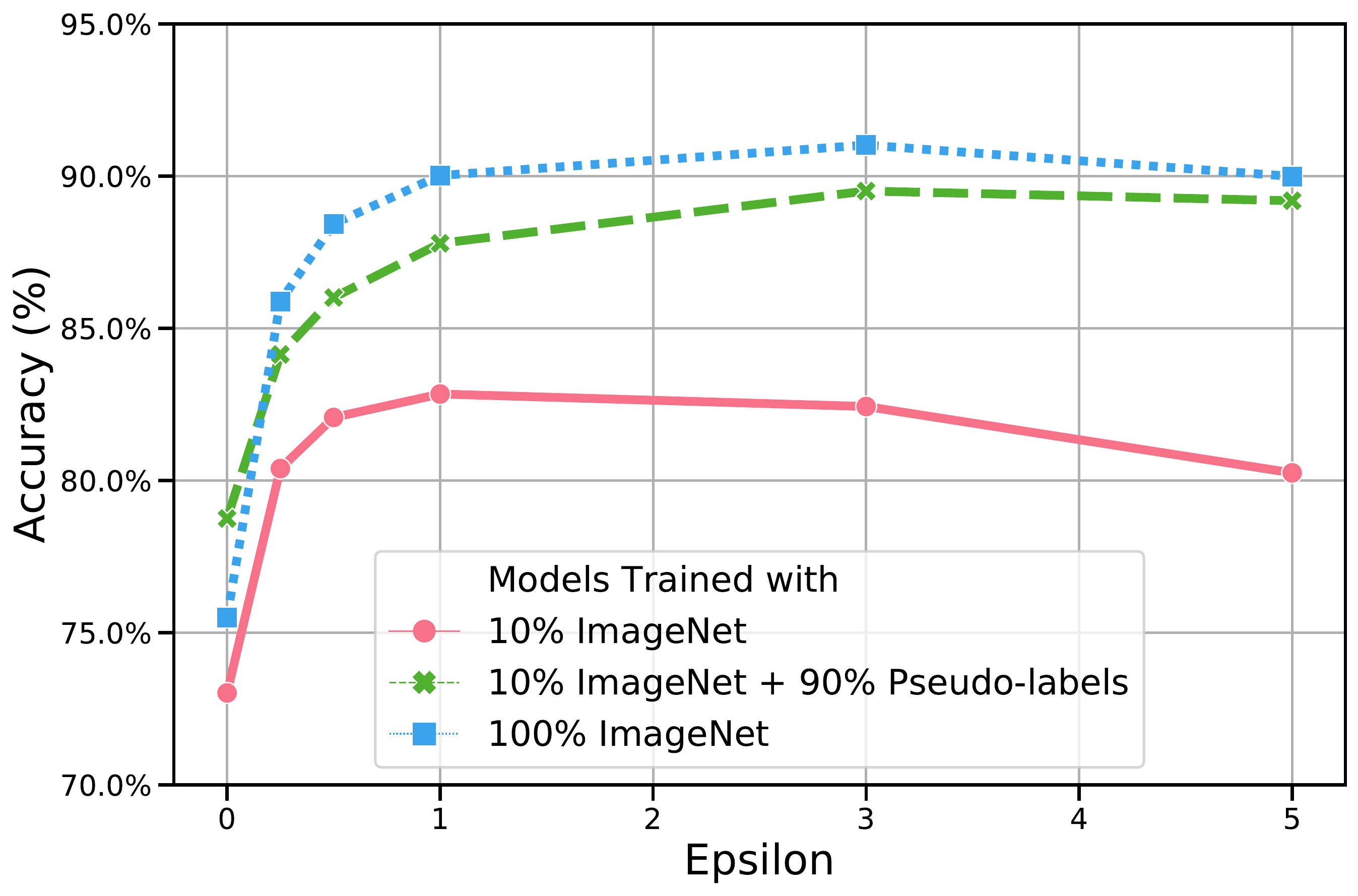} &   \includegraphics[width=50mm]{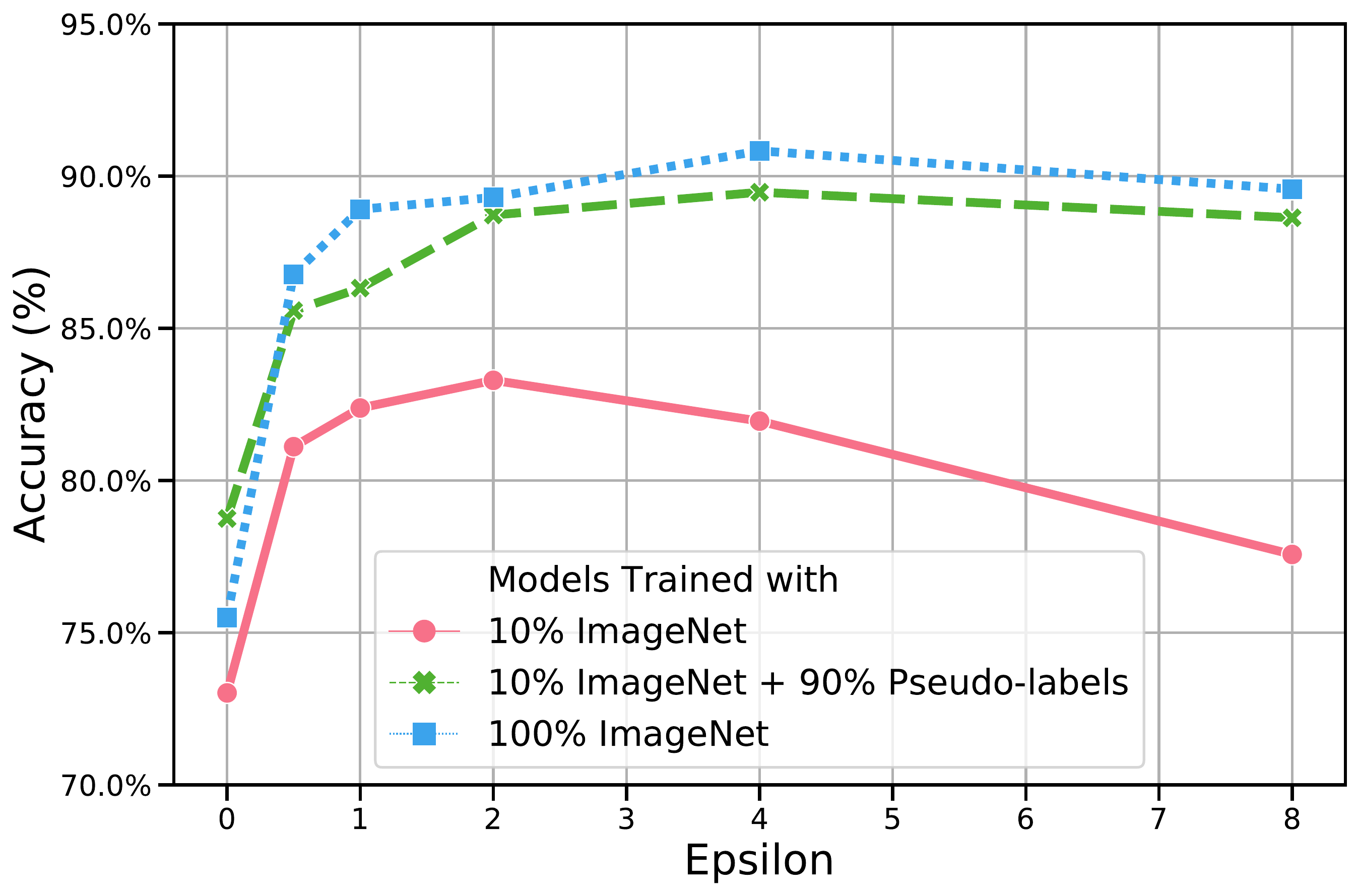} \\
(a) $\ell_{2}$ norm training  & (b) $\ell_{\infty}$ norm training \\[6pt]
\end{tabular}
\caption{Comparison of target task (CIFAR-10) accuracy for models trained on source task (ImageNet) using (i). a 10\% sample of data from the source task, (ii). the 10\% sample with ground truth labels, and the remaining 90\% with pseudo-labels, and (iii). 100\% of the source task data with ground truth labels. Models trained on the source task with (a). $\ell_{2}$-adversarial training and (b). $\ell_{\infty}$-adversarial training both exhibit similar behavior. The x-axis refers to the magnitude, $\varepsilon$, used in adversarial training---larger values indicate allowing more difficult adversarial examples; $0$ corresponds to no adversarial training. The $\varepsilon$ value in (b) is scaled up by 255 so it corresponds to pixel difference on a [0,255] scale.}
\label{fig:experiment_across_epsilon}
\end{center}
\end{figure}
In Figure~\ref{fig:experiment_across_epsilon}, we plot the target task accuracy of models trained on our source task in 3 different settings, across different levels of adversarial training. Models in Figure~\ref{fig:experiment_across_epsilon} (a) and (b) are trained on the source task with $l_{2}$ and $l_{\infty}$-adversarial training respectively. We compare models trained on the source task using: 1) a fixed 10\% sample of ImageNet,; 2) the 10\% sample with ground truth labels and the remaining 90\% sample using the generated pseudo-labels; and 3) all of ImageNet with its ground truth labels.
 Adversarial training boosts transfer performance in all 3 settings. Pseudo-labels also boost transfer performance. In the $\varepsilon=0$ setting (i.e. no adversarial training), the highest target accuracy is obtained by using labeled examples with pseudo-labels (green points).  Moreover, adversarial training with pseudo-labels also increases performance; at the optimal setting for adversarial training, the difference from using pseudo-labels instead of ground truth labels is only $1.5\%$. 

\begin{table}[h]
\caption{Effect of amount of pseudo-labels on transfer task performance (measured with accuracy). At $0\%$, we just use $10\%$ of data from the source task; at $900\%$, we use all remaining $90\%$ of data with pseudo-labels (this is $9$ times the train set size). Adversarial training corresponds to using $\ell_{2}$-adversarial training with $\varepsilon=1$ on the source task. Results on additional datasets in Appendix.}
{\resizebox{\columnwidth}{!}{\begin{tabular}{@{}lccccccr@{}}
\toprule
Source Task  & Target Task & +0\% Pseudo-labels & +200\% Pseudo-labels & +500\% Pseudo-labels & +900\% Pseudo-labels \\ 
\midrule
ImageNet                       & CIFAR-10       & 73.0\% & 73.8\% & 77.1 \% & 78.8 \% \\
ImageNet (w/adv.training)                   & CIFAR-10       & 82.8\% & 85.7\% & 87.5 \% & 87.8 \% \\ \midrule
ImageNet                      & CIFAR-100      & 51.0\% & 52.9\% & 55.3 \% & 58.4\% \\
ImageNet (w/adv.training)                      & CIFAR-100      & 62.6\% & 65.2\% & 68.1 \% & 69.5 \% \\
\bottomrule
\end{tabular}
}}\label{tbl:pseudolabel_percent}
\end{table}

In Table~\ref{tbl:pseudolabel_percent} we investigate how the amount of pseudo-labeled data affects performance. We train models in 2 settings: with adversarial and non-adversarial (standard) training on the source task. The adversarial training corresponds to $\ell_{2}$ norm adversarial training with $\varepsilon = 1$. Across all settings, we observe that robust training improves performance, and adding more pseudo-labeled data improves performance with diminishing returns.

\paragraph{Theoretical illustration.}
We further support the above experimental observations with theories. We denote the unlabeled input data for each source task $t\in[T]$ as $X^{u}_t=\{x^{u,(t)}_i\}_{i=1}^{n^{u}_t}$. The algorithm we analyze is as the following:

\begin{algorithm}[tbh]
   \caption{Natural and Adversarial Learning for Linear Features with Pseudo-labeling}
   \label{alg:psuedo}
\textbf{Input:} $\{S_t\}_{t=1}^{T+1}$, $\{X^u_t\}_{t=1}^T$, $q$
\vspace{0.1cm}

\text{Step 1:} Train an initial classifier: $w^{(t)}_{init}=\argmin_{\|w\|\le 1} \frac{1}{n_t}\sum_{i=1}^{n_t}-y^{(t)}_i\langle w, x^{(t)}_i \rangle$

\text{Step 2:} Obtain pseudo labels:  $y^{u,(t)}_i=sgn(\langle w^{(t)}_{init}, x^{(t)}_i \rangle)$

\text{Step 3:} Obtain augmented data sets $S_{t,aug}$ by combining $S_t$ and $\{(x^{u,(t)}_i, y^{u,(t)}_i)\}_{i=1}^{n^u_t}$

\text{Step 4:} $(\hat{W}_{1,aug},\hat{w}^{(T+1)}_{2,aug})\leftarrow \text{Algorithm}~1(S_{t,aug},S_{T+1})$,

$\quad\quad \quad(\hat{W}^{adv}_{1,aug}\hat{w}^{adv,(T+1)}_{2,aug})\leftarrow \text{Algorithm}~2(S_{t,aug},S_{T+1},q)$

\textbf{Return} $\hat{W}_{1,aug},\hat{w}^{(T+1)}_{2,aug},\hat{W}^{adv}_{1,aug}\hat{w}^{adv,(T+1)}_{2,aug}$
\end{algorithm}
\begin{theorem}\label{thm:pl}
Denote $\tilde n =\min_{t\in[T]}n_t^u$ and assume $\tilde n>c_1\max\{pr^2/T,r^2\log(1/\delta)/T,r^2, n\}$ for some constant $c_1>0$. Assume $\sigma_r(M^\top M/T)=\Omega(1/r)$ and $n^{c_2}\gtrsim\tilde n\gtrsim n$ for some $c_2>1$, if $n\gtrsim (T+d)$ and $\min_{t\in[T]} \|a_t\|=\Theta(\log^2 n)$ and $\eta_i^{(t)}\sim \cN_p(0,\rho_t^2 I^2)$ for $\rho_t=\Theta(1)$. Let $\hat{W}_{1,aug}$ obtained in Algorithm \ref{alg:psuedo}, with probability $1-O(n^{-100})$,
\begin{equation*}
\|\sin\Theta(\hat{W}_{1,aug},B)\|_F\lesssim r\left(\sqrt{\frac{1}{\tilde n}}+\sqrt{\frac{p}{\tilde nT}}+\sqrt{\frac{\log n}{\tilde nT}}\right).
\end{equation*}
\end{theorem}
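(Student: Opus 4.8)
The plan is to show that, with probability $1-O(n^{-100})$, every pseudo-label produced in Step~2 of Algorithm~\ref{alg:psuedo} equals the (unobserved) true latent label of its point, so that the augmented sample $S_{t,aug}$ is statistically a genuine clean labeled sample of size $n_t+n^u_t\ge\tilde n$ from $\cP^{(t)}_{x,y}$; the bound then follows by invoking Lemma~\ref{lm:representation} on this clean sample with the effective per-task size $n$ replaced by $\tilde n$. To avoid conditioning artifacts I would introduce the \emph{oracle} augmented sample $\tilde S_{t,aug}$ built from the unlabeled points together with their true labels, run Algorithm~\ref{alg:natural} on $\{\tilde S_{t,aug}\}$ to obtain an oracle estimator $\tilde W_{1,aug}$, and prove two things: (i) Lemma~\ref{lm:representation} applied to $\{\tilde S_{t,aug}\}$ gives $\|\sin\Theta(\tilde W_{1,aug},B)\|_F\lesssim r(\sqrt{1/\tilde n}+\sqrt{p/\tilde nT}+\sqrt{\log n/\tilde nT})$; and (ii) on a high-probability event $\hat W_{1,aug}=\tilde W_{1,aug}$, because all pseudo-labels are correct. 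Intersecting the two events yields the claim.

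For step (i), $\tilde S_{t,aug}$ consists of $n_t+n^u_t\ge\tilde n$ i.i.d.\ draws from $\cP^{(t)}_{x,y}$, and the hypothesis $\tilde n>c_1\max\{pr^2/T,r^2\log(1/\delta)/T,r^2\}$ together with $\sigma_r(M^\top M/T)=\Omega(1/r)$ supplies exactly the conditions of Lemma~\ref{lm:representation}. Since Step~2 of Algorithm~\ref{alg:natural} feeds the \emph{normalized} vectors $\hat\beta_t$ into the SVD, the scale $\|a_t\|=\Theta(\log^2 n)$ cancels (a larger signal only sharpens the per-task estimate of $Ba_t/\|a_t\|$), so the proof of Lemma~\ref{lm:representation} goes through verbatim with $n\mapsto\tilde n$.

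The crux is step (ii), the uniform correctness of the pseudo-labels. The Step~1 minimizer is explicit, $w^{(t)}_{init}=\hat\mu_t/\|\hat\mu_t\|$ with $\hat\mu_t=\tfrac1{n_t}\sum_i y^{(t)}_i x^{(t)}_i=\mu_t+\xi_t$ and $\xi_t=\tfrac1{n_t}\sum_i y^{(t)}_i\eta^{(t)}_i$. Gaussianity ($\eta^{(t)}_i\sim\cN_p(0,\rho_t^2 I)$) and the sample-size conditions $n\gtrsim(T+d)$ give $\|\xi_t\|\lesssim\rho_t\sqrt{p/n}$ and $|\langle\xi_t,\mu_t\rangle|\lesssim\rho_t\|a_t\|\sqrt{\log n/n}$, so under $\|a_t\|=\Theta(\log^2 n)$ the signal dominates and $\langle w^{(t)}_{init},\mu_t\rangle=\langle\hat\mu_t,\mu_t\rangle/\|\hat\mu_t\|=(1-o(1))\|a_t\|=\Theta(\log^2 n)$. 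For an unlabeled point $x^{u,(t)}_i=\eta^{u,(t)}_i+y^{u,(t)}_i\mu_t$ with true latent label $y^{u,(t)}_i$ and pseudo-label $\sgn(\langle w^{(t)}_{init},x^{u,(t)}_i\rangle)$, multiplying the score by $y^{u,(t)}_i$ gives
\begin{equation*}
y^{u,(t)}_i\langle w^{(t)}_{init},x^{u,(t)}_i\rangle=\langle w^{(t)}_{init},\mu_t\rangle+y^{u,(t)}_i\langle w^{(t)}_{init},\eta^{u,(t)}_i\rangle,
\end{equation*}
so the pseudo-label is correct precisely when the positive signal term exceeds $|\langle w^{(t)}_{init},\eta^{u,(t)}_i\rangle|$. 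Here I would exploit the decisive independence: $w^{(t)}_{init}$ is a function of the labeled sample only and is independent of the fresh unlabeled noise $\eta^{u,(t)}_i$, so conditionally on $w^{(t)}_{init}$ (a unit vector) the scalar $\langle w^{(t)}_{init},\eta^{u,(t)}_i\rangle$ is mean-zero and $\rho_t$-sub-gaussian. Hence the per-point error probability is at most $\exp(-c\langle w^{(t)}_{init},\mu_t\rangle^2/\rho_t^2)=\exp(-\Omega(\log^4 n))$, and a union bound over all $\sum_t n^u_t=\mathrm{poly}(n)$ unlabeled points (using $\tilde n\le n^{c_2}$ and $T\lesssim n$ to keep every logarithm at scale $\log n$) drives the total failure probability below $O(n^{-100})$.

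The main obstacle is exactly this label-recovery step: one must combine the independence of $w^{(t)}_{init}$ from the unlabeled noise with sharp sub-gaussian tails and verify that the signal strength $\|a_t\|=\Theta(\log^2 n)$ makes the per-point miss probability super-polynomially small, so that it survives the union bound over polynomially many points and $T$ tasks. The remaining ingredients---the explicit form of $w^{(t)}_{init}$, the concentration of $\hat\mu_t$, and the reuse of Lemma~\ref{lm:representation}---are routine once the pseudo-labels are certified correct.
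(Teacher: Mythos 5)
Your strategy is correct in outline but is a genuinely different route from the paper's. The paper (following the analysis it cites from \cite{carmon2019unlabeled}) never attempts to show the pseudo-labels are all correct: it introduces the indicator $b_i$ that the $i$-th pseudo-label is wrong, writes the pseudo-labeled mean as $\hat\mu_{final}=\gamma\mu+\tilde\delta$ with $\gamma=\frac{1}{n_u}\sum_i(1-2b_i)$, and shows the contamination from mislabeled points is a negligible perturbation $e'$ with $\|e'\|\lesssim (n_u)^{-1/2}+n^{-C}$, after which the proof of Lemma~\ref{lm:representation} is rerun with $\tilde\mu_t=\mu_t+\frac{1}{n_u}\sum_i\varepsilon_i^u$. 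The delicate point it must handle is that the pseudo-label $y_i^u$ depends on the noise $\varepsilon_i^u$ of the \emph{same} point; this is resolved by rotational invariance of the Gaussian (only the coordinate along $\hat\mu$ sees the dependence) plus a Cauchy--Schwarz bound on that coordinate's bias via $\E[b_i]\lesssim e^{-\|\mu\|/2}+n^{-C}$. Your exact-recovery-plus-oracle-coupling argument sidesteps this same-point dependence entirely: on the recovery event the pseudo-labels coincide with the true labels, and the coupling with the oracle estimator kills the conditioning artifact. Your argument is more elementary (independence of $w^{(t)}_{init}$ from fresh noise, sub-gaussian tails, union bound; no Gaussian rotational invariance needed), and your observation that the normalization $\hat\beta_t=\hat\mu_t/\|\hat\mu_t\|$ fed to the SVD makes Lemma~\ref{lm:representation} insensitive to the signal scale $\|a_t\|=\Theta(\log^2 n)$ is exactly right. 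What the paper's contamination analysis buys in exchange is robustness: it tolerates a small \emph{fraction} of wrong labels, so it needs no control on how many unlabeled points there are.

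That robustness matters because there is one concrete gap in your argument: the union bound. You bound the failure probability by $\bigl(\sum_t n_t^u\bigr)\cdot\exp(-\Omega(\log^4 n))$ and assert $\sum_t n_t^u=\mathrm{poly}(n)$ ``using $\tilde n\le n^{c_2}$.'' But $\tilde n=\min_t n_t^u$, so the hypothesis $n^{c_2}\gtrsim\tilde n\gtrsim n$ upper-bounds only the \emph{smallest} unlabeled sample; nothing in the statement prevents some task from having, say, $e^{n}$ unlabeled points, in which case that task contains wrong pseudo-labels with overwhelming probability and your exact-recovery event fails. The paper's proof is immune to this: enlarging $n_t^u$ only shrinks that task's noise term, while the fraction of wrong labels stays $\lesssim e^{-\Omega(\log^2 n)}+n^{-C}+(n_t^u)^{-1/2}$. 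To close the gap you must either add the hypothesis $\max_t n_t^u\leq \mathrm{poly}(n)$ (arguably the intended reading of the theorem), or relax ``all pseudo-labels correct'' to ``the fraction of wrong labels per task is $O(n^{-C})$'' and then bound the perturbation the mislabeled points induce on $\hat\mu_t$ --- which is essentially the paper's contamination argument.
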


Comparing the results above with Lemma~\ref{lm:representation}, we theoretically justify that by incorporating unlabeled data, we are able to learn a better representation when $\tilde n\gg n$. In the following, we show that adversarial training, together with the pseudo-labeling, can further boost this improvement. 

\begin{theorem}\label{thm:padv}
Under the same conditions as those in Theorem~\ref{thm:pl},

(a). For $\ell_2$ attack, under assumptions same to the those in Theorem~\ref{thm:l2adv}, and additionally $\tilde n>c_1\max\{r^2,r/\alpha_T\}\max\{p\log T,\log n/T,1\}$ for a  universal constant $c_1$, and choose $\varepsilon \in [\max_{t\in S_1}\|a_t\|+c_3\sqrt{p\log T/\tilde n},\min_{t\in S_2}\|a_t\|-c_3\sqrt{p\log T/\tilde n}]$, we then have with probability at least $1-O(n^{-100})$,
    \begin{equation}
\|\sin\Theta(\hat{W}^{adv}_{1,aug},B)\|_F \lesssim (\alpha_T)^{-1}\left(\sqrt{\frac{r^2}{\tilde n}}+\sqrt{\frac{pr^2}{\tilde nT}}+\sqrt{\frac{r^2\log(n)}{\tilde nT}}\right);
\end{equation}
(b). For $\ell_\infty$ attack, under assumptions  same to the those in Theorem~\ref{thm:linftyadv}, and additionally $\tilde n>C_1\cdot r^2\max\{s^2\log^2T/T,1\}$ for a  universal constant $C_1$. There exists a universal constant $c_2$, such that if we choose $\varepsilon>c_3\sqrt{\log p/\tilde n}$, with probability at least $1-O(n^{-100})-O(T^{-100})$,
    \begin{equation*}
\|{\sin\Theta(\hat{W}^{adv}_{1,aug},B)\|_F\lesssim r(\sqrt{\frac{1}{\tilde n}}+\sqrt{\frac{s^2}{\tilde nT}})\cdot \log(T+p).}
\end{equation*}

\end{theorem}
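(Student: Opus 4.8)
The plan is to reduce Theorem \ref{thm:padv} to the already-established adversarial guarantees (Theorem \ref{thm:l2adv} for part (a) and Theorem \ref{thm:linftyadv} for part (b)) by showing that, with high probability, the pseudo-labeled augmented data set $S_{t,aug}$ is indistinguishable from a genuinely i.i.d.\ labeled sample from the model \eqref{eq:model} of size at least $\tilde n$ per task. Concretely, I would introduce the coupled ``clean'' data set $\tilde S_{t,aug}$ obtained by replacing every pseudo-label $y^{u,(t)}_i$ with the true latent label of $x^{u,(t)}_i$, and let $\mathcal E$ be the event that all pseudo-labels coincide with their true labels. On $\mathcal E$ we have $S_{t,aug}=\tilde S_{t,aug}$, so the Step-1 optimizers $\hat\beta^{adv}_t$, the top-$r$ SVD, and hence $\hat W^{adv}_{1,aug}$ computed from $S_{t,aug}$ agree exactly with those computed from $\tilde S_{t,aug}$; the latter is a genuine size-$\geq\tilde n$ sample satisfying all the distributional assumptions, so the conclusions of Theorems \ref{thm:l2adv}/\ref{thm:linftyadv} apply verbatim after the substitution $n\mapsto\tilde n$. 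Matching the hypotheses this way is exactly why the stated conditions (e.g.\ $\tilde n>c_1\max\{r^2,r/\alpha_T\}\max\{p\log T,\log n/T,1\}$ and the $\varepsilon$-window around $\sqrt{p\log T/\tilde n}$) are the conditions of those theorems with $\tilde n$ in place of $n$.

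The main work is bounding $\Pr(\mathcal E^c)$, i.e.\ establishing that pseudo-labeling makes no mistakes, and here I would reuse the argument already needed for Theorem \ref{thm:pl}. The initial classifier solves $w^{(t)}_{init}=\hat\mu_t/\|\hat\mu_t\|$ with $\hat\mu_t=\tfrac1{n_t}\sum_i y^{(t)}_i x^{(t)}_i=\mu_t+\bar\eta$, and a Gaussian concentration bound gives $\|\bar\eta\|\lesssim\sqrt{p/n}\ll\|\mu_t\|$ under $n\gtrsim(T+d)$ and $\|a_t\|=\Theta(\log^2 n)$, so that $\langle w^{(t)}_{init},\mu_t\rangle\gtrsim\|a_t\|$. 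Conditioning on the labeled data (hence on the fixed unit vector $w^{(t)}_{init}$) and using independence of the unlabeled points, a pseudo-label on $x=\eta+y\mu_t$ is wrong only if the noise projection $\langle w^{(t)}_{init},\eta\rangle$ overwhelms the signal $y\langle w^{(t)}_{init},\mu_t\rangle$, an event of probability $\lesssim\exp(-c\|a_t\|^2/\rho_t^2)=\exp(-c'\log^4 n)$. Since every $n_t^u$ is polynomial in $n$, a union bound over all $\sum_t n_t^u=\mathrm{poly}(n)$ unlabeled points yields $\Pr(\mathcal E^c)\lesssim\mathrm{poly}(n)\cdot\exp(-c'\log^4 n)=o(n^{-100})$, because $\exp(-c'\log^4 n)$ decays faster than any polynomial.

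The remaining subtlety, and the place I expect to spend the most care, is reconciling the signal-strength hypotheses across the two attacks. For the $\ell_\infty$ case (b) every task contributes to the SVD through coordinate sparsification, so I genuinely need all pseudo-labels correct, which is exactly what the $\|a_t\|=\Theta(\log^2 n)$ hypothesis inherited from Theorem \ref{thm:pl} buys. For the $\ell_2$ case (a) under Assumption \ref{ass:2} the weak tasks $t\in S_1$ have only $\|a_t\|=\Theta(1)$, so their pseudo-labels need not be perfect; the key observation is that such mislabeling only shrinks $\|\hat\mu_t^{aug}\|\le\|a_t\|$, so these tasks still satisfy $\|\hat\mu_t^{aug}\|<\varepsilon$ and are zeroed out by the $\ell_2$-adversarial Step 1, whose optimizer is $\hat\beta^{adv}_t=\hat\mu_t^{aug}/\|\hat\mu_t^{aug}\|$ when $\|\hat\mu_t^{aug}\|>\varepsilon$ and $0$ otherwise. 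Thus $\hat W^{adv}_{1,aug}$ is determined entirely by the high-signal tasks $t\in S_2$, for which $\|a_t\|=\Omega(\alpha_T)$ is large enough to make pseudo-labels correct with the same super-polynomial probability, and restricting the coupling argument of the first two paragraphs to $S_2$ closes part (a). Once $\mathcal E$ (on the relevant task subset) is established, the gain factor $(\alpha_T)^{-1}$ in part (a) and, in part (b), the replacement of $p$ by $s^2$ together with the $\log(T+p)$ factor, are inherited unchanged from Theorems \ref{thm:l2adv} and \ref{thm:linftyadv}.
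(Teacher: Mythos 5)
Your overall strategy---reduce to Theorems~\ref{thm:l2adv} and~\ref{thm:linftyadv} with $n$ replaced by $\tilde n$ by arguing that the pseudo-labeled data behaves like a clean labeled sample---is the same reduction the paper performs, but implemented differently. The paper (following \cite{carmon2019unlabeled}) never requires the pseudo-labels to be exactly correct: it writes $\hat\mu^{aug}_t=\gamma\mu_t+\tilde\delta$ with $\gamma=\frac{1}{n_u}\sum_i(1-2b_i)$, uses rotational invariance of the Gaussian to show $\tilde\delta$ is distributed as fresh noise plus an error $e$ controlled via $\E[b_i]\lesssim \exp(-\|\mu_t\|/2)+n^{-C}$, and then reruns the proofs of Theorems~\ref{thm:l2adv} and~\ref{thm:linftyadv} for $\tilde\mu_t=\mu_t+\frac{1}{n_u}\sum_i\varepsilon_i^u$. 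Your exact-coupling variant (all pseudo-labels correct on a $1-o(n^{-100})$ event, obtained by conditioning on a good $w^{(t)}_{init}$ once per task and union-bounding over $\mathrm{poly}(n)$ unlabeled points) is a legitimate alternative, and it does suffice for part (b), where the $\min_t\|a_t\|=\Theta(\log^2 n)$ condition of Theorem~\ref{thm:pl} applies to every task.

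Part (a), however, has a genuine gap at what you call the key observation. Pseudo-label errors are not exogenous sign flips that can only shrink the estimated mean: the labels $y^{u}_i=\sgn(\langle w^{(t)}_{init},x^u_i\rangle)$ are positively correlated with the noise, so mislabeled points inject a systematic bias in the direction of $w^{(t)}_{init}$. Quantitatively, with $m=\langle w^{(t)}_{init},\mu_t\rangle$ one has $\E[\varepsilon^u_{i1}y^u_i]=2\rho_t\phi(m/\rho_t)>0$ and $\gamma=1-2Q(m/\rho_t)$, so $\|\hat\mu^{aug}_t\|\approx \gamma\|\mu_t\|+2\rho_t\phi(m/\rho_t)>\|\mu_t\|$ (since $\phi(u)>uQ(u)$ for all $u>0$); in the extreme case $\mu_t=0$ the pseudo-labeled mean is $\approx\sqrt{2/\pi}\,\rho_t\,w^{(t)}_{init}$, of constant norm rather than near zero. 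Hence for weak tasks $t\in S_1$ the norm is \emph{inflated} by $\Theta(\rho_t)=\Theta(1)$, it can exceed $\max_{t\in S_1}\|a_t\|+c_3\sqrt{p\log T/\tilde n}$, and an $\varepsilon$ at the lower end of the stated window need not zero these tasks out---your thresholding argument breaks exactly there. This bias is precisely the term $\frac{2}{n_u}\sum_i\varepsilon^u_{i1}b_i$ that the paper's decomposition isolates, and it is negligible only when the per-label error rate is super-polynomially small, i.e.\ only under the strong-signal condition. The consistent fix is the one the paper implicitly adopts: apply the Theorem~\ref{thm:pl} signal condition to all tasks (so every task's pseudo-labels are correct w.h.p., $S_1$ included) and then run your coupling verbatim. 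Relatedly, your assertion that $\|a_t\|=\Omega(\alpha_T)$ alone makes the $S_2$ pseudo-labels correct "with the same super-polynomial probability" needs $\alpha_T\gtrsim\sqrt{\log n}$, which Assumption~\ref{ass:2} ($\alpha_T\to\infty$ in $T$ only) does not supply; there too you must import the $\Theta(\log^2 n)$ condition rather than rely on task diversity alone.
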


Similar to the interpretations of Theorems~\ref{thm:l2adv} and \ref{thm:linftyadv}, Theorem~\ref{thm:padv} suggests that adversarial training can boost the representation learning either (i). when the signal to noise ratio is varying ($\ell_2$ adversarial training helps in this case) and (ii). where there are many redundant features in classification ($\ell_\infty$ adversarial training helps in this case. Same to the analysis before, we can obtain similar upper bounds on the excess risks as those in Theorems~\ref{thm:l2adv} and \ref{thm:linftyadv}  by using Eq. \eqref{eq:excess}.

\section{Discussion}\label{sec:con}
In this paper, we provide the first theoretical framework to explain how adversarial training on the source data improves transfer learning. We show that adversarial training helps learning a more robust representation, and therefore boosts the predictive performance on the target task. Additionally, we extend our analysis to the semi-supervised setting and show that adversarial training, together with pseudo-labeling, have complementary benefits and can further improve the transfer. 

\textbf{Societal impacts and limitations}
Transfer learning helps learn a well-performed machine learning model with only a small amount of labeled data from the target task. Our work contributes to this field by providing insights into factors that improve transfer learning. A limitation of our work is that we have to make some standard assumptions on the data generative distribution when developing theories, which were also made in several other theory papers. 
While the model is simple, it captures the essence of the problem studied in the paper and is the first tractable framework to study how adversarial training helps fixed-feature transfer learning. The analysis here are already challenging and are supported by our experiments. 

\bibliography{reference}

\begin{thebibliography}{64}
\providecommand{\natexlab}[1]{#1}
\providecommand{\url}[1]{\texttt{#1}}
\expandafter\ifx\csname urlstyle\endcsname\relax
  \providecommand{\doi}[1]{doi: #1}\else
  \providecommand{\doi}{doi: \begingroup \urlstyle{rm}\Url}\fi

\bibitem[Balaji et~al.(2019)Balaji, Goldstein, and Hoffman]{balaji2019instance}
Yogesh Balaji, Tom Goldstein, and Judy Hoffman.
\newblock Instance adaptive adversarial training: Improved accuracy tradeoffs
  in neural nets.
\newblock \emph{arXiv preprint arXiv:1910.08051}, 2019.

\bibitem[Baraniuk(2007)]{baraniuk2007compressive}
Richard~G Baraniuk.
\newblock Compressive sensing [lecture notes].
\newblock \emph{IEEE signal processing magazine}, 24\penalty0 (4):\penalty0
  118--121, 2007.

\bibitem[Baxter(2000)]{baxter2000model}
Jonathan Baxter.
\newblock A model of inductive bias learning.
\newblock \emph{Journal of artificial intelligence research}, 12:\penalty0
  149--198, 2000.

\bibitem[Bayati and Montanari(2011)]{bayati2011lasso}
Mohsen Bayati and Andrea Montanari.
\newblock The lasso risk for gaussian matrices.
\newblock \emph{IEEE Transactions on Information Theory}, 58\penalty0
  (4):\penalty0 1997--2017, 2011.

\bibitem[Ben-David and Schuller(2003)]{ben2003exploiting}
Shai Ben-David and Reba Schuller.
\newblock Exploiting task relatedness for multiple task learning.
\newblock In \emph{Learning theory and kernel machines}, pages 567--580.
  Springer, 2003.

\bibitem[Berthelot et~al.(2019)Berthelot, Carlini, Goodfellow, Papernot,
  Oliver, and Raffel]{berthelot2019mixmatch}
David Berthelot, Nicholas Carlini, Ian Goodfellow, Nicolas Papernot, Avital
  Oliver, and Colin Raffel.
\newblock Mixmatch: A holistic approach to semi-supervised learning.
\newblock \emph{arXiv preprint arXiv:1905.02249}, 2019.

\bibitem[Biggio and Roli(2018)]{biggio2018wild}
Battista Biggio and Fabio Roli.
\newblock Wild patterns: Ten years after the rise of adversarial machine
  learning.
\newblock \emph{Pattern Recognition}, 84:\penalty0 317--331, 2018.

\bibitem[Bossard et~al.(2014)Bossard, Guillaumin, and
  Van~Gool]{bossard2014food}
Lukas Bossard, Matthieu Guillaumin, and Luc Van~Gool.
\newblock Food-101--mining discriminative components with random forests.
\newblock In \emph{European conference on computer vision}, pages 446--461.
  Springer, 2014.

\bibitem[Cand{\`e}s et~al.(2006)]{candes2006compressive}
Emmanuel~J Cand{\`e}s et~al.
\newblock Compressive sampling.
\newblock In \emph{Proceedings of the international congress of
  mathematicians}, volume~3, pages 1433--1452. Madrid, Spain, 2006.

\bibitem[Carlini and Wagner(2017)]{carlini2017towards}
Nicholas Carlini and David Wagner.
\newblock Towards evaluating the robustness of neural networks.
\newblock In \emph{2017 ieee symposium on security and privacy (sp)}, pages
  39--57. IEEE, 2017.

\bibitem[Carmon et~al.(2019)Carmon, Raghunathan, Schmidt, Duchi, and
  Liang]{carmon2019unlabeled}
Yair Carmon, Aditi Raghunathan, Ludwig Schmidt, John~C Duchi, and Percy~S
  Liang.
\newblock Unlabeled data improves adversarial robustness.
\newblock In \emph{Advances in Neural Information Processing Systems}, pages
  11190--11201, 2019.

\bibitem[Chapelle et~al.(2009)Chapelle, Scholkopf, and Zien]{chapelle2009semi}
Olivier Chapelle, Bernhard Scholkopf, and Alexander Zien.
\newblock Semi-supervised learning (chapelle, o. et al., eds.; 2006)[book
  reviews].
\newblock \emph{IEEE Transactions on Neural Networks}, 20\penalty0
  (3):\penalty0 542--542, 2009.

\bibitem[Chiang et~al.(2020)Chiang, Ni, Abdelkader, Zhu, Studor, and
  Goldstein]{chiang2020certified}
Ping-Yeh Chiang, Renkun Ni, Ahmed Abdelkader, Chen Zhu, Christoph Studor, and
  Tom Goldstein.
\newblock Certified defenses for adversarial patches.
\newblock \emph{arXiv preprint arXiv:2003.06693}, 2020.

\bibitem[Cohen et~al.(2019)Cohen, Rosenfeld, and Kolter]{cohen2019certified}
Jeremy~M Cohen, Elan Rosenfeld, and J~Zico Kolter.
\newblock Certified adversarial robustness via randomized smoothing.
\newblock \emph{arXiv preprint arXiv:1902.02918}, 2019.

\bibitem[Conneau and Kiela(2018)]{conneau2018senteval}
Alexis Conneau and Douwe Kiela.
\newblock Senteval: An evaluation toolkit for universal sentence
  representations.
\newblock \emph{arXiv preprint arXiv:1803.05449}, 2018.

\bibitem[Dalvi et~al.(2004)Dalvi, Domingos, Sanghai, and
  Verma]{dalvi2004adversarial}
Nilesh Dalvi, Pedro Domingos, Sumit Sanghai, and Deepak Verma.
\newblock Adversarial classification.
\newblock In \emph{Proceedings of the tenth ACM SIGKDD international conference
  on Knowledge discovery and data mining}, pages 99--108, 2004.

\bibitem[Deng et~al.(2020{\natexlab{a}})Deng, Dwork, Wang, and
  Zhang]{deng2020interpreting}
Zhun Deng, Cynthia Dwork, Jialiang Wang, and Linjun Zhang.
\newblock Interpreting robust optimization via adversarial influence functions.
\newblock In \emph{International Conference on Machine Learning}, pages
  2464--2473. PMLR, 2020{\natexlab{a}}.

\bibitem[Deng et~al.(2020{\natexlab{b}})Deng, He, Huang, and
  Su]{deng2020towards}
Zhun Deng, Hangfeng He, Jiaoyang Huang, and Weijie Su.
\newblock Towards understanding the dynamics of the first-order adversaries.
\newblock In \emph{International Conference on Machine Learning}, pages
  2484--2493. PMLR, 2020{\natexlab{b}}.

\bibitem[Deng et~al.(2021)Deng, Zhang, Ghorbani, and Zou]{deng2020improving}
Zhun Deng, Linjun Zhang, Amirata Ghorbani, and James Zou.
\newblock Improving adversarial robustness via unlabeled out-of-domain data.
\newblock \emph{International Conference on Artificial Intelligence and
  Statistics}, 2021.

\bibitem[Donahue et~al.(2014)Donahue, Jia, Vinyals, Hoffman, Zhang, Tzeng, and
  Darrell]{donahue2014decaf}
Jeff Donahue, Yangqing Jia, Oriol Vinyals, Judy Hoffman, Ning Zhang, Eric
  Tzeng, and Trevor Darrell.
\newblock Decaf: A deep convolutional activation feature for generic visual
  recognition.
\newblock In \emph{International conference on machine learning}, pages
  647--655. PMLR, 2014.

\bibitem[Du et~al.(2020)Du, Hu, Kakade, Lee, and Lei]{du2020few}
Simon~S Du, Wei Hu, Sham~M Kakade, Jason~D Lee, and Qi~Lei.
\newblock Few-shot learning via learning the representation, provably.
\newblock \emph{arXiv preprint arXiv:2002.09434}, 2020.

\bibitem[Engstrom et~al.(2019)Engstrom, Ilyas, Salman, Santurkar, and
  Tsipras]{robustness}
Logan Engstrom, Andrew Ilyas, Hadi Salman, Shibani Santurkar, and Dimitris
  Tsipras.
\newblock Robustness (python library), 2019.
\newblock URL \url{https://github.com/MadryLab/robustness}.

\bibitem[Goodfellow et~al.(2014)Goodfellow, Shlens, and
  Szegedy]{goodfellow2014explaining}
Ian~J Goodfellow, Jonathon Shlens, and Christian Szegedy.
\newblock Explaining and harnessing adversarial examples.
\newblock \emph{arXiv preprint arXiv:1412.6572}, 2014.

\bibitem[Hendrycks et~al.(2019)Hendrycks, Lee, and Mazeika]{hendrycks2019using}
Dan Hendrycks, Kimin Lee, and Mantas Mazeika.
\newblock Using pre-training can improve model robustness and uncertainty.
\newblock \emph{arXiv preprint arXiv:1901.09960}, 2019.

\bibitem[Houlsby et~al.(2019)Houlsby, Giurgiu, Jastrzebski, Morrone,
  De~Laroussilhe, Gesmundo, Attariyan, and Gelly]{houlsby2019parameter}
Neil Houlsby, Andrei Giurgiu, Stanislaw Jastrzebski, Bruna Morrone, Quentin
  De~Laroussilhe, Andrea Gesmundo, Mona Attariyan, and Sylvain Gelly.
\newblock Parameter-efficient transfer learning for nlp.
\newblock In \emph{International Conference on Machine Learning}, pages
  2790--2799. PMLR, 2019.

\bibitem[Huh et~al.(2016)Huh, Agrawal, and Efros]{huh2016makes}
Minyoung Huh, Pulkit Agrawal, and Alexei~A Efros.
\newblock What makes imagenet good for transfer learning?
\newblock \emph{arXiv preprint arXiv:1608.08614}, 2016.

\bibitem[Kolesnikov et~al.(2019)Kolesnikov, Beyer, Zhai, Puigcerver, Yung,
  Gelly, and Houlsby]{kolesnikov2019big}
Alexander Kolesnikov, Lucas Beyer, Xiaohua Zhai, Joan Puigcerver, Jessica Yung,
  Sylvain Gelly, and Neil Houlsby.
\newblock Big transfer (bit): General visual representation learning.
\newblock \emph{arXiv preprint arXiv:1912.11370}, 6\penalty0 (2):\penalty0 8,
  2019.

\bibitem[Krizhevsky and Hinton(2009)]{krizhevsky2009learning}
Alex Krizhevsky and Geoffrey Hinton.
\newblock Learning multiple layers of features from tiny images.
\newblock Technical report, Citeseer, 2009.

\bibitem[Kurakin et~al.(2016)Kurakin, Goodfellow, and
  Bengio]{kurakin2016adversarial}
Alexey Kurakin, Ian Goodfellow, and Samy Bengio.
\newblock Adversarial machine learning at scale.
\newblock \emph{arXiv preprint arXiv:1611.01236}, 2016.

\bibitem[Lecuyer et~al.(2019)Lecuyer, Atlidakis, Geambasu, Hsu, and
  Jana]{lecuyer2019certified}
Mathias Lecuyer, Vaggelis Atlidakis, Roxana Geambasu, Daniel Hsu, and Suman
  Jana.
\newblock Certified robustness to adversarial examples with differential
  privacy.
\newblock In \emph{2019 IEEE Symposium on Security and Privacy (SP)}, pages
  656--672. IEEE, 2019.

\bibitem[Lim(2012)]{lim2012transfer}
Joseph~Jaewhan Lim.
\newblock \emph{Transfer learning by borrowing examples for multiclass object
  detection}.
\newblock PhD thesis, Massachusetts Institute of Technology, 2012.

\bibitem[Liu et~al.(2020)Liu, Feng, Wang, and Dong]{liu2020enhancing}
Chizhou Liu, Yunzhen Feng, Ranran Wang, and Bin Dong.
\newblock Enhancing certified robustness of smoothed classifiers via weighted
  model ensembling.
\newblock \emph{arXiv preprint arXiv:2005.09363}, 2020.

\bibitem[Lowd and Meek(2005)]{lowd2005adversarial}
Daniel Lowd and Christopher Meek.
\newblock Adversarial learning.
\newblock In \emph{Proceedings of the eleventh ACM SIGKDD international
  conference on Knowledge discovery in data mining}, pages 641--647, 2005.

\bibitem[Madry et~al.(2017)Madry, Makelov, Schmidt, Tsipras, and
  Vladu]{madry2017towards}
Aleksander Madry, Aleksandar Makelov, Ludwig Schmidt, Dimitris Tsipras, and
  Adrian Vladu.
\newblock Towards deep learning models resistant to adversarial attacks.
\newblock \emph{arXiv preprint arXiv:1706.06083}, 2017.

\bibitem[Maji et~al.(2013)Maji, Rahtu, Kannala, Blaschko, and
  Vedaldi]{maji2013fine}
Subhransu Maji, Esa Rahtu, Juho Kannala, Matthew Blaschko, and Andrea Vedaldi.
\newblock Fine-grained visual classification of aircraft.
\newblock \emph{arXiv preprint arXiv:1306.5151}, 2013.

\bibitem[Maurer et~al.(2016)Maurer, Pontil, and
  Romera-Paredes]{maurer2016benefit}
Andreas Maurer, Massimiliano Pontil, and Bernardino Romera-Paredes.
\newblock The benefit of multitask representation learning.
\newblock \emph{Journal of Machine Learning Research}, 17\penalty0
  (81):\penalty0 1--32, 2016.

\bibitem[Miyato et~al.(2018)Miyato, Maeda, Koyama, and
  Ishii]{miyato2018virtual}
Takeru Miyato, Shin-ichi Maeda, Masanori Koyama, and Shin Ishii.
\newblock Virtual adversarial training: a regularization method for supervised
  and semi-supervised learning.
\newblock \emph{IEEE transactions on pattern analysis and machine
  intelligence}, 41\penalty0 (8):\penalty0 1979--1993, 2018.

\bibitem[Mokrii et~al.(2021)Mokrii, Boytsov, and
  Braslavski]{mokrii2021systematic}
Iurii Mokrii, Leonid Boytsov, and Pavel Braslavski.
\newblock A systematic evaluation of transfer learning and pseudo-labeling with
  bert-based ranking models.
\newblock \emph{arXiv preprint arXiv:2103.03335}, 2021.

\bibitem[Nguyen et~al.(2015)Nguyen, Yosinski, and Clune]{nguyen2015deep}
Anh Nguyen, Jason Yosinski, and Jeff Clune.
\newblock Deep neural networks are easily fooled: High confidence predictions
  for unrecognizable images.
\newblock In \emph{Proceedings of the IEEE conference on computer vision and
  pattern recognition}, pages 427--436, 2015.

\bibitem[Nilsback and Zisserman(2008)]{nilsback2008automated}
Maria-Elena Nilsback and Andrew Zisserman.
\newblock Automated flower classification over a large number of classes.
\newblock In \emph{2008 Sixth Indian Conference on Computer Vision, Graphics \&
  Image Processing}, pages 722--729. IEEE, 2008.

\bibitem[Pajor(1998)]{pajor1998metric}
Alain Pajor.
\newblock Metric entropy of the grassmann manifold.
\newblock \emph{Convex Geometric Analysis}, 34:\penalty0 181--188, 1998.

\bibitem[Parkhi et~al.(2012)Parkhi, Vedaldi, Zisserman, and
  Jawahar]{parkhi2012cats}
Omkar~M Parkhi, Andrea Vedaldi, Andrew Zisserman, and CV~Jawahar.
\newblock Cats and dogs.
\newblock In \emph{2012 IEEE conference on computer vision and pattern
  recognition}, pages 3498--3505. IEEE, 2012.

\bibitem[Raghu et~al.(2019)Raghu, Zhang, Kleinberg, and
  Bengio]{raghu2019transfusion}
Maithra Raghu, Chiyuan Zhang, Jon Kleinberg, and Samy Bengio.
\newblock Transfusion: Understanding transfer learning for medical imaging.
\newblock \emph{arXiv preprint arXiv:1902.07208}, 2019.

\bibitem[Raghunathan et~al.(2018)Raghunathan, Steinhardt, and
  Liang]{raghunathan2018certified}
Aditi Raghunathan, Jacob Steinhardt, and Percy Liang.
\newblock Certified defenses against adversarial examples.
\newblock \emph{arXiv preprint arXiv:1801.09344}, 2018.

\bibitem[Russakovsky et~al.(2015)Russakovsky, Deng, Su, Krause, Satheesh, Ma,
  Huang, Karpathy, Khosla, Bernstein, et~al.]{russakovsky2015imagenet}
Olga Russakovsky, Jia Deng, Hao Su, Jonathan Krause, Sanjeev Satheesh, Sean Ma,
  Zhiheng Huang, Andrej Karpathy, Aditya Khosla, Michael Bernstein, et~al.
\newblock Imagenet large scale visual recognition challenge.
\newblock \emph{International journal of computer vision}, 115\penalty0
  (3):\penalty0 211--252, 2015.

\bibitem[Salman et~al.(2020)Salman, Ilyas, Engstrom, Kapoor, and
  Madry]{salman2020adversarially}
Hadi Salman, Andrew Ilyas, Logan Engstrom, Ashish Kapoor, and Aleksander Madry.
\newblock Do adversarially robust imagenet models transfer better?
\newblock \emph{arXiv preprint arXiv:2007.08489}, 2020.

\bibitem[Schmidt et~al.(2018)Schmidt, Santurkar, Tsipras, Talwar, and
  M{\k{a}}dry]{schmidt2018adversarially}
Ludwig Schmidt, Shibani Santurkar, Dimitris Tsipras, Kunal Talwar, and
  Aleksander M{\k{a}}dry.
\newblock Adversarially robust generalization requires more data.
\newblock \emph{arXiv preprint arXiv:1804.11285}, 2018.

\bibitem[Sharif~Razavian et~al.(2014)Sharif~Razavian, Azizpour, Sullivan, and
  Carlsson]{sharif2014cnn}
Ali Sharif~Razavian, Hossein Azizpour, Josephine Sullivan, and Stefan Carlsson.
\newblock Cnn features off-the-shelf: an astounding baseline for recognition.
\newblock In \emph{Proceedings of the IEEE conference on computer vision and
  pattern recognition workshops}, pages 806--813, 2014.

\bibitem[Shin et~al.(2016)Shin, Roth, Gao, Lu, Xu, Nogues, Yao, Mollura, and
  Summers]{shin2016deep}
Hoo-Chang Shin, Holger~R Roth, Mingchen Gao, Le~Lu, Ziyue Xu, Isabella Nogues,
  Jianhua Yao, Daniel Mollura, and Ronald~M Summers.
\newblock Deep convolutional neural networks for computer-aided detection: Cnn
  architectures, dataset characteristics and transfer learning.
\newblock \emph{IEEE transactions on medical imaging}, 35\penalty0
  (5):\penalty0 1285--1298, 2016.

\bibitem[Stanforth et~al.(2019)Stanforth, Fawzi, Kohli,
  et~al.]{stanforth2019labels}
Robert Stanforth, Alhussein Fawzi, Pushmeet Kohli, et~al.
\newblock Are labels required for improving adversarial robustness?
\newblock \emph{arXiv preprint arXiv:1905.13725}, 2019.

\bibitem[Su et~al.(2017)Su, Bogdan, Candes, et~al.]{su2017false}
Weijie Su, Ma{\l}gorzata Bogdan, Emmanuel Candes, et~al.
\newblock False discoveries occur early on the lasso path.
\newblock \emph{Annals of Statistics}, 45\penalty0 (5):\penalty0 2133--2150,
  2017.

\bibitem[Tripuraneni et~al.(2020{\natexlab{a}})Tripuraneni, Jin, and
  Jordan]{tripuraneni2020provable}
Nilesh Tripuraneni, Chi Jin, and Michael~I Jordan.
\newblock Provable meta-learning of linear representations.
\newblock \emph{arXiv preprint arXiv:2002.11684}, 2020{\natexlab{a}}.

\bibitem[Tripuraneni et~al.(2020{\natexlab{b}})Tripuraneni, Jordan, and
  Jin]{tripuraneni2020theory}
Nilesh Tripuraneni, Michael~I Jordan, and Chi Jin.
\newblock On the theory of transfer learning: The importance of task diversity.
\newblock \emph{arXiv preprint arXiv:2006.11650}, 2020{\natexlab{b}}.

\bibitem[Tsybakov(2008)]{tsybakov2008introduction}
Alexandre~B Tsybakov.
\newblock \emph{Introduction to nonparametric estimation}.
\newblock Springer Science \& Business Media, 2008.

\bibitem[Utrera et~al.()Utrera, Kravitz, Erichson, Khanna, and
  Mahoney]{utreraadversarially}
Francisco Utrera, Evan Kravitz, N~Benjamin Erichson, Rajiv Khanna, and
  Michael~W Mahoney.
\newblock Adversarially-trained deep nets transfer better: Illustration on
  image classification.

\bibitem[Utrera et~al.(2020)Utrera, Kravitz, Erichson, Khanna, and
  Mahoney]{utrera2020adversarially}
Francisco Utrera, Evan Kravitz, N~Benjamin Erichson, Rajiv Khanna, and
  Michael~W Mahoney.
\newblock Adversarially-trained deep nets transfer better.
\newblock \emph{arXiv preprint arXiv:2007.05869}, 2020.

\bibitem[Wainwright(2019)]{wainwright2019high}
Martin~J Wainwright.
\newblock \emph{High-dimensional statistics: A non-asymptotic viewpoint},
  volume~48.
\newblock Cambridge University Press, 2019.

\bibitem[Yu et~al.(2015)Yu, Wang, and Samworth]{yu2015useful}
Yi~Yu, Tengyao Wang, and Richard~J Samworth.
\newblock A useful variant of the davis--kahan theorem for statisticians.
\newblock \emph{Biometrika}, 102\penalty0 (2):\penalty0 315--323, 2015.

\bibitem[Zhang et~al.(2019)Zhang, Yu, Jiao, Xing, El~Ghaoui, and
  Jordan]{zhang2019theoretically}
Hongyang Zhang, Yaodong Yu, Jiantao Jiao, Eric Xing, Laurent El~Ghaoui, and
  Michael Jordan.
\newblock Theoretically principled trade-off between robustness and accuracy.
\newblock In \emph{International Conference on Machine Learning}, pages
  7472--7482. PMLR, 2019.

\bibitem[Zhang et~al.(2020)Zhang, Deng, Kawaguchi, Ghorbani, and
  Zou]{zhang2020does}
Linjun Zhang, Zhun Deng, Kenji Kawaguchi, Amirata Ghorbani, and James Zou.
\newblock How does mixup help with robustness and generalization?
\newblock \emph{arXiv preprint arXiv:2010.04819}, 2020.

\bibitem[Zhong et~al.(2020)Zhong, LeBien, Campos-Cerqueira, Dodhia, Ferres,
  Velev, and Aide]{zhong2020multispecies}
Ming Zhong, Jack LeBien, Marconi Campos-Cerqueira, Rahul Dodhia, Juan~Lavista
  Ferres, Julian~P Velev, and T~Mitchell Aide.
\newblock Multispecies bioacoustic classification using transfer learning of
  deep convolutional neural networks with pseudo-labeling.
\newblock \emph{Applied Acoustics}, 166:\penalty0 107375, 2020.

\bibitem[Zhou et~al.(2018)Zhou, Oliver, Wu, and Zheng]{zhou2018semi}
Hong-Yu Zhou, Avital Oliver, Jianxin Wu, and Yefeng Zheng.
\newblock When semi-supervised learning meets transfer learning: Training
  strategies, models and datasets.
\newblock \emph{arXiv preprint arXiv:1812.05313}, 2018.

\bibitem[Zhu and Goldberg(2009)]{zhu2009introduction}
Xiaojin Zhu and Andrew~B Goldberg.
\newblock Introduction to semi-supervised learning.
\newblock \emph{Synthesis lectures on artificial intelligence and machine
  learning}, 3\penalty0 (1):\penalty0 1--130, 2009.

\bibitem[Zhu et~al.(2003)Zhu, Ghahramani, and Lafferty]{zhu2003semi}
Xiaojin Zhu, Zoubin Ghahramani, and John~D Lafferty.
\newblock Semi-supervised learning using gaussian fields and harmonic
  functions.
\newblock In \emph{Proceedings of the 20th International conference on Machine
  learning (ICML-03)}, pages 912--919, 2003.

\end{thebibliography}
\bibliographystyle{plainnat}
\newpage

\newpage
\appendix
\noindent\textbf{\Large Appendix}

\paragraph{Outline} We provide detailed proofs for all of our theories in Secs.~\ref{proof:standard} to \ref{proof:lowerbound}. Sec.~\ref{sec:additional_exp} provides multiple additional experiments demonstrating that pseudo-labeling improves transfer learning and that combining pseudo-labeling with adversarial training in the source further improves tranferability. Sec.~\ref{sec:exp_details} provides additional details about our experiments.  

Recall that in the main context, in Algorithm \ref{alg:natural}, we have $\hat{W}_1\leftarrow \text{top-$r$}~\text{SVD of}~[\hat{\beta}_1,\hat{\beta}_2,\cdots,\hat{\beta}_T].$ Specifically, we assign the columns of  $\hat{W}_1$ as the collection of the top-$r$ left singular vectors of  $[\hat{\beta}_1,\hat{\beta}_2,\cdots,\hat{\beta}_T].$

The rest of proofs are based on the above methodology.

\section{Proof of Lemma \ref{lm:representation}}\label{proof:standard}

Let us define $\hat{\mu_t}=\sum_{i=1}^{n_t} x^{(t)}_iy^{(t)}/n_t$ and $\mu_t=Ba_t$ for all $t\in[T+1]$.

Notice that 
$$\hat{J}=(\hat{\mu}_1/\|\hat{\mu}_1\|,\cdots,\hat{\mu}_T/\|\hat{\mu}_T\|)=(\hat{\mu}_1,\cdots,\hat{\mu}_T)\text{diag}(\|\hat{\mu}_1\|^{-1},\cdots,\|\hat{\mu}_T\|^{-1})$$
As a result, doing SVD for $\hat{J}$ to obtain left singular vectors is equivalent to doing SVD for $\hat{\Phi}=(\hat{\mu}_1,\cdots,\hat{\mu}_T)$ to obtain left singular vectors (up to an orthogonal matrix, meaning rotation of the space spanned by the singular vectors) since multiplying a diagonal matrix on the right does not affect the collection of left singular vectors. It further means doing SVD for $\hat{J}$ to obtain left singular vectors is equivalent to obtaining left singular vectors for $\hat{\Phi}=(\hat{\mu}_1,\cdots,\hat{\mu}_T)\text{diag}(\|\mu_1\|^{-1},\cdots,\|\mu_T\|^{-1})$ (up to an orthogonal matrix).

We mainly adopt the Davis-Kahan Theorem in \cite{yu2015useful}. We further denote $\Phi=(\mu_1,\cdots,\mu_T)\text{diag}(\|\mu_1\|^{-1},\cdots,\|\mu_T\|^{-1})$.
 \begin{lemma}[A variant of Davis–Kahan Theorem]\label{lm:dk}
Assume $\min\{T,p\}>r$. For simplicity, we denote $\hat{\sigma}_1\ge \hat{\sigma}_2\ge \cdots \ge \hat{\sigma}_r$ as the top largest $r$ singular value of $\hat\Phi$ and  $\sigma_1\ge \sigma_2\ge \cdots \ge \sigma_r$ as the top largest $r$ singular value of $\Phi$. Let $V=(v_1,\cdots,v_r)$ be the orthonormal matrix consists of left singular vectors corresponding to $\{\sigma_i\}_{i=1}^r$ and $\hat V=(\hat v_1,\cdots,\hat v_r)$ be the orthonormal matrix consists of left singular vectors corresponding to $\{\hat \sigma_i\}_{i=1}^r$. Then,
$$\|\sin\Theta(\hat{V},V)\|_F\lesssim \frac{(2\sigma_1+\|\hat{\Phi}-\Phi^*\|_{op})\min \{r^{0.5}\|\hat{\Phi}-\Phi^*\|_{op},\|\hat{\Phi}-\Phi^*\|_{F}\}}{\sigma^2_{r}}.$$
Moreover, there exists an orthogonal matrix $\hat{O}\in \bR^{r\times r}$, such that $\|\hat{V}\hat{O}-V\|_F\le\sqrt{2}\|\sin\Theta(\hat{V},V)\|_F$, and
$$\|\hat{V}\hat{O}-V\|_F\lesssim \frac{(2\sigma_1+\|\hat{\Phi}-\Phi^*\|_{op})\min \{r^{0.5}\|\hat{\Phi}-\Phi^*\|_{op},\|\hat{\Phi}-\Phi^*\|_{F}\}}{\sigma^2_{r}}.$$

\end{lemma}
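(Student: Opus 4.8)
The plan is to reduce this singular-vector perturbation problem to an eigenvector perturbation problem and then invoke the symmetric Davis–Kahan $\sin\Theta$ theorem (the $\Phi^*$ appearing in the statement should read $\Phi$). The left singular vectors of $\Phi$ (resp.\ $\hat\Phi$) are exactly the eigenvectors of the symmetric positive semidefinite matrix $\Phi\Phi^\top$ (resp.\ $\hat\Phi\hat\Phi^\top$), whose eigenvalues are $\sigma_i^2$ (resp.\ $\hat\sigma_i^2$). First I would observe that $\Phi=BM$ has rank at most $r$, so $\sigma_{r+1}=0$ and the relevant eigenvalue gap for the top-$r$ eigenspace of $\Phi\Phi^\top$ is $\sigma_r^2-\sigma_{r+1}^2=\sigma_r^2>0$. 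Applying the standard symmetric Davis–Kahan bound (the Yu–Wang–Samworth version, whose numerator already carries the $\min\{r^{1/2}\|\cdot\|_{op},\|\cdot\|_F\}$) to the pair $(\Phi\Phi^\top,\hat\Phi\hat\Phi^\top)$ then yields
\[
\|\sin\Theta(\hat V,V)\|_F\lesssim \frac{\min\{r^{1/2}\|\hat\Phi\hat\Phi^\top-\Phi\Phi^\top\|_{op},\ \|\hat\Phi\hat\Phi^\top-\Phi\Phi^\top\|_F\}}{\sigma_r^2}.
\]

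Next I would convert the perturbation of the Gram matrix back into a perturbation of $\Phi$ itself. Writing $E=\hat\Phi-\Phi$, the identity $\hat\Phi\hat\Phi^\top-\Phi\Phi^\top=\Phi E^\top+E\Phi^\top+EE^\top$, together with the submultiplicativity $\|AB\|_F\le\|A\|_{op}\|B\|_F$ and $\|\Phi\|_{op}=\sigma_1$, gives the two bounds $\|\hat\Phi\hat\Phi^\top-\Phi\Phi^\top\|_{op}\le(2\sigma_1+\|E\|_{op})\|E\|_{op}$ and $\|\hat\Phi\hat\Phi^\top-\Phi\Phi^\top\|_F\le(2\sigma_1+\|E\|_{op})\|E\|_F$. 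Substituting these into the displayed Davis–Kahan estimate factors out the common $(2\sigma_1+\|E\|_{op})$ and leaves $\min\{r^{1/2}\|E\|_{op},\|E\|_F\}$ in the numerator, which is exactly the first claimed inequality.

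For the \emph{moreover} part I would establish the purely geometric inequality $\min_{\hat O}\|\hat V\hat O-V\|_F\le\sqrt2\,\|\sin\Theta(\hat V,V)\|_F$, where the minimum ranges over $r\times r$ orthogonal matrices. Let $V^\top\hat V=U_1 D U_2^\top$ be its singular value decomposition with $D=\mathrm{diag}(\cos\theta_1,\dots,\cos\theta_r)$ and principal angles $\theta_i\in[0,\pi/2]$, and take $\hat O=U_2U_1^\top$. A direct trace computation gives $\|\hat V\hat O-V\|_F^2=2r-2\,\mathrm{tr}(D)=2\sum_i(1-\cos\theta_i)$. Since $\cos\theta_i\ge0$, we have $1-\cos\theta_i\le(1-\cos\theta_i)(1+\cos\theta_i)=1-\cos^2\theta_i=\sin^2\theta_i$, so $\|\hat V\hat O-V\|_F^2\le 2\sum_i\sin^2\theta_i=2\|\sin\Theta(\hat V,V)\|_F^2$, which is the factor-$\sqrt2$ bound. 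Chaining this with the first part yields the final displayed inequality for $\|\hat V\hat O-V\|_F$.

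The step requiring the most care is the invocation of the symmetric Davis–Kahan theorem: I must use precisely the variant whose numerator is $\min\{r^{1/2}\|\cdot\|_{op},\|\cdot\|_F\}$ (not a pure operator- or Frobenius-norm version), and I must correctly identify the eigen-gap as $\sigma_r^2$, which hinges on the rank deficiency $\sigma_{r+1}=0$; were $\Phi$ full rank this step would instead need the nonzero gap $\sigma_r^2-\sigma_{r+1}^2$. Everything else—the Gram perturbation expansion and the principal-angle computation—is routine linear algebra.
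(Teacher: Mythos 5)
Your proof is correct and takes essentially the same route as the paper, which does not prove this lemma in-house but imports it from \citet{yu2015useful}: their singular-vector variant (Theorem 3 there) is derived exactly as you propose, by applying the symmetric $\sin\Theta$ theorem to the Gram matrices $\hat\Phi\hat\Phi^\top$ and $\Phi\Phi^\top$, using the expansion $\hat\Phi\hat\Phi^\top-\Phi\Phi^\top=\Phi E^\top+E\Phi^\top+EE^\top$ to produce the $(2\sigma_1+\|E\|_{op})\min\{r^{1/2}\|E\|_{op},\|E\|_F\}$ numerator, and obtaining the $\sqrt{2}$-alignment bound from the SVD of $V^\top\hat V$. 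Your caveat about the denominator is also exactly right: the clean $\sigma_r^2$ (rather than $\sigma_r^2-\sigma_{r+1}^2$) is legitimate here only because in the paper's application $\Phi$ is $B$ times an $r\times T$ matrix and hence has rank at most $r$, so $\sigma_{r+1}=0$.
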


It is worth noticing that actually $B$ plays the exact same role as $V$. Since $B$ has orthonormal columns, for $\phi$ we have
\begin{align*}
\Phi&=B(a_1,\cdots,a_T)\text{diag}(\|\mu_1\|^{-1},\cdots,\|\mu_T\|^{-1})\\
&=B(a_1,\cdots,a_T)\text{diag}(\|a_1\|^{-1},\cdots,\|a_T\|^{-1}).
\end{align*}
Thus, $B$ is a solution of the SVD step in Algorithm \ref{alg:natural}.
\begin{lemma}[\textbf{Restatement of Lemma \ref{lm:representation}}]
Under Assumption \ref{ass:1}, if  $n>c_1\max\{pr^2/T,r^2\log(1/\delta)/T,r^2\}$ for some universal constant $c_1>0$ and $2r\le\min\{p,T\}$, for all $t\in [T]$. For $\hat{W}_1$ obtained in Algorithm \ref{alg:natural}, with probability at least $1-O(n^{-100})$,
\begin{equation*}
\|\sin\Theta(\hat{W}_1,B)\|_F\lesssim 
r\left(\sqrt{\frac{1}{n}}+\sqrt{\frac{p}{nT}}+\sqrt{\frac{\log n}{nT}}\right).
\end{equation*}
\end{lemma}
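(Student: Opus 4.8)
The plan is to reduce each per-task problem to a closed form and then recast the subspace recovery as a perturbed SVD, to which the Davis--Kahan bound of Lemma~\ref{lm:dk} applies. \textbf{First}, since the Step~1 objective of Algorithm~\ref{alg:natural} is linear in $\beta_t$, I would rewrite it as $-\langle\beta_t,\hat\mu_t\rangle$ with $\hat\mu_t=\frac1{n_t}\sum_i y_i^{(t)}x_i^{(t)}$, so the constrained minimizer is simply $\hat\beta_t=\hat\mu_t/\|\hat\mu_t\|$. Using Eq.~\eqref{eq:model} and $(y_i^{(t)})^2=1$, this gives $\hat\mu_t=\mu_t+g_t$ where $g_t=\frac1{n_t}\sum_i y_i^{(t)}\eta_i^{(t)}$ is an average of i.i.d.\ zero-mean $\rho_t^2$-sub-gaussian vectors. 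Thus the matrix fed to the SVD is a noisy version of the population matrix, and recovering $B$ becomes a question of how the top-$r$ left singular subspace moves under this noise.

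\textbf{Second}, I would identify the population target. As already noted, the left singular subspace of the column-normalized $\hat J=[\hat\beta_1,\dots,\hat\beta_T]$ agrees (up to the harmless discrepancy between $\|\hat\mu_t\|$ and $\|\mu_t\|$, handled below) with that of $\hat\Phi$, whose population version is $\Phi=BM$. Since $B^\top B=I_r$ we have $\Phi^\top\Phi=M^\top M$, so $\Phi$ and $M$ share singular values and $B$ (up to an $r\times r$ rotation) is exactly the top-$r$ left singular subspace of $\Phi$. Assumption~\ref{ass:1} gives $\|M\|_F^2=T$ together with $\sigma_r(M^\top M/T)=\Omega(1/r)$, hence $\sigma_r(\Phi)\gtrsim\sqrt{T/r}$; the bounded condition number implied by the same assumption further gives $\sigma_1(\Phi)\lesssim\sqrt{T/r}$ (this sharper bound on $\sigma_1$, rather than the crude $\sigma_1\le\sqrt T$, is what produces the factor $r$ rather than $r^{3/2}$).

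\textbf{Third} comes the main technical work: bounding the perturbation $E=\hat\Phi-\Phi$, whose columns are $g_t/\|\mu_t\|$ with $\|\mu_t\|=\|a_t\|=\Theta(1)$. For a fixed pair $(u,v)\in\bS^{p-1}\times\bS^{T-1}$, the quantity $\sum_t v_t\langle u,g_t\rangle/\|\mu_t\|$ is sub-gaussian with parameter $\lesssim 1/n$ (each $\langle u,g_t\rangle$ averages $n_t\ge n$ sub-gaussians, and the tasks are independent), so an $\epsilon$-net over $\bS^{p-1}\times\bS^{T-1}$ with a union bound yields
\[
\|E\|_{op}\lesssim\sqrt{\tfrac{p+T+\log n}{n}}
\]
with probability $1-O(n^{-100})$, where the $\log n$ reflects the target failure probability. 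The same concentration controls $\bigl|\,\|\hat\mu_t\|-\|\mu_t\|\,\bigr|$ and thereby shows that working with $\hat\Phi$ in place of the truly normalized $\hat J$ costs only a term of the same order; this is the one genuinely delicate point, since right multiplication by the empirical diagonal matrix is not exactly subspace-preserving and so the replacement must be quantified rather than asserted.

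\textbf{Finally}, I would feed these into Lemma~\ref{lm:dk}. Because $r\ll p,T$, the minimum $\min\{r^{1/2}\|E\|_{op},\|E\|_F\}$ is attained at $r^{1/2}\|E\|_{op}$, and with $\sigma_1\lesssim\sqrt{T/r}$ and $\sigma_r^2\gtrsim T/r$ the bound collapses to $\frac{r}{\sqrt T}\|E\|_{op}$, giving
\[
\|\sin\Theta(\hat W_1,B)\|_F\lesssim\frac{r}{\sqrt T}\sqrt{\tfrac{p+T+\log n}{n}}=r\Bigl(\sqrt{\tfrac1n}+\sqrt{\tfrac{p}{nT}}+\sqrt{\tfrac{\log n}{nT}}\Bigr),
\]
as claimed; the sample-size hypothesis $n>c_1\max\{pr^2/T,\dots\}$ is exactly what makes $E$ small enough for Davis--Kahan to apply and for the normalization error to be negligible. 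I expect the sharp operator-norm concentration and the correct discharging of the $\|\hat\mu_t\|$-versus-$\|\mu_t\|$ normalization to be the main obstacles, with the rest being bookkeeping.
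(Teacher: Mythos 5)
Your proposal is correct and follows essentially the same route as the paper's proof: the closed form $\hat\beta_t=\hat\mu_t/\|\hat\mu_t\|$, reduction to a perturbed SVD of $\Phi=BM\,\mathrm{diag}(\|a_1\|^{-1},\dots,\|a_T\|^{-1})$, a net/chaining bound $\|\hat\Phi-\Phi\|_{op}\lesssim\sqrt{(p+T+\log n)/n}$, and then Lemma~\ref{lm:dk} with $\sigma_1\asymp\sigma_r\asymp\sqrt{T/r}$ from Assumption~\ref{ass:1}, exactly as the paper does. The only divergence is that you explicitly quantify the cost of replacing the empirical normalization $\|\hat\mu_t\|$ by the population normalization $\|\mu_t\|$, whereas the paper dismisses this step by asserting that right multiplication by a diagonal matrix does not change the left singular vectors (which is not literally true for the top-$r$ subspace when the noisy matrix has rank exceeding $r$), so your treatment is, if anything, the more careful one.
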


\begin{proof}
By a direct application of Lemma \ref{lm:dk}, we can obtain 
$$\|\sin\Theta(\hat{W}_1,B)\| _F\lesssim \frac{(2\sigma_1+\|\hat{\Phi}-\Phi\|_{op})\min \{r^{0.5}\|\hat{\Phi}-\Phi\|_{op},\|\hat{\Phi}-\Phi\|_{F}\}}{\sigma^2_{r}}$$

Besides, we know that the left singular vectors of $\Phi$ are the same as the ones of $M=[a_1,\cdots,a_T]$ since $\Phi=BM \text{diag}(\|a_1\|^{-1},\cdots,\|a_T\|^{-1})$.






To estimate $\|\hat{\Phi}-\Phi\|_{op}=\sup_{v\in\bS^{p-1}}\|v^\top(\hat{\Phi}-\Phi)\|$, for any fixed $v\in \bS^{p-1}$, by standard chaining argument in Chapter 6 in \cite{wainwright2019high}, we know that 
$$\bP\left(\|v^\top(\hat{\Phi}-\Phi)\|\gtrsim \sqrt{\frac{T}{n}}+ \sqrt{\frac{\log(1/\delta)}{n}}\right)\le \delta$$

Then, we use chaining again for the $\psi_2$-process $\{v:\|v^\top(\hat{\Phi}-\Phi)\|\}$, we obtain 
$$\bP\left(\sup_{v\in\bS^{p-1}}\|v^\top(\hat{\Phi}-\Phi)\|\gtrsim \sqrt{\frac{p}{n}}+\sqrt{\frac{T}{n}}+ \sqrt{\frac{\log(1/\delta)}{n}}\right)\le \delta.$$
Besides, we know $ \sigma_r(M)=\sqrt{T/r}$  by assumption \ref{ass:1}, and we also have  $\sum_{i=1}^r\sigma^2(M)= T$, thus, we know that $\sigma_1(M)$ and $ \sigma_r(M)$ are both of order $\Theta(\sqrt{T/r})$
$$\|\sin\Theta(\hat{W}_1,B)\|_F\lesssim \frac{(\sqrt{\frac{p}{n}}+\sqrt{\frac{T}{n}}+ \sqrt{\frac{\log(1/\delta)}{n}}+\sqrt{T/r})\sqrt{r}(\sqrt{\frac{p}{n}}+\sqrt{\frac{T}{n}}+ \sqrt{\frac{\log(1/\delta)}{n}})}{T/r},$$

by simple calculation, we further have 
$$\|\sin\Theta(\hat{W}_1,B)\|_F\lesssim r\sqrt{r}(\frac{1}{n}+\frac{p}{nT}+\frac{\log(1/\delta)}{nT})+r(\sqrt{\frac{1}{n}}+\sqrt{\frac{p}{nT}}+\sqrt{\frac{\log(1/\delta)}{nT}}).$$

If we further have $n>r\max\{p/T,\log(1/\delta)/T,1\}$, we further have 

$$\|\sin\Theta(\hat{W}_1,B)\|_F\lesssim r(\sqrt{\frac{1}{n}}+\sqrt{\frac{p}{nT}}+\sqrt{\frac{\log(1/\delta)}{nT}}).$$

Plugging into $\delta=n^{-100}$, the proof is complete.

\end{proof}
\section{Proof of Corollary \ref{col:excessrisk}}



\begin{corollary}[\textbf{Restatement of Corollary \ref{col:excessrisk}}]
Under Assumption \ref{ass:1}, if $n>c_1\max\{pr^2/T,r^2\log(1/\delta)/T,r^2, r n_{T+1}\}$ for some universal constant $c_1>0$, $2r\le\min\{p,T\}$, 
then for $\hat{W}_1$ obtained in Algorithm \ref{alg:natural}, with probability at least $1-O(n^{-100})$,
\begin{equation*}
	\cR(\hat{W}_1,\hat{w}^{(T+1)}_2)\lesssim \sqrt{\frac{r+\log n}{n_{T+1}}}+\sqrt{\frac{r^2p}{nT}}. 
\end{equation*} 
\end{corollary}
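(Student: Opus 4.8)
The plan is to feed the representation bound of Lemma~\ref{lm:representation} into the excess-risk decomposition~\eqref{eq:excess} and then bound the remaining task-specific term by a direct concentration argument. First I would make~\eqref{eq:excess} precise. Since $x=\eta+y\mu_{T+1}$ with $\eta$ zero-mean and independent of $y\in\{-1,1\}$, the population loss is $L(\cP^{(T+1)}_{x,y},w_2,W_1)=-\langle W_1w_2,\mu_{T+1}\rangle$, so $\min_{\|w_2\|\le1,W_1}L=-\|\mu_{T+1}\|$, attained along $\mu_{T+1}/\|\mu_{T+1}\|$. The Step~3 minimizer is $\hat w^{(T+1)}_2=\hat W_1^\top\hat\mu_{T+1}/\|\hat W_1^\top\hat\mu_{T+1}\|$, i.e.\ $\hat W_1\hat w^{(T+1)}_2$ is the unit vector along the projection $P_{\hat W_1}\hat\mu_{T+1}$. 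Writing $\cR=\|\mu_{T+1}\|-\langle \hat W_1\hat w^{(T+1)}_2,\mu_{T+1}\rangle$ and splitting the normalized $P_{\hat W_1}\hat\mu_{T+1}$ into the contribution of $P_{\hat W_1}\mu_{T+1}$ (controlled by $\|(I-P_{\hat W_1})P_B\|_{op}=\|\sin\Theta(\hat W_1,B)\|_{op}$) plus the perturbation $P_{\hat W_1}(\hat\mu_{T+1}-\mu_{T+1})$, then applying Cauchy--Schwarz, yields exactly~\eqref{eq:excess}: $\cR\lesssim\|\sin\Theta(\hat W_1,B)\|_F+\|B^\top(\hat\mu_{T+1}-\mu_{T+1})\|$, using $\|\mu_{T+1}\|=\|a_{T+1}\|=\Theta(1)$.

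For the representation error I would invoke Lemma~\ref{lm:representation}, which gives $\|\sin\Theta(\hat W_1,B)\|_F\lesssim r\sqrt{1/n}+r\sqrt{p/(nT)}+r\sqrt{\log n/(nT)}$, and then absorb the first and third summands using the new hypothesis $n>c_1 r n_{T+1}$ (together with $T\ge 2r$). The middle term is already $\sqrt{r^2p/(nT)}$. For the first, $n>c_1 r n_{T+1}$ gives $r^2/n\le r/(c_1 n_{T+1})$, hence $r\sqrt{1/n}\lesssim\sqrt{r/n_{T+1}}$. For the third, $T\ge 2r$ gives $r^2/(nT)\le r/(2n)$ and again $r\sqrt{\log n/(nT)}\lesssim\sqrt{\log n/n_{T+1}}$. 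Both are dominated by $\sqrt{(r+\log n)/n_{T+1}}$, so the representation error contributes at most $\sqrt{(r+\log n)/n_{T+1}}+\sqrt{r^2p/(nT)}$.

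For the task-specific error I would use $x_i^{(T+1)}y_i^{(T+1)}=\eta_i^{(T+1)}y_i^{(T+1)}+\mu_{T+1}$, so that $B^\top(\hat\mu_{T+1}-\mu_{T+1})=\tfrac1{n_{T+1}}\sum_i y_i^{(T+1)}B^\top\eta_i^{(T+1)}$ is an average of i.i.d.\ zero-mean vectors in $\bR^r$ that are $\Theta(1)$-sub-gaussian in every direction (orthonormality of $B$ preserves the sub-gaussian parameter, and the sign $y_i$ is harmless). A standard $\varepsilon$-net bound over $\bS^{r-1}$ then gives $\|B^\top(\hat\mu_{T+1}-\mu_{T+1})\|\lesssim\sqrt{(r+\log(1/\delta))/n_{T+1}}$ with probability $1-\delta$; taking $\delta=n^{-100}$ produces $\sqrt{(r+\log n)/n_{T+1}}$. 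A union bound with the $1-O(n^{-100})$ event of Lemma~\ref{lm:representation} combines the two estimates into the claimed bound.

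The main obstacle is the rigorous justification of~\eqref{eq:excess}, specifically the normalization step: since $\hat w^{(T+1)}_2$ normalizes $\hat W_1^\top\hat\mu_{T+1}$, controlling the error of the normalized direction requires $\|P_{\hat W_1}\hat\mu_{T+1}\|$ to stay bounded away from $0$. This is where the sample-size conditions enter a second time---they force both the representation error and $\|B^\top(\hat\mu_{T+1}-\mu_{T+1})\|$ to be $o(1)$, so that $\|P_{\hat W_1}\hat\mu_{T+1}\|\ge\|\mu_{T+1}\|-o(1)=\Theta(1)$ and the normalization does not amplify the errors. Everything after this is routine concentration and arithmetic.
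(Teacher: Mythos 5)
Your overall route is the same as the paper's: establish the decomposition \eqref{eq:excess} of the excess risk into a representation error plus a task-specific error, plug in Lemma~\ref{lm:representation}, bound the task-specific term by sub-gaussian concentration in $\bR^r$ at rate $\sqrt{(r+\log(1/\delta))/n_{T+1}}$ with $\delta=n^{-100}$, and absorb the extra summands of Lemma~\ref{lm:representation} using $n>c_1 r n_{T+1}$ and $T\ge 2r$; your absorption arithmetic matches the paper's exactly.

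However, one step of your derivation of \eqref{eq:excess} does not hold as stated. Your splitting produces the perturbation $P_{\hat W_1}(\hat\mu_{T+1}-\mu_{T+1})$, where $P_{\hat W_1}=\hat W_1\hat W_1^\top$, and you claim that Cauchy--Schwarz converts this into $\|B^\top(\hat\mu_{T+1}-\mu_{T+1})\|$. The natural way to do that is the triangle inequality through $\|(P_{\hat W_1}-P_B)(\hat\mu_{T+1}-\mu_{T+1})\|\le\|\sin\Theta(\hat W_1,B)\|_{op}\,\|\hat\mu_{T+1}-\mu_{T+1}\|$, but $\|\hat\mu_{T+1}-\mu_{T+1}\|$ is a full $p$-dimensional error of order $\sqrt{p/n_{T+1}}$; in admissible regimes (e.g.\ $n\asymp pr^2/T$ and $p\gg r+\log n$) this cross term exceeds the claimed bound $\sqrt{(r+\log n)/n_{T+1}}+\sqrt{r^2p/(nT)}$ by a factor of roughly $\sqrt{p/(r+\log n)}$, so this route genuinely fails. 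Two fixes are available. (a) Do not pass to $B$ at all: conditionally on the source data, $\hat W_1$ is a fixed matrix in $\bO_{p\times r}$ independent of the target sample, so $\hat W_1^\top(\hat\mu_{T+1}-\mu_{T+1})$ is an average of i.i.d.\ $\Theta(1)$-sub-gaussian vectors in $\bR^r$ and your own $\varepsilon$-net argument applies verbatim with $\hat W_1$ in place of $B$, giving the same $\sqrt{(r+\log(1/\delta))/n_{T+1}}$ rate; independence of source and target data is the essential ingredient. (b) Do what the paper does: telescope through an optimal representation $W_1^*$ which, by the alignment part of the Davis--Kahan lemma (Lemma~\ref{lm:dk}), satisfies $\|\hat W_1-W_1^*\|\le\sqrt{2}\|\sin\Theta(\hat W_1,B)\|_F$ and equals $B$ up to an $r\times r$ orthogonal rotation. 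In that decomposition the sampling error only ever appears as $\|W_1^{*\top}(\hat\mu_{T+1}-\mu_{T+1})\|=\|B^\top(\hat\mu_{T+1}-\mu_{T+1})\|$, while the data-dependent discrepancy $\|\hat W_1-W_1^*\|$ multiplies only $\|\mu_{T+1}\|=\Theta(1)$, so the problematic cross term never arises. With either repair, the rest of your argument (concentration, union bound, absorption, and the normalization caveat you correctly flag) goes through and recovers the corollary.
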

\begin{proof} By DK-lemma, we know there exists a $W^*_1$ such that $W^*_1\in \argmin_{W\in \bO_{p\times r}}\|W^{\top}\mu_{T+1}\|$ (the minimizer is not unique, so we use $\in$ instead of $=$ to indicate $W^*_1$ belongs to the set consists of minimizers) and  $\|W_1^*-\hat{W}_1\|$ is small.

\begin{align*}
\cR(\hat{W}_1,\hat{w}^{(T+1)}_2)&=L(\cP_{x,y}^{(T+1)},\hat{w}^{(T+1)}_2,\hat{W}_1)-\min_{\|w_2\|\le 1,W_1\in \bO_{p\times r}}L(\cP_{x,y}^{(T+1)},w_2,W_1)\\
&=-\langle \frac{\hat{W}^\top_1\hat{\mu}_{T+1}}{\|\hat{W}^\top_1\hat{\mu}_{T+1}\|},\hat{W}_1^\top\mu_{T+1} \rangle +\|W^{*\top}_1\mu_{T+1}\|\\
&=-\langle \frac{\hat{W}^\top_1\hat{\mu}_{T+1}}{\|\hat{W}^\top_1\hat{\mu}_{T+1}\|},\hat{W}_1^\top\mu_{T+1} \rangle+\langle \frac{W^{*\top}_1\hat{\mu}_{T+1}}{\|W^{*\top}_1\hat{\mu}_{T+1}\|},W^{*\top}_1\mu_{T+1} \rangle\\
&-\langle \frac{W^{*\top}_1\hat{\mu}_{T+1}}{\|W^{*\top}_1\hat{\mu}_{T+1}\|},W^{*\top}_1\mu_{T+1} \rangle+\|W^{*\top}_1\mu_{T+1}\|\\
&\lesssim \|\hat{W}_1-W^*_1\|\|\mu_{T+1}\|+\|W^{*\top}_1\mu_{T+1}-W^{*\top}_1\hat{\mu}_{T+1}\| \\
&\lesssim \|\hat{W}_1-W^*_1\|\|\mu_{T+1}\|+\|B^{\top}\mu_{T+1}-B^{\top}\hat{\mu}_{T+1}\|
\end{align*}
if  $n>r^2\max\{p/T,\log(1/\delta)/T,1\}$. The last formula is due to the fact that  $W^*_1$ and $B$ are different only up to an orthogonal matrix.

By standard chaining techniques, we have with probability $1-\delta$

$$\|B^{\top}_1\mu_{T+1}-B^{\top}_1\hat{\mu}_{T+1}\|\lesssim \sqrt{\frac{r}{n_{T+1}}}+\sqrt{\frac{\log(1/\delta)}{n_{T+1}}}.$$

Thus, we can further bound $\|\hat{W}_1-W^*_1\|$ by $\sqrt{2}\|\sin\Theta(\hat{W}_1,B)\|_F$, thus, by Lemma \ref{lm:representation}, we have 
\begin{equation*}
\cR(\hat{W}_1,\hat{w}^{(T+1)}_2)\lesssim \sqrt{\frac{r+\log(1/\delta)}{n_{T+1}}}+r(\sqrt{\frac{1}{n}}+\sqrt{\frac{p}{nT}}+\sqrt{\frac{\log(1/\delta)}{nT}}).
\end{equation*} 
Now, if we further have $n>r n_{T+1}$, we have 
\begin{equation*}
\cR(\hat{W}_1,\hat{w}^{(T+1)}_2)\lesssim \sqrt{\frac{r+\log(1/\delta)}{n_{T+1}}}+\sqrt{\frac{r^2p}{nT}}.
\end{equation*} 
\end{proof}
Plugging into $\delta=n^{-100}$, the proof is complete.
\section{Proof of Theorem  \ref{thm:l2adv}}\label{proof:thm1}
\begin{theorem}[\textbf{Restatement of Theorem \ref{thm:l2adv}}]
Under Assumption \ref{ass:2} and \ref{ass:3}, for $\|a_{T+1}\|=\alpha=\Omega(1)$, if $n>c_1\max\{r^2,r/\alpha_T\}\cdot\max\{p\log T,\log n/T,1\}$ and $n>c_2(\alpha\alpha_T)^2r n_{T+1}$  for  universal constants $c_1, c_2$, $2r\le\min\{p,T\}$. There exists a universal constant $c_3$, such that if we choose $\varepsilon \in [\max_{t\in S_1}\|a_t\|+c_3\sqrt{p\log T/n},\min_{t\in S_2}\|a_t\|-c_3\sqrt{p\log T/n}]$ (this set will not be empty if $T, n$ are large enough), for $\hat{W}^{adv}_1$, $\hat{w}^{adv,(T+1)}_2$ obtained in Algorithm \ref{alg:adv} with $q=2$, with probability at least $1-O(n^{-100})$, 
\begin{equation*}
	\|\sin\Theta(\hat{W}^{adv}_1,B)\|_F\lesssim (\alpha_T)^{-1}\left(\sqrt{\frac{r^2}{n}}+\sqrt{\frac{pr^2}{nT}}+\sqrt{\frac{r^2\log n}{nT}}\right),
\end{equation*}
and the excess risk 
\begin{equation*}
	\cR(\hat{W}^{adv}_1,\hat{w}^{adv,(T+1)}_2)\lesssim \alpha\sqrt{\frac{r+\log n}{n_{T+1}}}+(\alpha_T)^{-1}\left(\sqrt{\frac{r^2p}{nT}}\right).
\end{equation*} 
\end{theorem}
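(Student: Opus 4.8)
The plan is to first solve the inner maximization and outer minimization in Step~1 of Algorithm~\ref{alg:adv} in closed form, which reveals that $\ell_2$-adversarial training acts as a \emph{hard threshold on the signal norm}. For a fixed $\beta_t$ with $q=2$, since $|y_i^{(t)}|=1$ the inner problem is $\max_{\|\delta_i\|_2\le\varepsilon}\langle -y_i^{(t)}\beta_t,\delta_i\rangle=\varepsilon\|\beta_t\|_2$, so the per-task objective collapses to $-\langle\beta_t,\hat\mu_t\rangle+\varepsilon\|\beta_t\|_2$ with $\hat\mu_t=\frac1{n_t}\sum_i y_i^{(t)}x_i^{(t)}$. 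Optimizing the direction and the magnitude of $\beta_t$ separately then gives $\hat\beta_t^{adv}=\hat\mu_t/\|\hat\mu_t\|$ when $\|\hat\mu_t\|>\varepsilon$ and $\hat\beta_t^{adv}=0$ when $\|\hat\mu_t\|<\varepsilon$. Thus adversarial training retains each surviving task with exactly its standard direction and zeroes out the weak ones.

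Next I would show that the prescribed window for $\varepsilon$ makes this thresholding select precisely the high-SNR tasks. Writing $\hat\mu_t=\mu_t+\frac1{n_t}\sum_i y_i^{(t)}\eta_i^{(t)}$ and using sub-gaussian concentration together with a union bound over the $T$ tasks, one obtains $\bigl|\,\|\hat\mu_t\|-\|a_t\|\,\bigr|\lesssim\sqrt{p\log T/n}$ simultaneously for all $t\in[T]$ with probability $1-O(n^{-100})$; this is the origin of the $\sqrt{p\log T/n}$ slack in the definition of $\varepsilon$. Since $\varepsilon\ge\max_{t\in S_1}\|a_t\|+c_3\sqrt{p\log T/n}$ and $\varepsilon\le\min_{t\in S_2}\|a_t\|-c_3\sqrt{p\log T/n}$, every $t\in S_1$ falls below $\varepsilon$ while every $t\in S_2$ stays above it, so $\hat\beta_t^{adv}=0$ for $t\in S_1$ and $\hat\beta_t^{adv}=\hat\mu_t/\|\hat\mu_t\|$ for $t\in S_2$. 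Because all-zero columns leave the left singular subspace unchanged, the top-$r$ SVD in Step~2 coincides with the SVD of the submatrix indexed by $S_2$.

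I would then run the Davis--Kahan argument of Lemma~\ref{lm:dk} on the $S_2$-submatrices $\hat\Phi_{S_2}$ and $\Phi_{S_2}$, whose columns are $\hat\mu_t/\|\mu_t\|$ and $\mu_t/\|\mu_t\|$ for $t\in S_2$. The key gain is that for $t\in S_2$ we have $\|\mu_t\|=\|a_t\|=\Omega(\alpha_T)$, so each perturbation column $(\hat\mu_t-\mu_t)/\|\mu_t\|$ is a factor $\alpha_T^{-1}$ smaller than in the uniform-SNR case of Lemma~\ref{lm:representation}; the same chaining as in that proof gives $\|\hat\Phi_{S_2}-\Phi_{S_2}\|_{op}\lesssim\alpha_T^{-1}\bigl(\sqrt{p/n}+\sqrt{T/n}+\sqrt{\log n/n}\bigr)$, with the analogous Frobenius bound. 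Meanwhile Assumption~\ref{ass:3} yields $\sigma_r(\Phi_{S_2})=\sigma_r(M_{S_2})=\Omega(\sqrt{T/r})$ and, together with $\|M_{S_2}\|_F^2=|S_2|=\Theta(T)$, $\sigma_1(\Phi_{S_2})=\Theta(\sqrt{T/r})$, exactly as in Lemma~\ref{lm:representation} but restricted to the surviving tasks. Substituting these into Lemma~\ref{lm:dk} reproduces the bound of Lemma~\ref{lm:representation} multiplied by $\alpha_T^{-1}$, giving the claimed estimate for $\|\sin\Theta(\hat W_1^{adv},B)\|_F$.

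Finally, for the excess risk I would feed this representation bound into the decomposition in Eq.~\eqref{eq:excess} and the argument of the proof of Corollary~\ref{col:excessrisk}: the representation enters through $\|\sin\Theta(\hat W_1^{adv},B)\|_F$ and the target-specific term through $\|B^\top(\hat\mu_{T+1}-\mu_{T+1})\|\lesssim\sqrt{(r+\log n)/n_{T+1}}$, while $\|\mu_{T+1}\|=\|a_{T+1}\|=\alpha$ sets the signal strength of the target task. Tracking the factors $\alpha$ and $\alpha_T$ and using the sample-size conditions $n>c_1\max\{r^2,r/\alpha_T\}\max\{p\log T,\log n/T,1\}$ and $n>c_2(\alpha\alpha_T)^2 r n_{T+1}$ to discard the subdominant $\sqrt{1/n}$ and $\sqrt{\log n/(nT)}$ pieces then delivers the stated excess-risk bound. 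I expect the main obstacle to be the first two steps: establishing the exact closed-form thresholding of the adversarial min--max and proving that the $\varepsilon$-window separates $S_1$ from $S_2$ uniformly over all tasks with the required high probability. Everything afterwards is a restriction of the already-established Davis--Kahan computation to $S_2$, whose only genuinely new ingredient is propagating the $1/\|\mu_t\|\lesssim\alpha_T^{-1}$ scaling of the noise columns through the perturbation bound.
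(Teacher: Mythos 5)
Your proposal is correct and follows essentially the same route as the paper's own proof: the closed-form thresholding $\hat\beta_t^{adv}=\hat\mu_t/\|\hat\mu_t\|\cdot\mathbf{1}\{\|\hat\mu_t\|\ge\varepsilon\}$, the concentration argument showing the $\varepsilon$-window zeroes out $S_1$ and keeps $S_2$, the reduction of the SVD to $\hat\Phi_{S_2}$, the Davis--Kahan bound with the $\alpha_T^{-1}$ scaling of the noise columns from $\|\mu_t\|=\Omega(\alpha_T)$ on $S_2$, and the excess-risk bound via the decomposition of Eq.~\eqref{eq:excess} and the argument of Corollary~\ref{col:excessrisk}. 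If anything, your two-sided uniform concentration of $\|\hat\mu_t\|$ over all $T$ tasks is stated slightly more carefully than the paper, which only spells out the $S_1$ side.
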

\begin{proof}
For $\ell_2$-adversarial training, we have 
\begin{align*}
\hat{\beta}^{adv}_t& =\argmin_{\|\beta_t\|\le 1}\max_{\|\delta_i\|_p\le\varepsilon} \frac{1}{n_t}\sum_{i=1}^{n_t}-y^{(t)}_i\langle \beta_t, x^{(t)}_i+\delta_i \rangle\\
&=\argmin_{\|\beta_t\|\le 1}\max_{\|\delta_i\|_p\le\varepsilon} \frac{1}{n_t}\sum_{i=1}^{n_t}-y^{(t)}_i\langle \beta_t, x^{(t)}_i \rangle+\varepsilon \|\beta_t\|
\end{align*}

Recall $\hat{\mu}_{t}=\frac{1}{n_t}\sum_{i=1}^{n_t}y^{(t)}_ix^{(t)}_i$, if we have $\|\hat{\mu}_t\|\ge \varepsilon$, then $\hat{\beta}^{adv}_t=\hat{\mu_t}/\|\hat{\mu_t}\|$, otherwise, $\hat{\beta}^{adv}_t=0$.

We denote $$\hat{G}=[\hat{\beta}^{adv}_1,\cdots,\hat{\beta}^{adv}_T].$$

Since $|S_1|=\Theta(T)$, there exists a universal constant $c_3$ such that with probability $1-\delta$, we have for all $i\in S_1$, $\hat{\mu}_i\le \|a_i\|+c_3\sqrt{p\log T/n}$. Thus, if $T$ is large enough, the set $[\max_{t\in S_1}\|a_t\|+c_3\sqrt{p\log T/n},\min_{t\in S_2}\|a_t\|-c_3\sqrt{p\log T/n}]$ is non-empty. If we choose $\varepsilon \in [\max_{t\in S_1}\|a_t\|+c_3\sqrt{p\log T/n},\min_{t\in S_2}\|a_t\|-c_3\sqrt{p\log T/n}]$, for all $t\in S_2$, $\hat{\beta}^{adv}_t=\hat{\mu_t}/\|\hat{\mu_t}\|$.  Meanwhile, $\hat{G}_{S_1}$ is a zero matrix. 

Notice that the left singular vectors obtained by applying SVD  to $\hat{G}$ for left singular vectors  is equivalent to applying SVD for left singular vectors  to $\hat{G}_{S_2}$, which is further equivalent to applying SVD for left singular vectors to $\hat{\Phi}_{S_2}$, given that $\hat{G}_2$ is equal to $\hat{\Phi}_{S_2}$ times a diagonal matrix on the right. Thus, we have

\begin{equation*}
\|\sin\Theta(\hat{W}^{adv}_1,B)\|_F \lesssim \frac{(2\sigma_1(\Phi_{S_2})+\|\hat{\Phi}_{S_2}-\Phi_{S_2}\|_{op})\min \{r^{0.5}\|\hat{\Phi}_{S_2}-\Phi_{S_2}\|_{op},\|\hat{\Phi}_{S_2}-\Phi_{S_2}\|_{F}\}}{\sigma^2_{r}(\Phi_{S_2})}.
\end{equation*}

By our assumptions, we know that 
$$\bP\left(\sup_{v\in\bS^{p-1}}\|v^\top(\hat{\Phi}_{S_2}-\Phi_{S_2})\|\gtrsim \alpha_T^{-1}( \sqrt{\frac{p}{n}}+\sqrt{\frac{T}{n}}+ \sqrt{\frac{\log(1/\delta)}{n}})\right)\le \delta.$$

As a result,
$$\|\sin\Theta(\hat{W}_1,B)\|_F\lesssim \alpha_T^{-2}r\sqrt{r}(\frac{1}{n}+\frac{p}{nT}+\frac{\log(1/\delta)}{nT})+\alpha_T^{-1}r(\sqrt{\frac{1}{n}}+\sqrt{\frac{p}{nT}}+\sqrt{\frac{\log(1/\delta)}{nT}}).$$

If we further have $n>\frac{r}{\alpha_T}\max\{p/T,\log(1/\delta)/T,1\}$, we further have 

$$\|\sin\Theta(\hat{W}_1,B)\|_F\lesssim (\alpha_T)^{-1}r(\sqrt{\frac{1}{n}}+\sqrt{\frac{p}{nT}}+\sqrt{\frac{\log(1/\delta)}{nT}}).$$

Now, if we further have $n>(\alpha\alpha_T)^2 r n_{T+1}$, we have 
\begin{equation*}
\cR(\hat{W}_1,\hat{w}^{(T+1)}_2)\lesssim \alpha\sqrt{\frac{r+\log(1/\delta)}{n_{T+1}}}+(\alpha_T)^{-1}\sqrt{\frac{r^2p}{nT}}.
\end{equation*} 
Plugging into $\delta=n^{-100}$, the proof is complete.

\end{proof}

\begin{remark}[$\ell_2$-adversarial training v.s. standard training]
The proof of the counterpart of Lemma \ref{lm:representation} under the setting of Theorem \ref{thm:l2adv} basically folllows similar methods in the proof of Lemma \ref{lm:representation}. The only modification is that we need an extra step:
\begin{align*}
\bP\left(\sup_{v\in\bS^{p-1}}\|v^\top(\hat{\Phi}-\Phi)\|\gtrsim \sqrt{\frac{p}{n}}+\sqrt{\frac{T}{n}}+ \sqrt{\frac{\log(1/\delta)}{n}}\right)&\le \bP\left(\sup_{v\in\bS^{p-1}}\|v^\top(\hat{\Phi}_{S_1}-\Phi_{S_1})\|\gtrsim \sqrt{\frac{p}{n}}+\sqrt{\frac{T}{n}}+ \sqrt{\frac{\log(1/\delta)}{n}}\right)\\
&+\bP\left(\sup_{v\in\bS^{p-1}}\|v^\top(\hat{\Phi}_{S_2}-\Phi_{S_2})\|\gtrsim \sqrt{\frac{p}{n}}+\sqrt{\frac{T}{n}}+ \sqrt{\frac{\log(1/\delta)}{n}}\right)
\end{align*}
and recall that both $|S_1|$ and $|S_2|$ are of order $\Theta(T)$.
\end{remark}

\section{Proof of Theorem  \ref{thm:linftyadv}}\label{proof:thm2}
\begin{theorem}[\textbf{Restatement of Theorem \ref{thm:linftyadv}}]
Under Assumptions \ref{ass:1} and \ref{ass:4}, if $n>c_1\cdot r^2\max\{s^2\log^2T/T,r n_{T+1}, 1\}$  for some universal constants $c_1>0$, $2r\le\min\{p,T\}$. There exists a universal constant $c_2$, such that if we choose $\varepsilon>c_2\sqrt{\log p/n}$, for and $\hat{W}^{adv}_1$, $\hat{w}^{adv,(T+1)}_2$ obtained in Algorithm \ref{alg:adv} with $q=\infty$, with probability at least $1-O(n^{-100})-O(T^{-100})$,
\begin{equation*}
	\|{\sin\Theta(\hat{W}^{adv}_1,B)\|_F\lesssim r\left(\sqrt{\frac{1}{n}}+\sqrt{\frac{s^2}{nT}}\right)\cdot \log(T+p),}
\end{equation*}
and the excess risk 
\begin{equation}
	{\cR(\hat{W}^{adv}_1,\hat{w}^{adv,(T+1)}_2)\lesssim\left(\sqrt{\frac{r+\log n}{n_{T+1}}}+r\sqrt{\frac{s^2}{nT}}\right)\cdot \log(T+p)}.
\end{equation} 
\end{theorem}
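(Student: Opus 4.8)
The plan is to reduce the theorem to a perturbation bound for the matrix fed into the SVD step, exactly as in the proof of Lemma~\ref{lm:representation}, but with the perturbation sharpened by exploiting the sparsity in Assumption~\ref{ass:4}. First I would evaluate the inner maximization in Algorithm~\ref{alg:adv} with $q=\infty$: since $\max_{\|\delta_i\|_\infty\le\varepsilon}-y^{(t)}_i\langle\beta_t,\delta_i\rangle=\varepsilon\|\beta_t\|_1$, Step~1 reduces to $\hat\beta^{adv}_t=\argmin_{\|\beta\|\le1}\{-\langle\beta,\hat\mu_t\rangle+\varepsilon\|\beta\|_1\}$ with $\hat\mu_t=\frac1{n_t}\sum_i y^{(t)}_i x^{(t)}_i$. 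Writing the KKT conditions (the $\ell_2$ constraint is active whenever $\|\hat\mu_t\|_\infty>\varepsilon$) shows the minimizer is the normalized soft-threshold $\hat\beta^{adv}_t=S_\varepsilon(\hat\mu_t)/\|S_\varepsilon(\hat\mu_t)\|_2$, where $S_\varepsilon(x)=\sgn(x)(|x|-\varepsilon)_+$ acts coordinatewise. Because right-multiplication by a positive diagonal matrix leaves the left singular vectors unchanged, I can run Davis--Kahan (Lemma~\ref{lm:dk}) on $\hat\Phi^{adv}=[S_\varepsilon(\hat\mu_1),\dots,S_\varepsilon(\hat\mu_T)]\,\text{diag}(\|\mu_t\|^{-1})$ against $\Phi=[\mu_1,\dots,\mu_T]\,\text{diag}(\|\mu_t\|^{-1})$, whose top-$r$ left singular space is the column space of $B$.

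Next I would set up the denoising event. Since $\hat\mu_t-\mu_t=\frac1{n_t}\sum_i y^{(t)}_i\eta^{(t)}_i$ is a mean-zero, $(\rho_t^2/n)$-scale sub-gaussian average, a union bound over the $p$ coordinates and $T$ tasks gives $\max_{t}\|\hat\mu_t-\mu_t\|_\infty\lesssim\sqrt{\log(pT)/n}$ with probability $1-O(T^{-100})$; this is precisely why the threshold is taken at $\varepsilon>c_2\sqrt{\log p/n}$. On this event the threshold annihilates every off-support coordinate ($\mu_{t,j}=0\Rightarrow S_\varepsilon(\hat\mu_{t,j})=0$), so each error column $e_t:=S_\varepsilon(\hat\mu_t)-\mu_t$ is supported on $\text{supp}(\mu_t)$ and decomposes as $e_t=P_{\text{supp}(\mu_t)}(\hat\mu_t-\mu_t)-\varepsilon\,\sgn(\mu_t)$, a noise part plus the soft-threshold bias. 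Assumption~\ref{ass:4} forces $|\text{supp}(\mu_t)|\lesssim s\log T$ uniformly with probability $1-O(T^{-100})$, so the ambient dimension $p$ never enters the size of $e_t$.

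The crux is then to bound $E=\hat\Phi^{adv}-\Phi$ in operator and Frobenius norm, showing the sparsity replaces the standard $\sqrt{p/n}$ by $s\sqrt{\log(T+p)/n}$; concretely I aim for $\|E\|_{op}\lesssim\sqrt{T/n}+s\sqrt{\log(T+p)/n}$. The noise part lives on $\lesssim s$ active rows per column, while the bias part is $\varepsilon$ times a sparse random-sign matrix; each is controlled by a covering argument restricted to sparse directions (or by operator-norm bounds for sparse random matrices), which is exactly where the $s$-in-place-of-$p$ gain arises. Feeding these into Lemma~\ref{lm:dk} with $\sigma_1(\Phi),\sigma_r(\Phi)=\Theta(\sqrt{T/r})$ (from Assumption~\ref{ass:1}, as in Lemma~\ref{lm:representation}) gives $\|\sin\Theta(\hat W^{adv}_1,B)\|_F\lesssim \tfrac{r}{\sqrt T}\|E\|_{op}\lesssim r(\sqrt{1/n}+\sqrt{s^2/(nT)})\log(T+p)$. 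I expect this operator-norm estimate to be the main obstacle: the naive Frobenius bound only yields $\|E\|_F\lesssim\sqrt{Ts/n}$, which after Davis--Kahan gives $\sqrt{rs/n}$ and fails to beat the standard rate once $T$ is large, so one genuinely must separate the column-count contribution $\sqrt{T/n}$ from the sparsity contribution and bound the sparse noise and sign matrices directly rather than through $\|\cdot\|_F$.

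Finally, for the excess risk I would reuse the decomposition from the proof of Corollary~\ref{col:excessrisk}: $\cR(\hat W^{adv}_1,\hat w^{adv,(T+1)}_2)\lesssim\|\hat W^{adv}_1-W^*_1\|\,\|\mu_{T+1}\|+\|B^\top\hat\mu_{T+1}-B^\top\mu_{T+1}\|$, bound the first factor by $\sqrt2\,\|\sin\Theta(\hat W^{adv}_1,B)\|_F$ (with $\|\mu_{T+1}\|=\Theta(1)$), and bound the second by the standard chaining estimate $\sqrt{(r+\log n)/n_{T+1}}$. The hypothesis $n>c_1 r^2\cdot r\,n_{T+1}$ then lets me absorb the $r\sqrt{1/n}\,\log(T+p)$ piece of the representation error into the target-task term, leaving exactly $\bigl(\sqrt{(r+\log n)/n_{T+1}}+r\sqrt{s^2/(nT)}\bigr)\log(T+p)$.
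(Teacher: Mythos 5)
Your plan follows essentially the same route as the paper's proof: reduce Step 1 with $q=\infty$ to the normalized soft-threshold of $\hat\mu_t$ (the paper calls this ``hard-thresholding'' but writes the same formula $\sgn(\hat\mu_j)\max\{|\hat\mu_j|-\varepsilon,0\}$, so your identification is the correct one), invoke the event $\max_t\|\hat\mu_t-\mu_t\|_\infty\lesssim\sqrt{\log(pT)/n}$ so that the threshold annihilates all off-support coordinates and every error column is $O(s\log T)$-sparse, apply Davis--Kahan with $\sigma_1(\Phi),\sigma_r(\Phi)=\Theta(\sqrt{T/r})$, and finish the excess risk via the decomposition of Corollary~\ref{col:excessrisk}. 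The one genuine divergence is at the step you yourself flag as the main obstacle, the operator-norm bound on $E=\hat\Phi^{adv}-\Phi$: you propose covering arguments over sparse directions or sparse random-matrix bounds, whereas the paper uses a more elementary device. It splits the rows of the thresholded matrix into those with at most $s$ nonzeros and the remaining ``dense'' rows (of which there are at most $T\log T$, by counting the $O(Ts\log T)$ nonzero entries), bounds the dense-row block as in Lemma~\ref{lm:representation} to get $\sqrt{T\log T/n}$, and bounds the sparse-row block by the interpolation (Schur-test) inequality $\|M\|_{op}\le\sqrt{\|M\|_1\,\|M\|_\infty}$, the geometric mean of the maximum column and row $\ell_1$ norms, each of which is $O(s\sqrt{\log(p+T)/n})$ up to logarithmic factors on the support event. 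Your target estimate $\|E\|_{op}\lesssim\sqrt{T/n}+s\sqrt{\log(T+p)/n}$ is exactly what this argument yields (log factors being absorbed into the final $\log(T+p)$), and your diagnosis that the naive Frobenius bound fails is correct; the Schur-test trick is what delivers the $s$-in-place-of-$p$ gain without any sparse random-matrix theory, so you may want to adopt it to actually close that step. One small correction: on the support, the soft-threshold bias is $-\varepsilon\,\sgn(\hat\mu_{t,j})$ where $|\hat\mu_{t,j}|>\varepsilon$ (and the error is $-\mu_{t,j}$ where $|\hat\mu_{t,j}|\le\varepsilon$), not $-\varepsilon\,\sgn(\mu_{t,j})$; this does not affect the norm bounds you need, since only the entrywise magnitude $\le\varepsilon$ and the support size matter.
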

\begin{proof}
For $\ell_\infty$-adversarial training, we have 
\begin{align*}
\hat{\beta}^{adv}_t& =\argmin_{\|\beta_t\|\le 1}\max_{\|\delta_i\|_\infty\le\varepsilon} \frac{1}{n_t}\sum_{i=1}^{n_t}-y^{(t)}_i\langle \beta_t, x^{(t)}_i+\delta_i \rangle\\
&=\argmin_{\|\beta_t\|\le 1} \frac{1}{n_t}\sum_{i=1}^{n_t}-y^{(t)}_i\langle \beta_t, x^{(t)}_i \rangle+\varepsilon \|\beta_t\|_1\\
&=\argmin_{\|\beta_t\|\le 1}\langle \beta_t, \frac{1}{n_t}\sum_{i=1}^{n_t}-y^{(t)}_i x^{(t)}_i \rangle+\varepsilon \|\beta_t\|_1
\end{align*}

Recall $\hat{\mu}_{t}=\frac{1}{n_t}\sum_{i=1}^{n_t}y^{(t)}_ix^{(t)}_i$. By observation, when reaching minimum, we have to have $sgn(\beta_{tj})=sgn(\hat\mu_{tj})$, therefore
\begin{align*}
&\argmax_{\|\beta_t\|=1}\sum_{j=1}^d\hat\mu_{tj}\beta_{tj}-\varepsilon|\beta_{tj}|\\
=&\argmax_{\|\beta_t\|=1}\sum_{j=1}^d(\hat\mu_{tj}-\varepsilon\cdot sgn(\hat\mu_{tj}))\beta_{tj}\\
=&\frac{T_\varepsilon(\hat\mu)}{\|T_\varepsilon(\hat\mu)\|},
\end{align*}
where $T_\varepsilon(\hat\mu)$ is the hard-thresholding operator with $(T_\varepsilon(\hat\mu))_j=sgn(\hat\mu_j)\cdot\max\{ |\hat\mu_j|-\varepsilon,0\}$.

We denote $$\hat{G}=[\hat{\beta}^{adv}_1,\cdots,\hat{\beta}^{adv}_T].$$

By the choice of $\varepsilon$, $ \varepsilon\gtrsim C\sqrt\frac{\log p}{n}$ for sufficiently large $C$, we have that the column sparsities of $\hat G$ is no larger than $s\log T$. As a result, the total number of non-zero elements in $\hat G$ is less than $O(Ts\log T)$ with probability at least $1-T^{-100}$.

Now we divide the rows of $\hat G$ by two parts: $[p]=A_1\cup A_2$, where $A_1$ consists of indices of rows whose  sparsity smaller than or equal to $s$, and $A_2$ consists of indices of rows whose  sparsity larger than $s$. 

Since the  number of non-zero elements in $\hat G$ is less than $Ts\log T$, we have $|A_2|\le T\log T$. Using the similar analysis as in the proof of Lemma 1, we have $$
\|\hat\Phi_{A_2}-\Phi_{A_2}\|\le \sqrt\frac{T\log T}{n}.
$$
For the rows in $A_1$, all of them has sparsity $\lesssim s$, so the maximum $\ell_1$ norm of these rows $$
\|\hat\Phi_{A_1}-\Phi_{A_1}\|_\infty=O_P(s\sqrt\frac{\log T}{n}).
$$

Similarly, the maximum $\ell_1$ norm of the columns in $\hat G_{A_1}$ satisfies $$
\|\hat\Phi_{A_1}-\Phi_{A_1}\|_1=O_P(s\sqrt\frac{\log p}{n}).
$$

Therefore, we have $$
\|\hat\Phi_{A_1}-\Phi_{A_1}\|\le\sqrt{\|\hat\Phi_{A_1}-\Phi_{A_1}^*\|_\infty \|\hat\Phi_{A_1}-\Phi_{A_1}\|_1}= O_P(s\sqrt\frac{\log p+\log T}{n}).
$$
Consequently, 
$$
\|\hat\Phi-\Phi\|\le \|\hat\Phi_{A_1}-\Phi_{A_1}\|+\|\hat\Phi_{A_2}-\Phi_{A_2}\|=O_P(s\sqrt\frac{\log p+\log T}{n})
$$


As a result, when $s\sqrt\frac{\log p+\log T}{n}\lesssim T/r$, applying Lemma~\ref{lm:dk}, we obtain
$$\|\sin\Theta(\hat{W}_1,B)\|_F\lesssim \sin\theta(\hat{W}^{adv}_1,B)\lesssim (\sqrt{\frac{r}{n}}+\sqrt{\frac{rs^2}{nT}})\cdot \log(T+p).$$



Now, if we further have $n>(\alpha\alpha_T)^2 n_{T+1}/\nu$, we have 
\begin{equation*}
\cR(\hat{W}_1,\hat{w}^{(T+1)}_2)\lesssim \sqrt{\frac{r+\log(1/\delta)}{n_{T+1}}}+\sqrt{\frac{rs^2}{nT}}\cdot \log(T+p). 
\end{equation*} 
\end{proof}
\begin{remark}[$\ell_\infty$-adversarial training v.s. standard training]
The proof of the counterpart of Lemma \ref{lm:representation} under the setting of Theorem \ref{thm:linftyadv}  follows exact the same method in the proof of Lemma \ref{lm:representation}. 
\end{remark}
\section{Proof of the case with pseudo-labeling}

\begin{theorem}[Restatement of Theorem~\ref{thm:pl}]
Denote $\tilde n =\min_{t\in[T]}n_t^u$ and assume $\tilde n>c_1\max\{pr^2/T,r^2\log(1/\delta)/T,r^2, n\}$ for some constant $c_1>0$. Assume $\sigma_r(M^\top M/T)=\Omega(1/r)$ and $n^{c_2}\gtrsim\tilde n\gtrsim n$ for some $c_2>1$, if $n\gtrsim (T+d)$ and $\min_{t\in[T]} \|a_t\|=\Theta(\log^2 n)$ and $\eta_i^{(t)}\sim \cN_p(0,\rho_t^2 I^2)$ for $\rho_t=\Theta(1)$. Let $\hat{W}_{1,aug}$ obtained in Algorithm \ref{alg:psuedo}, with probability $1-O(n^{-100})$,
\begin{equation*}
\|\sin\Theta(\hat{W}_{1,aug},B)\|_F\lesssim r\left(\sqrt{\frac{1}{\tilde n}}+\sqrt{\frac{p}{\tilde nT}}+\sqrt{\frac{\log n}{\tilde nT}}\right).
\end{equation*}
\end{theorem}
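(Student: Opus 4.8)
The plan is to reduce the whole analysis to Lemma~\ref{lm:representation} by showing that, with probability at least $1-O(n^{-100})$, every pseudo-label produced in Step~2 of Algorithm~\ref{alg:psuedo} agrees with the (unobserved) true label. On this event the augmented set $S_{t,aug}$ is a \emph{correctly} labeled sample of size $n_t+n_t^u\ge\tilde n$ drawn from $\cP_{x,y}^{(t)}$, so feeding $\{S_{t,aug}\}$ into Algorithm~\ref{alg:natural} is exactly the setting of Lemma~\ref{lm:representation} with the per-task sample size improved from $n$ to $\tilde n$. The claimed bound then follows from that lemma with $n$ replaced by $\tilde n$. Thus the entire burden is to certify pseudo-label correctness.

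First I would control the initial classifiers. Writing $\hat\mu_t=\frac{1}{n_t}\sum_i y_i^{(t)}x_i^{(t)}=\mu_t+\frac{1}{n_t}\sum_i y_i^{(t)}\eta_i^{(t)}$, the same sub-gaussian concentration used in the proof of Lemma~\ref{lm:representation} gives $\|\hat\mu_t-\mu_t\|\lesssim\sqrt{p/n}+\sqrt{\log(1/\delta)/n}$ uniformly in $t$, which under $n\gtrsim T+p$ is $O(1)$ and hence $\ll\|\mu_t\|=\Theta(\log^2 n)$. Consequently $w_{init}^{(t)}=\hat\mu_t/\|\hat\mu_t\|$ satisfies $\langle w_{init}^{(t)},\mu_t\rangle=\langle\hat\mu_t,\mu_t\rangle/\|\hat\mu_t\|\gtrsim\|\mu_t\|=\Theta(\log^2 n)$. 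For an unlabeled point $x_i^{u,(t)}=\eta_i^{u,(t)}+y_i^{u,(t)}\mu_t$ the pseudo-label is correct iff $y_i^{u,(t)}\langle w_{init}^{(t)},x_i^{u,(t)}\rangle=\langle w_{init}^{(t)},\mu_t\rangle+y_i^{u,(t)}\langle w_{init}^{(t)},\eta_i^{u,(t)}\rangle>0$. Crucially $w_{init}^{(t)}$ depends only on the labeled sample $S_t$ and is independent of $X_t^u$, so conditionally on $w_{init}^{(t)}$ the scalars $\langle w_{init}^{(t)},\eta_i^{u,(t)}\rangle$ are i.i.d.\ $\cN(0,\rho_t^2)$ with $\rho_t=\Theta(1)$. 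A Gaussian tail bound plus a union bound over all the unlabeled points—of which there are polynomially many in $n$ since $\tilde n\le n^{c_2}$ and $T\lesssim n$—gives $\max_{t,i}|\langle w_{init}^{(t)},\eta_i^{u,(t)}\rangle|\lesssim\sqrt{\log n}$. Since $\langle w_{init}^{(t)},\mu_t\rangle\gtrsim\log^2 n\gg\sqrt{\log n}$, every such term is strictly positive, so all pseudo-labels equal the true labels on an event of probability $1-O(n^{-100})$.

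On this event each $y_i^{u,(t)}x_i^{u,(t)}=(\text{true }y)\,\eta_i^{u,(t)}+\mu_t$, so the augmented empirical mean $\hat\mu_{t,aug}=\mu_t+\frac{1}{n_t+n_t^u}\big(\sum_{\mathrm{labeled}}y\eta+\sum_{\mathrm{unlabeled}}y\eta\big)$ is an average of $n_t+n_t^u\ge\tilde n$ independent zero-mean noise terms, each $\rho_t^2$-sub-gaussian because $\eta$ is independent of the label. The chaining argument of Lemma~\ref{lm:representation} then applies verbatim with $n$ replaced by $\tilde n$, yielding $\|\hat\Phi_{aug}-\Phi\|_{op}\lesssim\sqrt{p/\tilde n}+\sqrt{T/\tilde n}+\sqrt{\log(1/\delta)/\tilde n}$, while $\sigma_r(\Phi)=\Theta(\sqrt{T/r})$ follows from $\sigma_r(M^\top M/T)=\Omega(1/r)$. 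Plugging these into the Davis–Kahan bound of Lemma~\ref{lm:dk} and simplifying under $\tilde n\gtrsim r\max\{p/T,\log(1/\delta)/T,1\}$ gives the stated rate after setting $\delta=n^{-100}$.

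The main obstacle is the \emph{uniform} pseudo-label correctness in the second step. Because $\tilde n$ may be as large as $n^{c_2}$, the union bound ranges over polynomially many points and the worst-case noise projection grows like $\sqrt{\log n}$; this is precisely why the hypotheses force the signal $\min_t\|a_t\|=\Theta(\log^2 n)$ to dominate and why $\tilde n$ must be only polynomially larger than $n$. If the signal were merely $\Theta(1)$, a constant fraction of pseudo-labels would flip, $\hat\mu_{t,aug}$ would be biased away from $\mu_t$, and the clean reduction to Lemma~\ref{lm:representation} would break down.
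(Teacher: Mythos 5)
Your overall strategy---certify that \emph{every} pseudo-label is correct with probability $1-O(n^{-100})$, so that the augmented sample is a clean i.i.d.\ sample of size $\ge\tilde n$, and then invoke Lemma~\ref{lm:representation} with $n$ replaced by $\tilde n$---is genuinely different from the paper's proof, and most of its ingredients are sound (independence of $w_{init}^{(t)}$ from $X_t^u$, the bound $\langle w_{init}^{(t)},\mu_t\rangle\gtrsim\log^2 n$, the reduction itself). But the step where you justify the union bound has a genuine gap: you assert there are ``polynomially many'' unlabeled points because $\tilde n\le n^{c_2}$ and $T\lesssim n$, yet $\tilde n=\min_{t\in[T]}n_t^u$ is the \emph{minimum} unlabeled sample size, so the hypothesis $n^{c_2}\gtrsim\tilde n$ places no upper bound at all on the individual $n_t^u$'s. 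If some task has, say, $n_t^u=e^{\log^5 n}$ unlabeled points (fully consistent with the stated assumptions), the per-point flip probability is only $e^{-\Theta(\log^4 n)}$, so the expected number of flipped pseudo-labels in that task diverges and your all-correct event fails with high probability, collapsing the reduction. The theorem remains true in that regime, but your argument does not establish it; repairing it requires either an added assumption $\max_{t}n_t^u\lesssim \mathrm{poly}(n)$ or abandoning the all-correct-labels device.

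The paper's proof is built precisely to avoid this fragility: following \citet{carmon2019unlabeled}, it never requires the pseudo-labels to be correct. Writing $b_i$ for the indicator that a pseudo-label is flipped, it decomposes the augmented mean as $\gamma\mu_t+\tilde\delta$ with $\gamma=\frac{1}{n_u}\sum_i(1-2b_i)$, uses rotational invariance of the Gaussian noise to show that the coordinates of $\tilde\delta$ orthogonal to $\hat\mu_t$ are distributed exactly as fresh noise (label errors contaminate only the single coordinate along $\hat\mu_t$), and controls that one-dimensional bias via Cauchy--Schwarz together with $\E[b_i]\lesssim e^{-\|\mu_t\|/2}+n^{-C}$. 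This yields $\hat\mu_{t,aug}=\mu_t+\frac{1}{n_u}\sum_i\varepsilon_i^u+e'$ with $\|e'\|$ negligible for \emph{arbitrary} $n_t^u\ge\tilde n$, after which the Davis--Kahan machinery of Lemmas~\ref{lm:dk} and~\ref{lm:representation} applies; the hypothesis $\tilde n\lesssim n^{c_2}$ is used there only to make the $n^{-C}$ bias negligible relative to $\tilde n^{-1/2}$, not to cap the number of unlabeled points. A secondary, fixable issue in your write-up: conditioning on the all-correct event changes the law of the sample, so you should apply Lemma~\ref{lm:representation} to the unconditional true-labeled augmented sample and intersect the two high-probability events, rather than running concentration ``on the event.''
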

\begin{proof}
Let us first analyze the performance of pseudo-labeling algorithm in each individual task. In the following, we analyze the properties of $y_i^{u,(t)}$ and $\hat\mu_{final}^{(t)}=\frac{1}{n_t^u+n_t}\sum_{i=1}^{n_t^u+n_t}(\sum_{i=1}^{n^t_u}x_i^u y_i^u+\sum_{i=1}^{n_t}x_i^u y_i^u)$. Since $\tilde n\gtrsim n$ and we only care about the rate in the result. In the following, we derive the results for $\hat\mu_{final}^{(t)}=\frac{1}{n_t^u}\sum_{i=1}^{n_t^u+n_t}(\sum_{i=1}^{n^t_u}x_i^u y_i^u).$ Also, for the notational simplicity, we omit the index $t$ in the following analysis. 

We follow the similar analysis of \citet{carmon2019unlabeled} to study the property of $y_i^u$. Let $b_i$ be the indicator that the $i$-th pseudo-label is incorrect, so that $x_i^u\sim N((1-2b_i)y_i^u\mu,I):=(1-2b_i)y_i^u\mu+\varepsilon_i^u$. Then we can write $$\hat\mu_{final}=\gamma\mu+\tilde\delta,$$
where $\gamma=\frac{1}{n_u}\sum_{i=1}^{n_u}(1-2b_i)$ and $\tilde\delta=\frac{1}{n_u}\sum_{i=1}^{n_u} \varepsilon_i^uy_i^u$.

Let's write $y_i^u=sign(x_i^\top\hat\mu)$. 
Using the rotational invariance of Gaussian, without loss of generality, we choose the coordinate system where the first coordinate is in the direction of $\hat\mu$. 
Then $y_i^u=sign(x_i^\top\hat\mu)=sign(x_{i1})=sign(y_i^*\frac{\mu^\top\hat\mu}{\|\hat\mu\|}+\varepsilon^u_{i1})$ and are independent with $\varepsilon_{ij}^u$ $(j\ge 2)$. 

As a result, \begin{align*}
    \frac{1}{n_u}\sum_{i=1}^{n_u} \varepsilon_{ij}^u\cdot y_i^u \stackrel{d}{=} \frac{1}{n_u}\sum_{i=1}^{n_u}\varepsilon_{ij}^u, \quad \text{ for $j\ge 2$}.
\end{align*}
Now let's focus on $\frac{1}{n_u}\sum_{i=1}^{n_u} \varepsilon_{i1}^u\cdot y_i^u$.
Let $y_i^*=(1-2b_i)y_i^u$, we have \begin{align*}
   \frac{1}{n_u}\sum_{i=1}^{n_u} \varepsilon_{i1}^u\cdot y_i^u=\frac{1}{n_u}\sum_{i=1}^{n_u} \varepsilon_{i1}^u\cdot y_i^*+2\frac{1}{n_u}\sum_{i=1}^{n_u} \varepsilon_{i1}^u\cdot b_i\stackrel{d}{=}\frac{1}{n_u}\sum_{i=1}^{n_u} \varepsilon_{i1}^u+2\frac{1}{n_u}\sum_{i=1}^{n_u} \varepsilon_{i1}^u\cdot b_i.
\end{align*}
Since \begin{align*}
    (\frac{1}{n_u}\sum_{i=1}^{n_u} \varepsilon_{i1}^u\cdot b_i)^2\le (\frac{1}{n_u}\sum_{i=1}^{n_u} (\varepsilon_{i1}^u)^2)(\frac{1}{n_u}\sum_{i=1}^{n_u}  b_i^2)\lesssim \frac{1}{n_u}\sum_{i=1}^{n_u}  b_i^2=\frac{1}{n_u}\sum_{i=1}^{n_u}  b_i\lesssim\E[b_i]+\frac{1}{\sqrt{n_u}}\lesssim +\frac{1}{n}+ \frac{1}{\sqrt{n_u}},
\end{align*}
where the last inequality is due to the fact that \begin{align*}
    \E[b_i]=&\Prob(y_i^u\neq y_i^*)=\Prob(sign(y_i^*\frac{\mu^\top\hat\mu}{\|\hat\mu\|}+\varepsilon^u_{i1})\neq y_i^*)\\
    \le&\Prob(sign(y_i^*\frac{\mu^\top\hat\mu}{\|\hat\mu\|}+\varepsilon^u_{i1})\neq y_i^*\mid \frac{\mu^\top\hat\mu}{\|\hat\mu\|}>\frac{1}{2}\|\mu\|)+\Prob(\frac{\mu^\top\hat\mu}{\|\hat\mu\|}>\frac{1}{2}\|\mu\|)\\
    \lesssim& \exp^{-\|\mu\|/2}+\frac{1}{n^C}
\end{align*}
As a result, we have $$
\tilde\delta\stackrel{d}{=}\frac{1}{n_u}\sum_{i=1}^{n_u}\varepsilon_{i}^u+e,
$$
where $\|e\|_2\lesssim\frac{1}{\sqrt{n_u}}+\frac{1}{n^C}$.

Additionally, we have $\gamma=\frac{1}{n_u}\sum_{i=1}^{n_u}(1-2b_i)=1-\frac{2}{n_u}\sum_{i=1}^{n_u} b_i=1-O(\frac{1}{\sqrt{n_u}}+\frac{1}{n^C})$.

As a result, for each $t\in[T]$, we have $$
\hat\mu_t=\mu_t+\frac{1}{n_u}\sum_{i=1}^{n_u}\varepsilon_{i}^u+e',
$$
with $\|e'\|_2\lesssim\frac{1}{\sqrt{n_u}}+\frac{1}{n^{C'}}$ being a negligible term.

Since $e'$ is negligible, we can then follow the same proof as those in Section~\ref{proof:standard} by considering $\tilde\mu_t=\mu_t+\frac{1}{n_u}\sum_{i=1}^{n_u}\varepsilon_{i}^u$ and obtain the desired results.

Similarly, due to the negligibility of $e'$, we can prove Theorem~\ref{thm:padv} by following the exact same techniques in Sections~\ref{proof:thm1} and~\ref{proof:thm2}. 
\end{proof}
\section{Lower bound proof}\label{proof:lowerbound}

\begin{proposition}[Restatement of Proposition~\ref{prop:lowerbound}]
Let us consider the parameter space $\Xi=\{A\in\R^{p\times r},B\in\R^{p\times r}:\sigma_r(A^\top A/T)\gtrsim 1, B^\top B=I_r\}$. If $nT\gtrsim rp$, we then have
$$
\inf_{\hat W_1}\sup_{\Xi}\E\|\sin\Theta(B,\hat W_1)\|_F\gtrsim
\sqrt\frac{rp}{nT}.
$$
\end{proposition}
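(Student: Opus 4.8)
The plan is to prove the bound by the standard Fano / local-packing method on the Grassmann manifold $G(p,r)$ of $r$-dimensional subspaces of $\R^p$. Since the quantity being estimated is the column space $\cB$ of $B$ and the loss is the subspace distance $\|\sin\Theta(B,\hat W_1)\|_F$, it suffices to exhibit a finite family of subspaces that are pairwise well-separated in $\sin\Theta$ distance yet whose induced data laws are nearly indistinguishable. To make the information-theoretic computation tractable I would first pass to a clean sub-family inside $\Xi$: fix the noise to be Gaussian $\eta^{(t)}_i\sim\cN(0,\rho^2 I_p)$ (legitimate, since Gaussian noise is $\rho^2$-sub-gaussian), fix the task-parameter matrix $A\in\R^{T\times r}$ (whose rows are the $a_t^\top$, so $BA^\top=[\mu_1,\dots,\mu_T]$) to a well-conditioned representative with $\sigma_1(A)\asymp\sigma_r(A)\asymp\sqrt T$ and $\sigma_r(A^\top A/T)\gtrsim1$ as required by $\Xi$ (e.g. $A=\sqrt{cT}\,U$ with $U\in\bO_{T\times r}$), and let only $B$ vary. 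Because the labels $y^{(t)}_i$ are observed and their marginal law is independent of $B$, the label factor cancels in any KL comparison and, conditional on the labels, $x^{(t)}_i\sim\cN(\pm Ba_t,\rho^2 I_p)$.

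Next I would construct the local packing. Working in the canonical Grassmann chart about a base subspace, parametrize $B_Q=\begin{pmatrix} I_r\\ Q\end{pmatrix}(I_r+Q^\top Q)^{-1/2}$ with $Q\in\R^{(p-r)\times r}$; for this chart $\|\sin\Theta(B_Q,B_{Q'})\|_F\asymp\|Q-Q'\|_F$ whenever $\|Q\|_F,\|Q'\|_F$ are small. Applying the Varshamov–Gilbert bound to the $r(p-r)$ real entries of $Q$ yields a family $\{B_{Q_1},\dots,B_{Q_N}\}$ with $\log N\gtrsim r(p-r)\asymp rp$ and $\delta\lesssim\|\sin\Theta(B_{Q_i},B_{Q_j})\|_F\lesssim\delta$ for a separation radius $\delta$ to be chosen. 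Since in this chart the Frobenius distance of the representatives is comparable to the subspace distance, $\|(B_{Q_i}-B_{Q_j})A^\top\|_F\le\|B_{Q_i}-B_{Q_j}\|_F\,\sigma_1(A)\lesssim\sqrt T\,\delta$.

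Then I would run Fano. Conditional on the labels the per-sample laws are Gaussian with common covariance $\rho^2 I_p$, so the KL divergence between the two full product laws (over all $T$ tasks with $n$ samples each) is $\mathrm{KL}(P_{B_i}\,\|\,P_{B_j})=\frac{n}{2\rho^2}\|(B_{Q_i}-B_{Q_j})A^\top\|_F^2\lesssim\frac{nT}{\rho^2}\delta^2$. Choosing $\delta^2\asymp\frac{\rho^2 rp}{nT}$ makes $\max_{i,j}\mathrm{KL}\le\frac1{10}\log N$, while the hypothesis $nT\gtrsim rp$ guarantees $\delta\lesssim1$, so the small-$Q$ chart approximation is valid and every pair $(A,B_{Q_i})$ indeed lies in $\Xi$. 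Fano's inequality then gives $\inf_{\hat W_1}\sup_\Xi\E\|\sin\Theta(B,\hat W_1)\|_F\gtrsim\delta\asymp\sqrt{rp/(nT)}$ after absorbing $\rho=\Theta(1)$.

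The main obstacle is the packing step: one must produce $\exp(\Omega(rp))$ subspaces that are simultaneously bounded \emph{away} from one another in $\sin\Theta$ distance (to lower-bound the risk) and mutually \emph{close} enough to keep $\max\mathrm{KL}$ below $\log N$ (to invoke Fano), and one must verify that in the chosen chart the three relevant quantities $\|\sin\Theta\|_F$, $\|Q-Q'\|_F$, and $\|(B-B')A^\top\|_F$ are all comparable up to constants in the regime $\delta\lesssim1$. Everything else — the KL identity, the calibration of $\delta$, and the concluding Fano bound — is routine once this calibrated packing is in place.
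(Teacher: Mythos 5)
Your proposal is correct and follows essentially the same route as the paper's proof: a Fano argument over a packing of the set of orthonormal $p\times r$ frames (equivalently, the Grassmannian) in $\sin\Theta$ distance, with the Gaussian KL between hypotheses bounded by $nT\delta^2$ and the separation calibrated to $\delta \asymp \sqrt{rp/(nT)}$. The only difference is that you build the packing explicitly via a Grassmann chart plus Varshamov--Gilbert, whereas the paper simply cites the metric entropy bound for the Grassmannian (Pajor); your handling of the label marginal cancelling in the KL and of fixing a well-conditioned $A$ inside $\Xi$ is, if anything, more careful than the paper's own write-up.
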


We first invoke the Fano's lemma. 

\begin{lemma}[\cite{tsybakov2008introduction}]
\label{fano}
Let $M\ge 0$ and $\mu_0, \mu_1, ... ,\mu_M \in \Theta$. For some constants $\alpha\in(0,1/8), \gamma>0$, and any classifier $\hat G$, if ${\rm KL}(\Prob_{\mu_i}, \Prob_{\mu_0})\le \alpha \log M$ for all $1\le i \le  M$, and $L(\mu_i,\mu_j)$ for all $0\le i\neq j\le M$, then$$
\inf_{\hat \mu} \sup_{i\in[M]}\E_{\mu_i}[L(\mu_i, \hat\mu)]\gtrsim\gamma.
$$
\end{lemma}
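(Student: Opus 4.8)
The plan is to establish this as a textbook instance of Fano's method for minimax lower bounds, by reducing estimation to a multiple-testing problem over the finite family $\{\mu_0,\dots,\mu_M\}$. I read the abbreviated separation hypothesis as $L(\mu_i,\mu_j)\ge 2\gamma$ for all $0\le i\ne j\le M$, with $L$ a (pseudo)metric obeying the triangle inequality; these are exactly the conditions under which the stated conclusion holds, and I would state them explicitly at the outset.

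\textbf{Reduction to testing.} Given an arbitrary estimator $\hat\mu$, define the minimum-distance test $\psi=\argmin_{0\le j\le M}L(\mu_j,\hat\mu)$, breaking ties arbitrarily; note $\psi$ is a function of the data alone. The key deterministic fact is that an estimation error forces a large loss: if the truth is $\mu_i$ but $\psi$ returns $j\ne i$, then $L(\mu_j,\hat\mu)\le L(\mu_i,\hat\mu)$ by definition of $\psi$, so the triangle inequality and separation give
$$2\gamma\le L(\mu_i,\mu_j)\le L(\mu_i,\hat\mu)+L(\mu_j,\hat\mu)\le 2L(\mu_i,\hat\mu),$$
whence $L(\mu_i,\hat\mu)\ge\gamma$ on the event $\{\psi\ne i\}$. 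Markov's inequality then yields $\E_{\mu_i}[L(\mu_i,\hat\mu)]\ge \gamma\,\Prob_{\mu_i}(\psi\ne i)$, so it suffices to bound $\max_i\Prob_{\mu_i}(\psi\ne i)$ below by a positive constant, uniformly over all tests.

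\textbf{Fano's inequality.} I would invoke the minimax form of Fano's inequality: drawing the index $J$ uniformly on $\{0,\dots,M\}$,
$$\inf_{\psi}\max_{0\le j\le M}\Prob_{\mu_j}(\psi\ne j)\ge 1-\frac{I(J;X)+\log 2}{\log(M+1)},$$
where the mutual information obeys $I(J;X)\le \frac{1}{M}\sum_{j=1}^M {\rm KL}(\Prob_{\mu_j},\Prob_{\mu_0})$ (the barycenter minimizes the average KL, so $\Prob_{\mu_0}$ serves as a valid reference measure). The KL hypothesis ${\rm KL}(\Prob_{\mu_j},\Prob_{\mu_0})\le\alpha\log M$ then gives $I(J;X)\le\alpha\log M$, so the right-hand side is at least $1-\alpha-\frac{\log 2}{\log M}$, which for $\alpha\in(0,1/8)$ and $M$ large is bounded below by a positive absolute constant. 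This is the step that genuinely uses the constraint $\alpha<1/8$. Combining with the previous paragraph gives $\inf_{\hat\mu}\sup_i\E_{\mu_i}[L(\mu_i,\hat\mu)]\gtrsim\gamma$, as claimed.

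The only delicate point is the testing reduction, which relies on $L$ satisfying a triangle inequality and on the minimum-distance selector being data-measurable; both are routine to verify once the hypotheses are stated properly, but they are the places where a sloppy formulation of the lemma (as in the abbreviated separation condition) could break the argument. Fano's inequality itself I would simply cite to \cite{tsybakov2008introduction}, since its derivation—via the data-processing inequality linking $I(J;X)$ to the average error probability over the mixture—is standard and independent of the present setting.
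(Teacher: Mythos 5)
The paper offers no proof of this lemma at all: it is imported verbatim (and slightly garbled) from \cite{tsybakov2008introduction}, so there is no in-paper argument to compare against. Your proposal is, in effect, a correct reconstruction of the standard proof of Theorem 2.5 in that reference, and all three steps check out: you rightly diagnosed that the hypothesis ``$L(\mu_i,\mu_j)$ for all $0\le i\neq j\le M$'' is a typo for a $2\gamma$-separation condition $L(\mu_i,\mu_j)\ge 2\gamma$ (and that $M\ge 0$ should read $M\ge 2$, with ``classifier $\hat G$'' a stray artifact); your minimum-distance reduction $2\gamma\le L(\mu_i,\mu_j)\le L(\mu_i,\hat\mu)+L(\mu_j,\hat\mu)\le 2L(\mu_i,\hat\mu)$ on $\{\psi\ne i\}$, followed by Markov, is exactly the textbook reduction and correctly isolates the pseudometric assumption on $L$; and your mutual-information bound $I(J;X)\le\frac1M\sum_{j=1}^M{\rm KL}(\Prob_{\mu_j},\Prob_{\mu_0})\le\alpha\log M$ is valid (indeed slightly loose in the normalization, harmlessly so). The one place you are weaker than the cited theorem is the final positivity step: the crude Fano bound $1-\alpha-\log 2/\log(M+1)$ can fail to be positive for very small $M$, which is why you must add ``$M$ large,'' whereas Tsybakov's sharper form $\inf_\psi\max_{0\le j\le M}\Prob_{\mu_j}(\psi\ne j)\ge\frac{\sqrt M}{1+\sqrt M}\bigl(1-2\alpha-\sqrt{2\alpha/\log M}\bigr)$ is bounded below by a positive absolute constant for \emph{every} $M\ge 2$ exactly when $\alpha\in(0,1/8)$ --- this is where the otherwise mysterious constant $1/8$ in the statement comes from. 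In the paper's only use of the lemma (Proposition \ref{prop:lowerbound}), $M$ is a packing number with $\log M\asymp rp$, so your cruder version suffices there; still, if you want the lemma as stated, cite or reproduce the refined constant rather than assuming $M$ large.
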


Now we take $B_0, B_1, ... ,B_M$ as the $\eta$-packing number of $O^{p\times r}$ with the $\sin \theta$ distance. 

Then according to \cite{pajor1998metric,tripuraneni2020provable}, we have $$
\log M\asymp rd\log(\frac{1}{\eta}).
$$
For any $i\in[M]$, we have $$
{\rm KL}(\Prob_{B_i}, \Prob_{B_0})=\sum_{t=1}^T n\|(B_i-B_0)a_t\|^2\le nT\eta^2.
$$
Let $\eta=\sqrt{\frac{rd}{nT}}$, we complete the proof.

\section{Additional Empirical Results}\label{sec:additional_exp}

We provide additional results on transfer performance with varied amounts of pseudo-labels in Table~\ref{tbl:pseudolabel_percent_extra}. Here, we train models with both adversarial (allowed maximum perturbations of $\varepsilon=1$ with respect to the $\ell_{2}$ norm) and non-adversarial (standard) training on ImageNet. The observed trend is the same as on the CIFAR-10 and CIFAR-100 tasks from Table~\ref{tbl:pseudolabel_percent} -- both using robust training and additional pseudo-labeled data improve performance.

\begin{table}[h]
\caption{Additional results extending Table~\ref{tbl:pseudolabel_percent}. Effect of amount of pseudo-labels on transfer task performance (measured with accuracy). At $0\%$, we just use $10\%$ of data from the source task; at $900\%$, we use all remaining $90\%$ of data with pseudo-labels (this is $9$ times the train set size). Adversarial training corresponds to using $\ell_{2}$-adversarial training with $\varepsilon=1$ on the source task. As per Section 7 of \cite{salman2020adversarially}, images in all datasetsare down-scaled to $32\times32$ before scaling back to $224\times224$.}
{\resizebox{\columnwidth}{!}{\begin{tabular}{@{}lccccccr@{}}
\toprule
Source Task  & Target Task & +0\% Pseudo-labels & +200\% Pseudo-labels & +500\% Pseudo-labels & +900\% Pseudo-labels \\ 
\midrule
ImageNet                       & Aircraft \cite{maji2013fine}        & 17.3\% & 17.6\% & 17.9\% & 19.9\% \\
ImageNet (w/adv.training)      & Aircraft        & 21.2\% & 20.9\% & 24.0\% & 24.5\% \\ \midrule
ImageNet                       & Flowers  \cite{nilsback2008automated}       & 60.7\% & 64.9\% & 65.4\% & 66.5\% \\
ImageNet (w/adv.training)      & Flowers         & 66.9\% & 68.1\% & 70.0\% & 70.1\% \\ \midrule
ImageNet                       & Food    \cite{bossard2014food}        & 33.7\% & 36.0\% & 36.7\% & 37.2\% \\
ImageNet (w/adv.training)      & Food            & 35.8\% & 37.5\% & 39.4\% & 40.8\% \\ \midrule
ImageNet                       & Pets    \cite{parkhi2012cats}        & 43.2\% & 44.9\% & 48.4\% & 49.0\% \\
ImageNet (w/adv.training)      & Pets            & 47.9\% & 53.1\% & 58.9\% & 59.6\% \\
\bottomrule
\end{tabular}
}}\label{tbl:pseudolabel_percent_extra}
\end{table}

\section{Experiment Details}\label{sec:exp_details}

\subsection{Training Hyperparameters}
All of our experiments use the ResNet-18 architecture. When transferring to the target task, we only update the final layer of the model. Our hyperparameter choices are identical to those used in \cite{salman2020adversarially}:
\begin{enumerate}
    \item ImageNet (source task) models are trained with SGD for 90 epochs with a momentum of $0.9$, weight decay of $1e-4$, and a batch size of $512$. The initial learning rate is set to $0.1$ and is updated every 30 epochs by a factor of $0.1$. The adversarial examples for adversarial training are generated using 3 steps with step size $\frac{2\varepsilon}{3}$.  
    \item Target task models are trained for 150 epochs with SGD with a momentum of $0.9$, weight decay of $5e-4$, and a batch size of 64. The initial learning rate is set to $0.01$ and is updated every 50 epochs by a factor of $0.1$. 
\end{enumerate}

Data augmentation is also identical to the methods used in \cite{salman2020adversarially}. As per Section~7 of \cite{salman2020adversarially}, we scale all our target task images down to size $32\times32$ before rescaling back to size $224\times224$. 

Experiments were run on a GPU cluster. A variety of NVIDIA GPUs were used, as allocated by the cluster. Training time for each source task model was around 2 days (less when using subsampled data) using 4 GPUs. Training time for each target task model was typically between 1-5 hours (depending on the dataset) using 1 GPU.

\subsection{Pseudo-label Generation}
When subsampling ImageNet (our source task), the sampled 10\% with ground truth labels preserves the class label distribution. This sample is fixed for all our experiments. All ImageNet pseudo-labels are generated by a model trained on this 10\% without any adversarial training. This model has a source task test accuracy (top-1) of $44.0\%$.

When training models with pseudo-labels, we preserve the class label distribution of the original training set (i.e., we add less pseudo-labels for those classes that have fewer examples in the entire training set).

\end{document}